\newtheorem{proposition}{Proposition}
\newtheorem{theorem}{Theorem}
\theoremstyle{definition}
\theoremstyle{remark}
\newtheorem{remark}{Remark}
\newtheorem{example}{Example}
\newcommand{\R}{\ensuremath{\mathbb{R}}}
\newcommand{\Z}{\ensuremath{\mathbb{Z}}}
\newcommand{\Sp}{\ensuremath{\mathcal{S}_+}}
\newcommand{\bs}[1]{\ensuremath{\boldsymbol{#1}}}
\newcommand{\Ps}{\ensuremath{\mathcal{P}}}
\newcommand{\conv}{\ensuremath{\text{conv}}}
\newcommand{\clconv}{\ensuremath{\text{cl conv}}}
\newcommand{\be}[1]{\begin{equation}\label{#1}}
\newcommand{\ee}{\end{equation}}
\DeclareMathOperator*{\argmin}{arg\,min}
\newcommand{\diag}{\ensuremath{\text{diag}}}
\newcommand{\sign}{\ensuremath{\text{sign}}}
\newcommand{\lasso}{\texttt{lasso}}
\newcommand{\persp}{\texttt{perspective relaxation}}
\newcommand{\MC}{\texttt{MC}$_+$}
\newcommand{\RO}{\texttt{R1}}
\newcommand*{\sdp}[1]{\ensuremath{\texttt{sdp}_{\texttt{#1}}}}
\newcommand*{\cqp}[1]{\ensuremath{\texttt{cqp}_{#1}}}
\newcommand*{\mio}{\ensuremath{\texttt{big-M}}}
\newcommand*{\rev}[1]{{#1}}
\newcommand*{\mioP}{\ensuremath{\texttt{persp}}}
\begin{document}

\title[Rank-one convexification for sparse regression]{Rank-one convexification for sparse regression}
\author{Alper Atamt\"urk and  Andr\'{e}s G\'{o}mez}

\thanks{ \noindent \hskip -5mm
A. Atamt\"urk: Department of Industrial Engineering \& Operations Research, University of California, Berkeley, CA 94720.
\texttt{atamturk@berkeley.edu}   \\
A. G\'{o}mez: Department of Industrial Engineering, Swanson School of Engineering, University of Pittsburgh, PA 15261. \texttt{agomez@pitt.edu}
}

\ignore{
\begin{frontmatter}
	
\title{Rank-one convexification for sparse regression}
\runtitle{Rank-one convexification for sparse regression}

\begin{aug}
	
	 \author{\fnms{Alper}  \snm{Alper Atamt\"urk\thanksref{t2}}
	 	\corref{}\ead[label=e1]{atamturk@berkeley.edu}} 
 	\and
	\author{\fnms{Andr\'{e}s} \snm{G\'{o}mez}\thanksref{t3}\ead[label=e2]{agomez@pitt.edu}}
	
	 \thankstext{t2}{A. Atamt\"urk: Department of Industrial Engineering \& Operations Research, University of California, Berkeley, CA 94720.
	 	\texttt{atamturk@berkeley.edu}}
 	
 		 \thankstext{t3}{A. G\'{o}mez: Department of Industrial \& Systems Engineering, Viterbi School of Engineering, University of Southern California, CA 90014. \texttt{gomezand@usc.edu}}
	
	
\end{aug}

\end{frontmatter}
}

\begin{abstract}
	Sparse regression models are increasingly prevalent due to their ease of interpretability and superior out-of-sample performance. However, the exact model of sparse regression with an $\ell_0$ constraint restricting the support of the estimators is a challenging (\NP-hard) non-convex optimization problem. 
	In this paper, we derive new strong convex relaxations for sparse regression. These relaxations are based on the ideal (convex-hull) formulations 
	for rank-one quadratic terms with indicator variables. The new relaxations can be formulated as semidefinite optimization problems in an 
	extended space and are stronger and more general than the state-of-the-art formulations, including the perspective reformulation and  formulations with the reverse Huber penalty
	and the minimax concave penalty functions. Furthermore, the proposed rank-one strengthening can be interpreted as a 
	\textit{non-separable, non-convex, unbiased} sparsity-inducing regularizer, 
	which dynamically adjusts its penalty according to the shape of the error function without inducing bias for the sparse solutions.
	In our computational experiments with benchmark datasets, the proposed conic formulations are solved within seconds and result in near-optimal solutions (with 0.4\% optimality gap) for non-convex $\ell_0$-problems. Moreover, the resulting estimators also outperform alternative convex approaches from a statistical perspective, achieving high prediction accuracy and good interpretability.
	\vskip 3mm
	\noindent
	\textbf{Keywords} Sparse regression, best subset selection, lasso, elastic net, conic formulations, non-convex regularization
\end{abstract}

	\maketitle

\begin{center}
	January 2019; \rev{October 2020}
\end{center}

\BCOLReport{19.01}

\pagebreak

\section{Introduction}\label{sec:intro}

Given a model matrix $\bs{X}=[\bs{x_1},\ldots,\bs{x_p}]\in \R^{n\times p}$ of explanatory variables, a vector $\bs{y}\in \R^n$ of response variables, regularization parameters $\lambda,\mu\geq 0$ and a desired sparsity $k\in \Z_+$, we consider the least squares regression problem
\begin{equation} \label{eq:bestSubsetSelection}
\min_{\bs{\beta} \in \R^p}\;\|\bs{y}-\bs{X\beta}\|_2^2+\lambda \|\bs{\beta}\|_2^2 +\mu \|\bs{\beta}\|_1 \text{ s.t. }\|\bs{\beta}\|_0\leq k,
\end{equation}
where $\|\bs{\beta}\|_0$ denotes cardinality of the support of $\bs{\beta}$. Problem \eqref{eq:bestSubsetSelection} encompasses a broad range of the regression models. It includes as special cases: \texttt{ridge} regression \cite{hoerl1970ridge}, when $\lambda>0$, $\mu=0$ and $k\geq p$; \texttt{lasso} \cite{tibshirani1996regression}, when $\lambda=0$, $\mu\geq 0$ and $k\geq p$; \texttt{elastic net} \cite{zou2005regularization} when $\lambda,\mu>0$ and $k\geq p$; \texttt{best subset selection} \cite{miller2002subset}, when $\lambda=\mu=0$ and $k<p$. Additionally, \citet{bertsimas2017sparse} propose to solve \eqref{eq:bestSubsetSelection} with $\lambda>0$, $\mu=0$ and $k<p$ for high-dimensional regression problems, while \citet{mazumder2017subset} study \eqref{eq:bestSubsetSelection} with $\lambda=0$, $\mu>0$ and $k<p$ for problems with low Signal-to-Noise Ratios (SNR). The results in this paper cover all versions of \eqref{eq:bestSubsetSelection} with $k < p$; moreover, they
can be extended to problems with non-separable regularizations of the form $\lambda \|\bs{A\beta}\|_2^2 +\mu \|\bs{C\beta}\|_1$, resulting in sparse variants of the \texttt{fused lasso} \cite{padilla2017dfs,tibshirani2005sparsity}, \texttt{generalized lasso} \cite{lin2014alternating,tibshirani2011solution} and \texttt{smooth lasso} \cite{hebiri2011smooth}, among others. 

\subsection*{Regularization techniques} The motivation and benefits of the regularization are well-documented in the literature. \citet{hastie2001elements} coined the bet on sparsity principle, i.e., using an inference procedure that performs well in sparse problems since no procedure can do well in dense problems.  \texttt{Best subset selection} with $k<p$ and $\lambda = \mu = 0$ is the direct approach to enforce sparsity without incurring bias. In contrast, \text{ridge} regression with $\lambda>0$ (Tikhonov regularization) is known to induce shrinkage and bias, which can be desirable, for example, when $\bs{X}$ is not orthogonal, but it does not result in sparsity. On the other hand, \texttt{lasso}, the $\ell_1$ regularization  with $\mu>0$ simultaneously causes shrinkage and induces sparsity, but the inability to separately control for shrinkage and sparsity may result in subpar performance in some cases \cite{miller2002subset,zhang2008sparsity,zhang2012general,zhang2014lower,zhao2006model,zou2006adaptive}. 
Moreover, achieving a target sparsity level $k$ with \texttt{lasso} 
requires significant experimentation with the penalty parameter $\mu$ \cite{chichignoud2016practical}. 
When $k \ge p$, the cardinality constraint on $\ell_0$ is redundant and \eqref{eq:bestSubsetSelection} reduces to a convex optimization problem and can be solved easily. On the other hand, when $k<p$, problem \eqref{eq:bestSubsetSelection} is non-convex and \NP-hard \cite{natarajan1995sparse}, thus finding an optimal solution may require excessive computational effort and methods to solve it approximately are used instead \cite{huang2018constructive,nevo2017identifying}. Due to the perceived difficulties of tackling the non-convex $\ell_0$ constraint in \eqref{eq:bestSubsetSelection}, \texttt{lasso}-type simpler approaches are still preferred for inference problems with sparsity \cite{hastie2015statistical}. 

Nonetheless, there has been a substantial effort to develop sparsity-inducing methodologies that do not incur as much shrinkage and bias as \texttt{lasso} does. The resulting techniques often result in optimization problems of the form \begin{equation}\label{eq:regularization}\min_{\bs{\beta} \in \R^p}\;\|\bs{y}-\bs{X\beta}\|_2^2+\sum_{i=1}^p\rho_i(\beta_i)\end{equation}
where $\rho_i:\R\to\R$ are non-convex regularization functions. Examples of such regularization functions include $\ell_q$ penalties with $0<q<1$ \cite{frank1993statistical} and SCAD \cite{fan2001variable}. Although optimal solutions of \eqref{eq:regularization} with non-convex regularizations may substantially improve upon the estimators obtained by \texttt{lasso}, solving \eqref{eq:regularization} to optimality is still a difficult task \cite{hunter2005variable,mazumder2011sparsenet,zou2008one}, and suboptimal solutions may not benefit from the improved statistical properties. To address such difficulties, \citet{zhang2010nearly} propose the \texttt{minimax concave penalty} (\MC), a class of sparsity-inducing penalty functions where the non-convexity of $\rho$ is offset by the convexity of $\|\bs{y}-\bs{X\beta}\|_2^2$ for sufficiently sparse solutions, so that \eqref{eq:regularization} remains convex -- \citet{zhang2010nearly} refer to this property as sparse convexity. Thus, in the ideal scenario (and with proper tuning of the parameter controlling the concavity of $\rho$), the \MC\ penalty is able to retain the sparsity and unbiasedness of \texttt{best subset selection} while preserving convexity, resulting in the best of both worlds. However, due to the \emph{separable} form of the regularization term, the effectiveness of \MC\ greatly depends on the diagonal dominance of the matrix $\bs{X^\top X}$ (this statement will be made more precise in \S\ref{sec:regularization}), and may result in poor performance when the diagonal dominance is low. 

Unfortunately, in many practical applications, the matrix $\bs{X^\top X}$ has low eigenvalues and is not diagonally dominant at all. To illustrate, Table~\ref{tab:diagonalDominance} presents the diagonal dominance of five datasets from the UCI Machine Learning Repository \citep{Dua:2017} used in \cite{gomez2018mixed,miyashiro2015mixed}, as well as the \texttt{diabetes} dataset with all second interactions used in \cite{bertsimas2016best,efron2004least}. The diagonal dominance of a positive semidefinite matrix $\bs{A}$ is computed as 
$$\texttt{dd}(\bs{A}):=(1/\text{tr}(\bs{A}))\max_{\bs{d}\in \R_+^{p}}\bs{e^\top d} \text{ s.t. }\bs{A-\diag(\bs{d})}\succeq 0,
$$
where $\bs{e}$ is the $p$-dimensional vector of ones, $\diag(\bs{d})$ is the diagonal matrix such that $\diag(\bs{d})_{ii}=d_i$ and $\text{tr}(\bs{A})$ denotes the trace of $\bs{A}$. Accordingly, the diagonal dominance is the trace of the largest diagonal matrix that can be extracted from $\bs{A}$ without violating positive semidefiniteness, divided by the trace of $\bs{A}$. Observe in Table~\ref{tab:diagonalDominance} that the diagonal dominance of $\bs{X^\top X}$ is very low or even $0\%$, and \MC\ struggles for these datasets as we demonstrate in \S\ref{sec:computations}. 

	\begin{table}
		\caption{Diagonal dominance of $\bs{X^\top X}$ for benchmark datasets.}  
		\label{tab:diagonalDominance}
		\scalebox{1.0}{
		\begin{tabular}{c | c c | c }
			\hline \hline
			\multirow{2}{*}{\texttt{dataset}} & \multirow{2}{*}{$p$} & \multirow{2}{*}{$n$} &\multirow{2}{*}{\texttt{dd}$\times 100\%$}\\
			&&&\\
			\hline   
			\texttt{housing}&13&506& 26.7\% \\
			\texttt{servo}& 19& 167&0.0\%\\
			\texttt{auto MPG}& 25 & 	392 & 1.5\% \\
			\texttt{solar flare}& 26 & 	1,066& 8.8\%\\
			\texttt{breast cancer}& 37 & 196& 3.6\% \\
			\texttt{diabetes}& 	64 & 442 & 0.0\%\\
			\texttt{\rev{crime}}& \rev{100}	  & \rev{1993}  & \rev{13.5 \%}\\
			\hline \hline
		\end{tabular}}
	\end{table}

\subsection*{Mixed-integer optimization formulations} An alternative to utilizing non-convex regularizations is to leverage the recent advances in mixed-integer optimization (MIO) to tackle \eqref{eq:bestSubsetSelection} exactly \cite{bertsimas2015or,bertsimas2016best,cozad2014learning}. 
By introducing indicator variables $\bs{z}\in \{0,1\}^p$, where $z_i=\mathbbm{1}_{\beta_i\neq 0}$, problem \eqref{eq:bestSubsetSelection} can be reformulated as
\begin{subequations}\label{eq:MIO}
\begin{align}
\bs{y}^\top\bs{y}+\min_{\rev{\bs{\beta},\bs{z},\bs{u}}}\;&-2\bs{y}^\top\bs{X}\bs{\beta}+\bs{\beta^\top}\left(\bs{X^\top X}+\lambda \bs{I}\right)\bs{\beta}+\mu\sum_{i=1}^p u_i\\
\text{ s.t.}\;&\sum_{i=1}^pz_i\leq k\\
	& \beta_i\leq u_i,\; -\beta_i\leq u_i \quad i=1,\ldots,p\\
&\beta_i(1-z_i)=0\quad\quad \; \;\; i=1,\ldots,p\label{eq:MIO_complementary}\\
&\bs{\beta}\in \R^p,\; \bs{z}\in \{0,1\}^p,\; \bs{u}\in \R_+^p.
\end{align}
\end{subequations}
The non-convexity of \eqref{eq:bestSubsetSelection} is captured by the complementary constraints \eqref{eq:MIO_complementary} and the integrality constraints $\bs{z}\in \{0,1\}^p$. In fact, one of the main challenges for solving \eqref{eq:MIO} is handling constraints \eqref{eq:MIO_complementary}. A standard approach in the MIO literature is to use the so-called big-$M$ constraints and replace \eqref{eq:MIO_complementary} with 
\begin{equation}
\label{eq:bigM}
-Mz_i\leq \beta_i\leq Mz_i
\end{equation}
for a sufficiently large number $M$ to bound the variables $\beta_i$. However, these so-called big-$M$ constraints \eqref{eq:bigM} are poor approximations of constraints \eqref{eq:MIO_complementary}, \emph{especially in the case of regression problems where no natural big-$M$ value is available}. \citet{bertsimas2016best} propose approaches to compute provable big-$M$ values, but such values often result in prohibitively large computational times even in problems with a few dozens variables (or, even worse, may lead to numerical instabilities and cause convex solvers to crash). Alternatively, heuristic values for the big-$M$ values can be estimated, e.g., setting $M=\tau \|\hat{\bs{\beta}}\|_\infty$ where $\tau\in \R_+$ and $\hat{\bs{\beta}}$ is a feasible solution of \eqref{eq:bestSubsetSelection} found via a heuristic\footnote{This method with $\tau=2$ was used in the computations in \cite{bertsimas2016best}.}. While using such heuristic values yield reasonable performance for small enough values of $\tau$, it may eliminate optimal solutions.

Branch-and-bound algorithms for MIO leverage strong convex relaxations of problems to prune the search space and reduce the number of sub-problems to be enumerated (and, in some cases, eliminate the need for enumeration altogether). Thus, a critical step to speed-up the solution times for \eqref{eq:MIO} is to derive convex relaxations that approximate the non-convex problem well \cite{AN:conicmir:ipco}. Such strong relaxations can also be used directly to find good estimators for the inference problems (without branch-and-bound); in fact, it is well-known than the natural convex relaxation of \eqref{eq:MIO} with $\lambda=\mu=0$ and big-$M$ constraints is precisely \texttt{lasso}, see \cite{dong2015regularization} for example. Therefore, sparsity-inducing techniques that more accurately capture the properties of the non-convex constraint $\|\bs{\beta}\|_0\leq k$ can be found by deriving tighter convex relaxations of~\eqref{eq:bestSubsetSelection}.
\citet{pilanci2015sparse} exploit the Tikhonov regularization term and convex analysis to construct an improved convex relaxation using the \texttt{reverse Huber penalty}. In a similar vein, \citet{bertsimas2017sparse} leverage the Tikhonov regularization and duality to propose an efficient algorithm for high-dimensional sparse regression. 

\subsection*{The perspective relaxation} \label{sec:perspective}Problem \eqref{eq:MIO} is a mixed-integer convex quadratic optimization problem with indicator variables, a class of problems which has received a fair amount of attention in the optimization literature. In particular, the \persp\ \cite{akturk2009strong,frangioni2006perspective,gunluk2010perspective} is, by now, a standard technique that can be used to substantially strengthen the convex relaxations by exploiting \emph{separable} quadratic terms. Specifically, consider the mixed-integer epigraph of a one-dimensional quadratic function with an indicator constraint, 
$$Q_1=\left\{z\in \{0,1\},\beta\in \R,t\in \R_+: \beta_i^2\leq t,\; \beta_i(1-z_i)=0\right\} \cdot$$ 
The convex hull of $Q_1$ is obtained by relaxing the integrality constraint to bound constraints and using the closure of the perspective function\footnote{We use the convention that $\frac{\beta_i^2}{z_i}=0$ when $\beta_i=z_i=0$ and $\frac{\beta_i^2}{z_i}=\infty$ if $z_i=0$ and $\beta_i\neq 0$.} of $\beta_i^2$, expressed as a rotated cone constraint: 
$$\conv(Q_1)=\left\{z\in [0,1],\beta\in \R,t\in \R_+: \frac{\beta_i^2}{z_i}\leq t\right\} \cdot$$

\citet{xie2018ccp} apply the \persp\ to the separable quadratic regularization term $\lambda\|\bs{\beta}\|_2^2$, i.e., reformulate \eqref{eq:MIO} as 
\begin{subequations}\label{eq:MIOPersp}
	\begin{align}
	\bs{y}^\top\bs{y}+\min_{\rev{\bs{\beta},\bs{z},\bs{u}}}\;&-2\bs{y}^\top\bs{X}\bs{\beta}+\bs{\beta^\top}\left(\bs{X^\top X}\right)\bs{\beta}+\lambda\sum_{i=1}^p\frac{\beta_i^2}{z_i}+\mu\sum_{i=1}^p u_i\\
	\text{ s.t.}\;&\sum_{i=1}^pz_i\leq k\\
	& \beta_i\leq u_i,\; -\beta_i\leq u_i \quad i=1,\ldots,p\\
	&\bs{\beta}\in \R^p,\; \bs{z}\in \{0,1\}^p,\; \bs{u}\in \R_+^p.
	\end{align}
\end{subequations}
Moreover, they show that the continuous relaxation of \eqref{eq:MIOPersp} is equivalent to the continuous relaxation of the formulation used by \citet{bertsimas2017sparse}. \citet{dong2015regularization} also study the \persp\ in the context of regression: first, they show that using the \texttt{reverse Huber penalty} \cite{pilanci2015sparse} is, in fact, equivalent to just solving the convex relaxation of \eqref{eq:MIOPersp} --- thus the relaxations of \cite{bertsimas2017sparse,pilanci2015sparse,xie2018ccp} all coincide; second, they propose to use an \emph{optimal} \persp, i.e., by applying the perspective relaxation to a separable quadratic function $\bs{\beta^\top D\beta}$, where $\bs{D}$ is a nonnegative diagonal matrix such that $\bs{X^{\top}X}+\lambda \bs{I}-\bs{D}\succeq 0$; finally, they show that solving this stronger convex relaxation of the optimal \persp\ is, in fact, equivalent to using the \MC\ penalty \cite{zhang2010nearly}. 

\rev{The perspective relaxation is now a state-of-the-art method to convexify problems with separable terms and indicators variables. However, there are relatively few convexification techniques for problems without separable terms \cite{frangioni2020decompositions,gomez2018strong,han20202x2,jeon2017quadratic}. In fact, among the previously discussed methods for sparse regression,} the optimal \persp\ of \citet{dong2015regularization} is the only one that does not explicitly require the use of the Tikhonov regularization $\lambda\|\bs{\beta}\|_2^2$. Nonetheless, as the authors point out, if $\lambda=0$ then the method is effective only when the matrix $\bs{X^\top X}$ is sufficiently diagonally dominant, which, as illustrated in Table~\ref{tab:diagonalDominance}, is not necessarily the case in practice. As a consequence, \persp\ techniques 
may be insufficient to tackle problems when large shrinkage is undesirable and, hence, $\lambda$ is small.

\subsection*{Our contributions} In this paper we derive stronger convex relaxations of \eqref{eq:MIO} than the optimal \persp. These relaxations are obtained from the study of ideal (convex-hull) formulations of the mixed-integer epigraphs of \emph{non-separable rank-one quadratic functions with indicators}. Since the \persp\ corresponds to the ideal formulation of a \emph{one-dimensional} rank-one quadratic function, the proposed relaxations generalize and strengthen the existing results. In particular, they \emph{dominate} \persp\ approaches for all values of the regularization parameter $\lambda$ and, critically, are able to achieve high-quality approximations of \eqref{eq:bestSubsetSelection} even in low diagonal dominance settings with $\lambda=0$. Alternatively, our results can also be interpreted as a new \emph{non-separable}, \emph{non-convex}, \emph{unbiased} regularization penalty $\rho_{\RO}(\bs{\beta})$ which: \textit{(i)} imposes larger penalties than the separable minimax concave penalty \cite{zhang2010nearly} $\rho_{\text{\MC}}(\bs{\beta})$  to dense estimators, thus achieving better sparsity-inducing properties; and \textit{(ii)} the nonconvexity of the penalty function is offset by the convexity of the term $\|\bs{y}-\bs{X\beta}\|_2^2$, and the resulting continuous problem can be solved to global optimality using convex optimization tools. In fact, they can be formulated as semidefinite optimization and, in certain special cases, as conic quadratic optimization.

To illustrate the regularization point of view for the proposed relaxations, consider a two-predictor regression problem in Lagrangean form: 
\begin{equation} 
\min_{\bs{\beta} \in \R^2}\;\|\bs{y}-\bs{X\beta}\|_2^2+\lambda \|\bs{\beta}\|_2^2 +\mu \|\bs{\beta}\|_1 +\kappa\|\bs{\beta}\|_0,
\end{equation}
where
$\bs{X^{\top}X}=\begin{pmatrix}
1+\delta & 1\\ 1 & 1+\delta
\end{pmatrix}$ and $\delta\geq 0$ is a parameter controlling the diagonal dominance.
 Figure~\ref{fig:regularization} depicts the graphs of well-known regularizations including \lasso\ ($\lambda=\kappa=0$, $\mu=1$), \texttt{ridge} ($\mu=\kappa=0$, $\lambda=1$), \texttt{elastic net} ($\kappa=0$, $\lambda=\mu=0.5$), the \MC penalty for different values of $\delta$ and the proposed rank-one \RO\ regularization.
 The graphs of \MC\ and \RO\ are obtained by setting $\lambda=\mu=0$ and $\kappa=1$, and using the appropriate convex strengthening, see \S\ref{sec:regularization} for details. Observe that the \RO\ regularization results in larger penalties than \MC for all values of $\delta$, and the improvement increases  as $\delta\to 0$. In addition, Figure~\ref{fig:constrained} shows the effect of using the \texttt{lasso} constraint $\|\bs{\beta}\|_1\leq k$, the $\text{\MC}$ constraint $\rho_{\text{\MC}}(\bs{\beta})\leq k$, and the rank-one constraint $\rho_{\text{\RO}}(\bs{\beta})\leq k$ in a two-dimensional problem to achieve sparse solutions satisfying $\|\bs{\beta}\|_0\leq 1$. Specifically, let 
 $$\varepsilon^*=\min_{\|\bs{\beta}\|_0\leq 1}\|\bs{y}-\bs{X\beta}\|_2^2$$
 be the minimum residual error of a sparse solution of the least squares problem. Figure~\ref{fig:constrained} shows in gray the (possibly dense) points satisfying $\|\bs{y}-\bs{X\beta}\|_2^2\leq \varepsilon^*$, and it shows in color the set of feasible points satisfying $\rho(\bs{\beta})\leq k$, where $\rho$ is a given regularization and $k$ is chosen so that the feasible region (color) intersects the level sets (gray).
 We see that neither \lasso\ nor \MC\ is able to exactly recover an optimal sparse solution for any diagonal dominance parameter $\delta$,
 despite significant shrinkage ($k < 1$).  In contrast, the rank-one constraint
 $\rho_{\text{\RO}}(\bs{\beta})\leq k$ adapts to the curvature of the error function $\|\bs{y}-\bs{X\beta}\|_2^2$ to induce higher sparsity: in particular, the ``natural" constraint $\rho_{\text{\RO}}(\bs{\beta})\leq 1$, with the target sparsity $k=1$, results in exact recovery without shrinkage in all cases.
 
\begin{figure}[!hp]
	\centering
\subfloat[\texttt{ridge}]{\includegraphics[width=0.33\textwidth,trim={13cm 5.5cm 13cm 5.5cm},clip]{./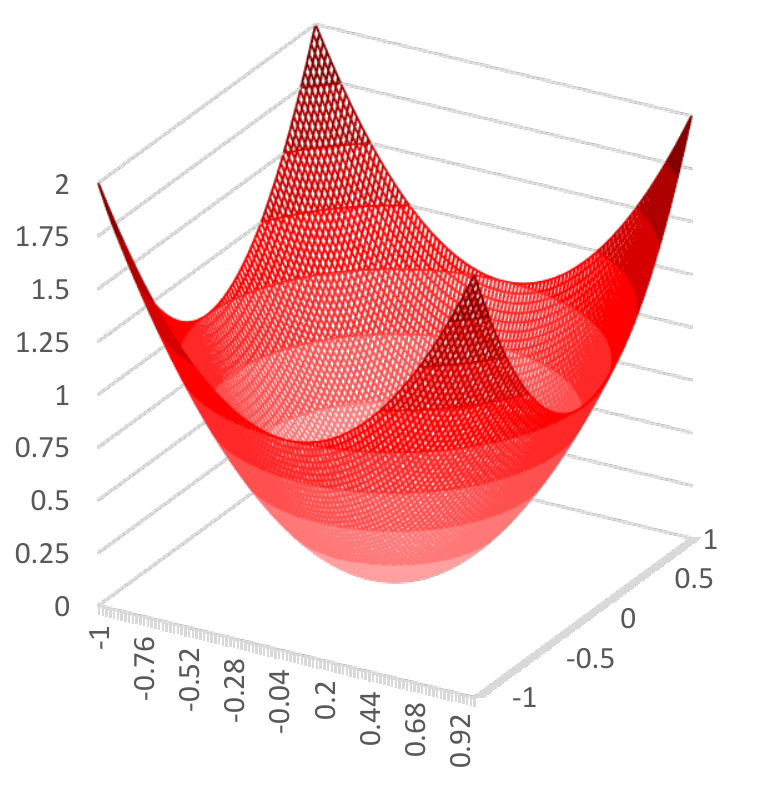}}\hfill\subfloat[\texttt{elastic net}]{\includegraphics[width=0.33\textwidth,trim={13cm 5.5cm 13cm 5.5cm},clip]{./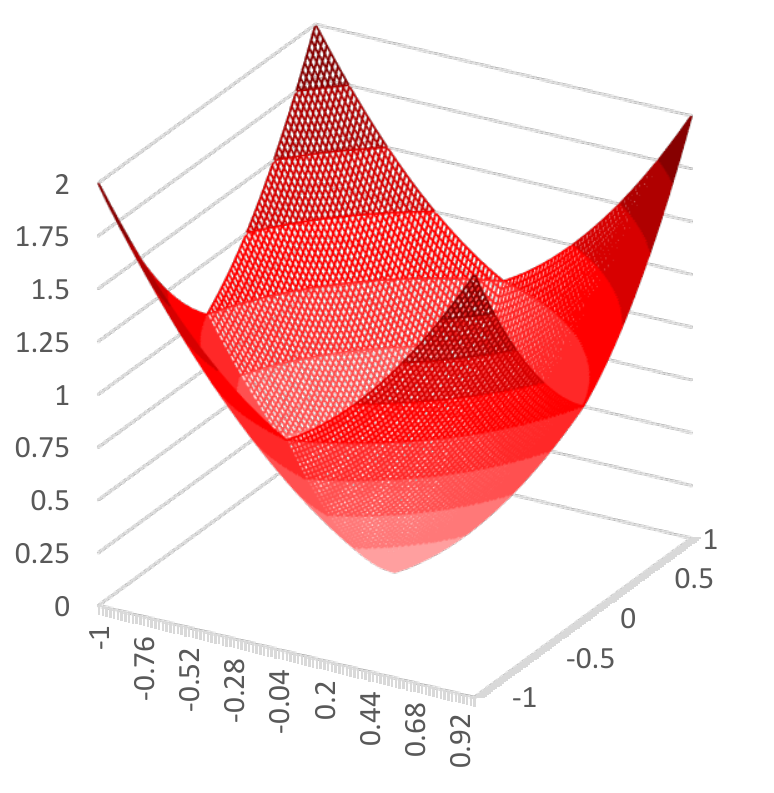}}\hfill	\subfloat[\texttt{lasso}]{\includegraphics[width=0.33\textwidth,trim={13cm 5.5cm 13cm 5.5cm},clip]{./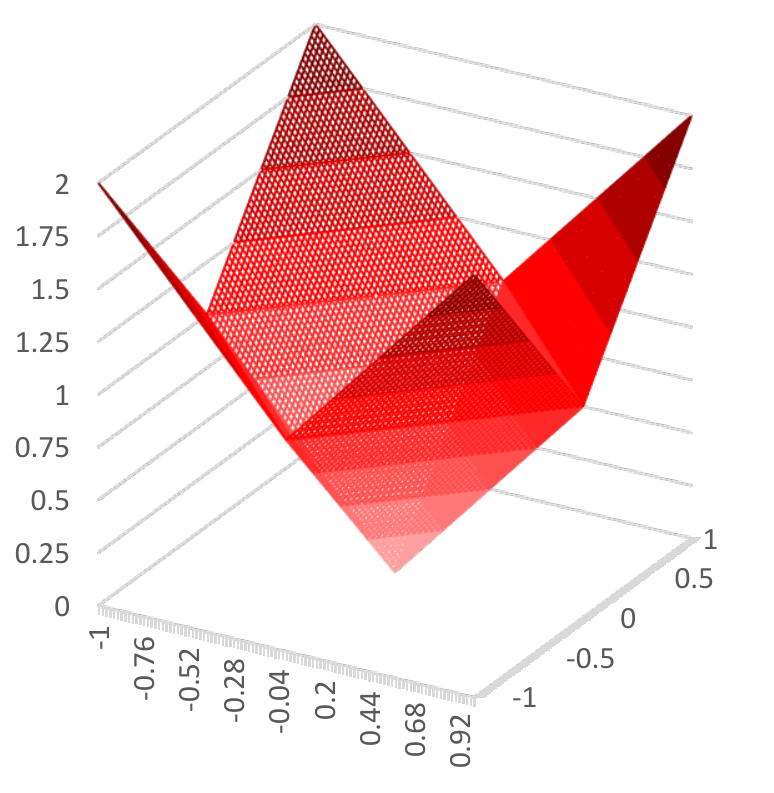}}\hfill\newline \vskip 5mm \subfloat[\MC, $\delta=1.0$]{\includegraphics[width=0.33\textwidth,trim={13cm 5.5cm 13cm 5.5cm},clip]{./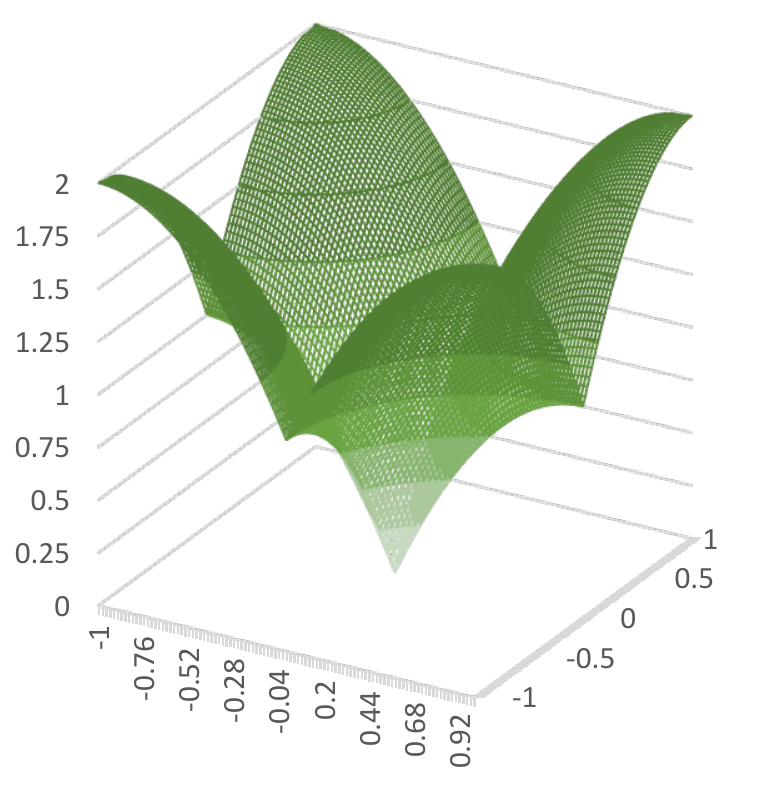}}\hfill\subfloat[\MC, $\delta=0.3$]{\includegraphics[width=0.33\textwidth,trim={13cm 5.5cm 13cm 5.5cm},clip]{./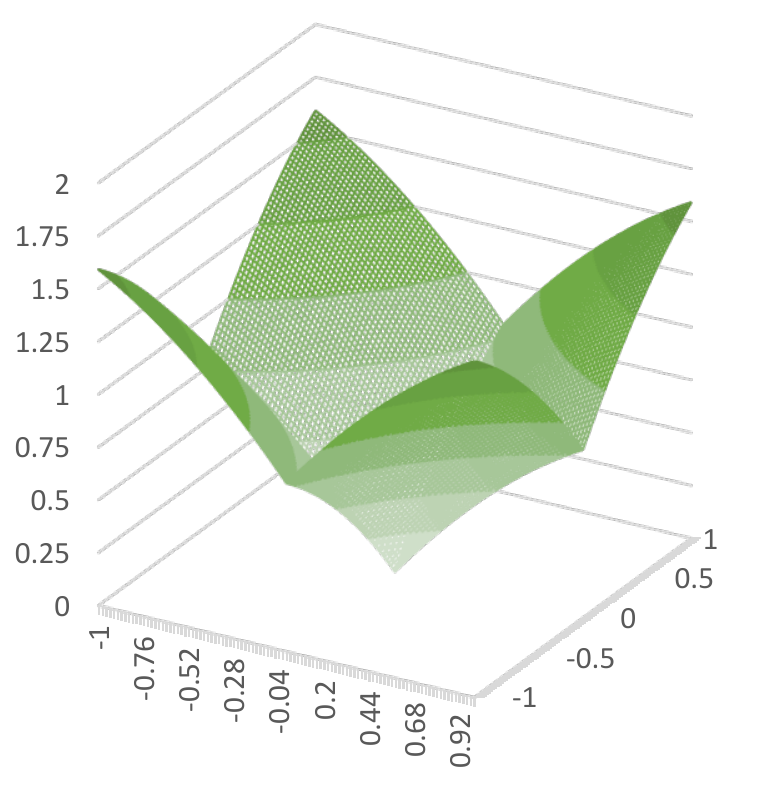}}\hfill\subfloat[\MC, $\delta=0.1$]{\includegraphics[width=0.33\textwidth,trim={13cm 5.5cm 13cm 5.5cm},clip]{./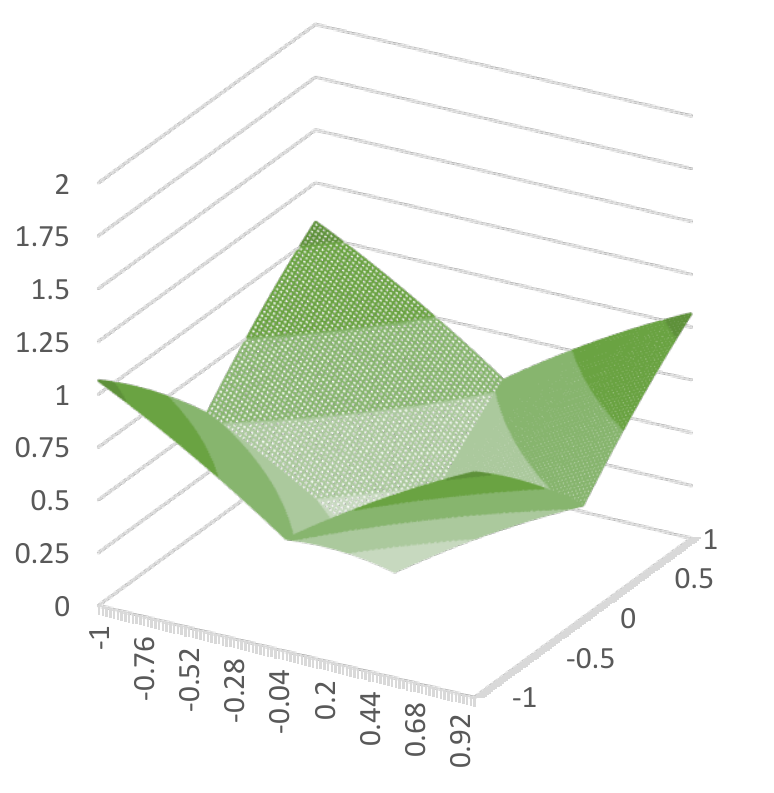}}\hfill\newline \vskip 5mm
	\subfloat[\RO, $\delta=1.0$]{\includegraphics[width=0.33\textwidth,trim={13cm 5.5cm 13cm 5.5cm},clip]{./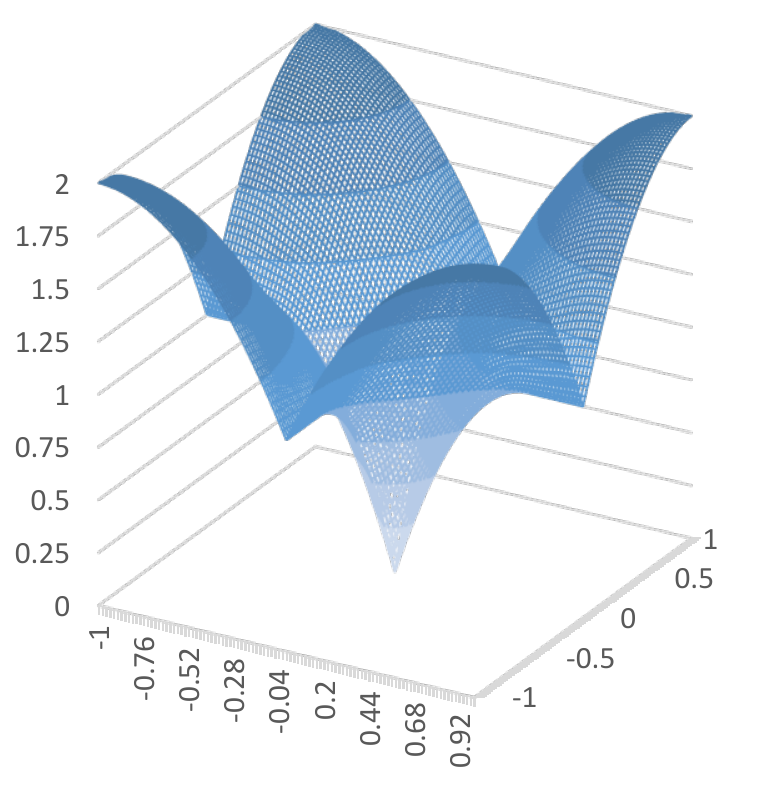}}\hfill\subfloat[\RO, $\delta=0.3$]{\includegraphics[width=0.33\textwidth,trim={13cm 5.5cm 13cm 5.5cm},clip]{./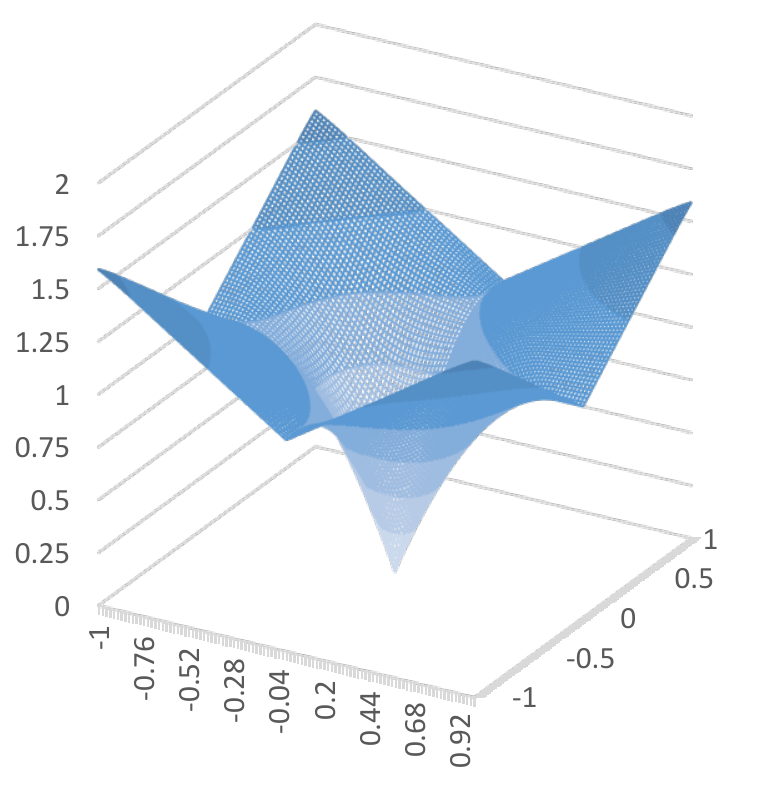}}\hfill\subfloat[\RO, $\delta=0.1$]{\includegraphics[width=0.33\textwidth,trim={13cm 5.5cm 13cm 5.5cm},clip]{./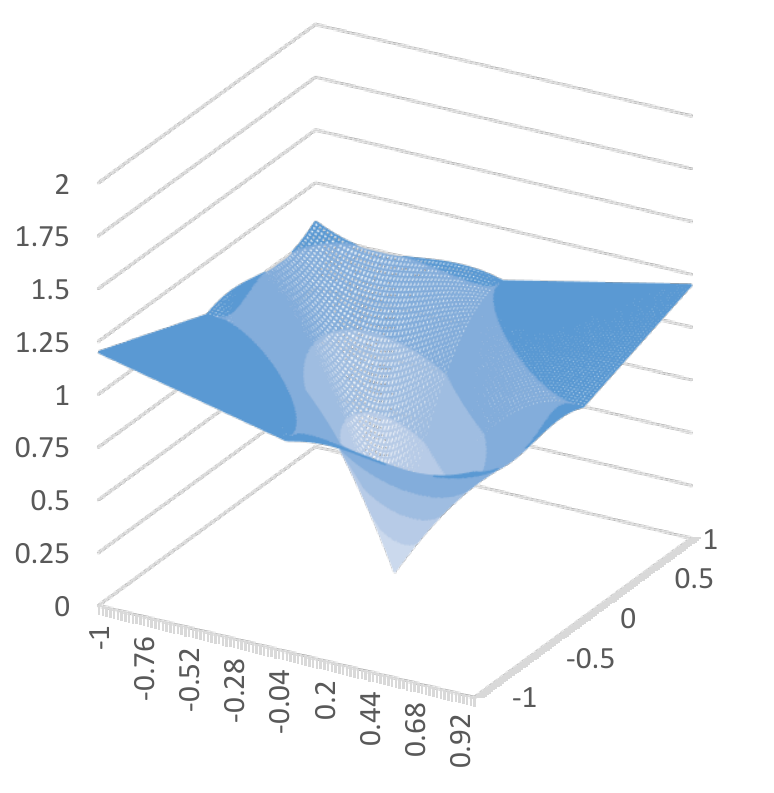}}\hfill
	\caption{\small Graphs of regularization penalties with $p=2$. \rev{The horizontal axes correspond to values of $\beta_1$ and $\beta_2$, and the vertical axis corresponds to the regularization penalty.} The \texttt{ridge}, \texttt{elastic net}, and \texttt{lasso} (top row) regularizations do not depend on the diagonal dominance, but induce substantial bias. The \MC\ regularization (second row) does not induce as much bias, but it depends on the diagonal dominance $(\delta)$. The new non-separable, non-convex \RO\ regularization (bottom row) induces larger penalties than \MC\ for all diagonal dominance values and is a closer approximation for the exact $\ell_0$ penalty.}
	\label{fig:regularization}
\end{figure}

\begin{figure}[!hp]
	\centering
	\subfloat[\footnotesize $\delta=1.0$, $\left.\|\bs{\beta}\|_1\leq 0.60\right.$]{\includegraphics[width=0.33\textwidth,trim={13cm 5.6cm 13cm 5.6cm},clip]{./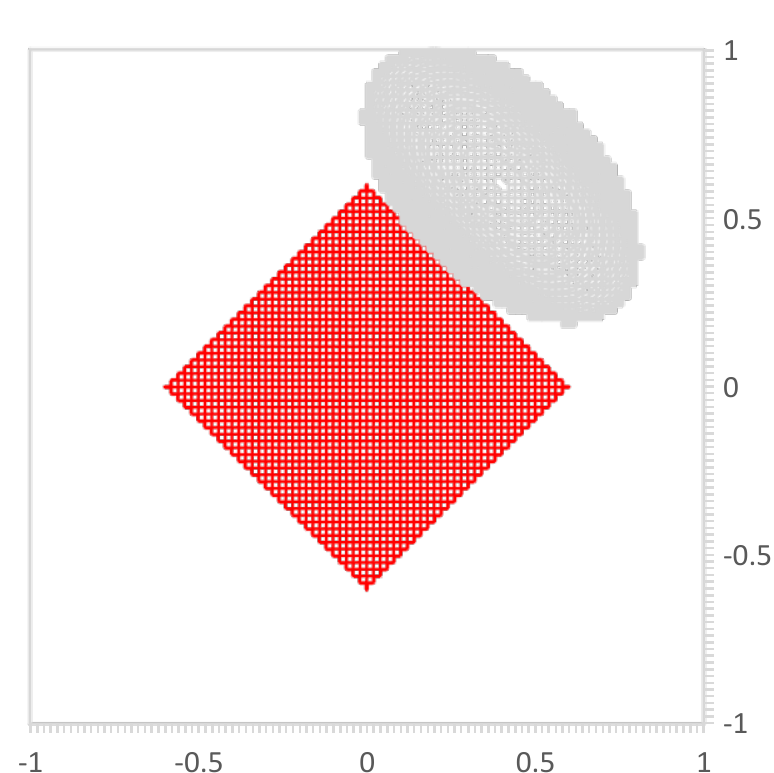}}\hfill\subfloat[\footnotesize $\delta=0.3$, $\left.\|\bs{\beta}\|_1\leq 0.84\right.$]{\includegraphics[width=0.33\textwidth,trim={13cm 5.6cm 13cm 5.6cm},clip]{./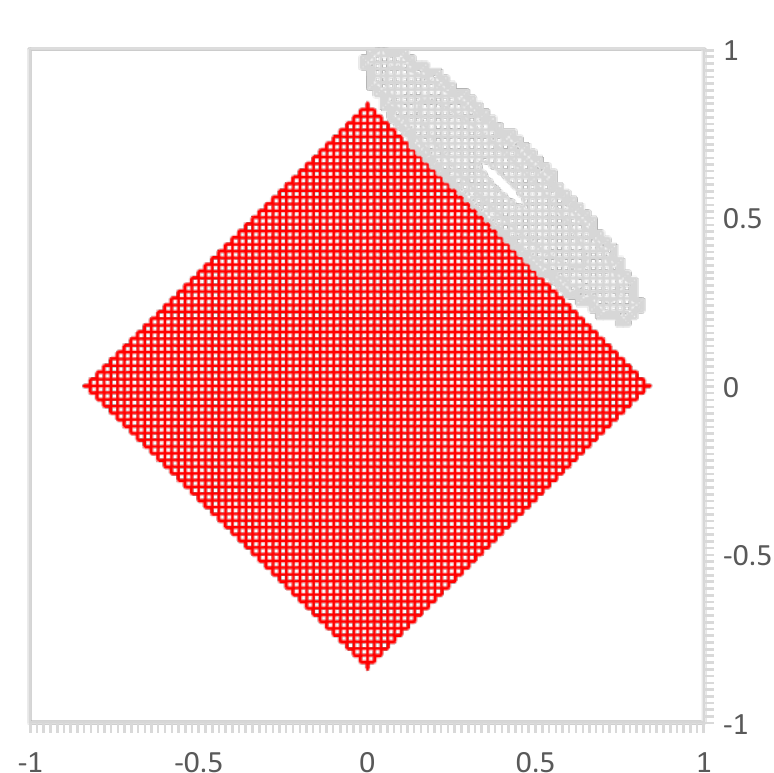}}\hfill\subfloat[\footnotesize $\delta=0.1$, $\left.\|\bs{\beta}\|_1\leq 0.96\right.$]{\includegraphics[width=0.33\textwidth,trim={13cm 5.6cm 13cm 5.6cm},clip]{./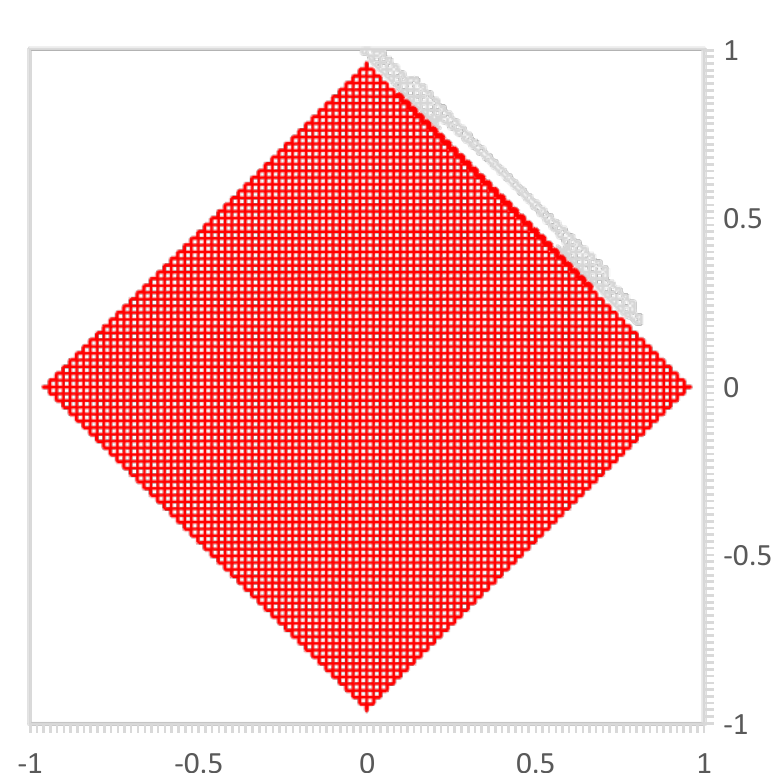}}\hfill\newline
	\subfloat[\footnotesize $\delta=1.0$, $\left.\rho_{\text{\MC}}(\bs{\beta})\leq 0.95\right.$, $\left.\rho_{\text{\RO}}(\bs{\beta})\leq 0.95\right.$]{\includegraphics[width=0.33\textwidth,trim={13cm 5.6cm 13cm 5.6cm},clip]{./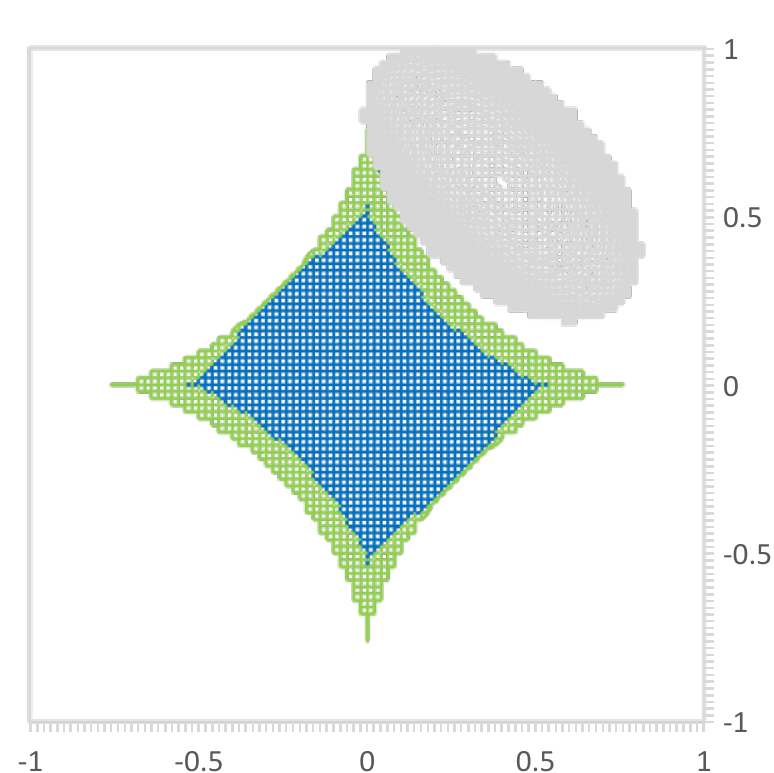}}\hfill\subfloat[\footnotesize $\delta=0.3$, $\left.\rho_{\text{\MC}}(\bs{\beta})\leq 0.77\right.$, $\left.\rho_{\text{\RO}}(\bs{\beta})\leq 0.77\right.$]{\includegraphics[width=0.33\textwidth,trim={13cm 5.6cm 13cm 5.6cm},clip]{./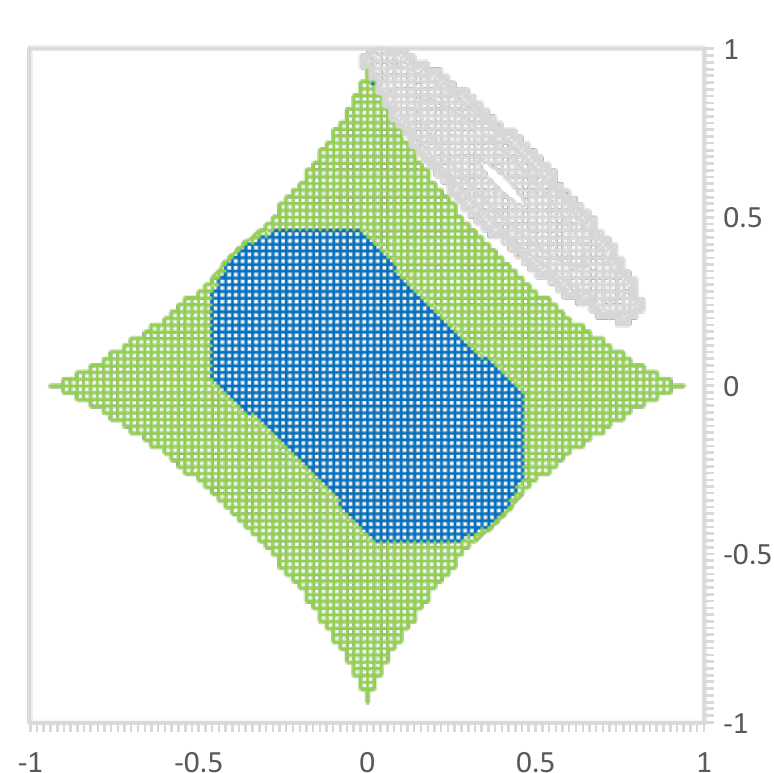}}\hfill\subfloat[\footnotesize $\delta=0.1$, $\left.\rho_{\text{\MC}}(\bs{\beta})\leq 0.53\right.$, $\left.\rho_{\text{\RO}}(\bs{\beta})\leq 0.53\right.$]{\includegraphics[width=0.33\textwidth,trim={13cm 5.6cm 13cm 5.6cm},clip]{./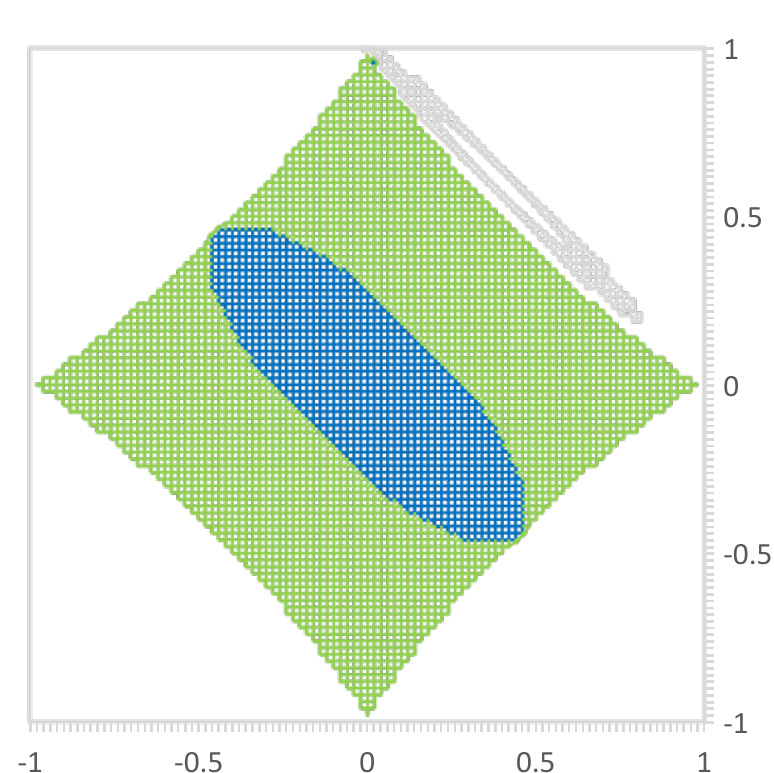}}\hfill\newline
	\subfloat[\footnotesize $\delta=1.0$, $\left.\rho_{\text{\RO}}(\bs{\beta})\leq 1.00\right.$]{\includegraphics[width=0.33\textwidth,trim={13cm 5.6cm 13cm 5.6cm},clip]{./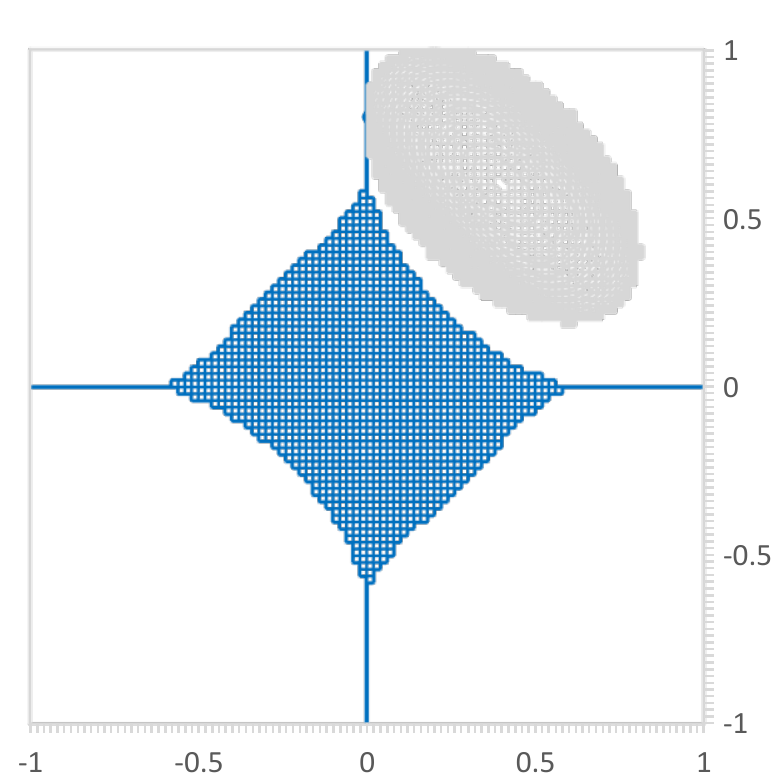}}\hfill\subfloat[\footnotesize $\delta=0.3$, $\left.\rho_{\text{\RO}}(\bs{\beta})\leq 1.00\right.$]{\includegraphics[width=0.33\textwidth,trim={13cm 5.6cm 13cm 5.6cm},clip]{./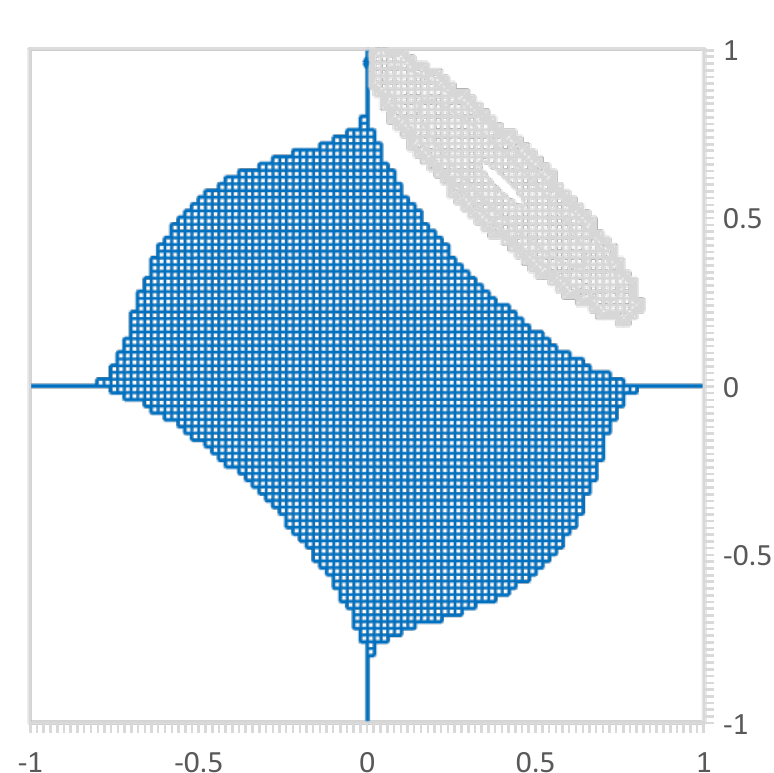}}\hfill\subfloat[\footnotesize $\delta=0.1$, $\left.\rho_{\text{\RO}}(\bs{\beta})\leq 1.00\right.$]{\includegraphics[width=0.33\textwidth,trim={13cm 5.6cm 13cm 5.6cm},clip]{./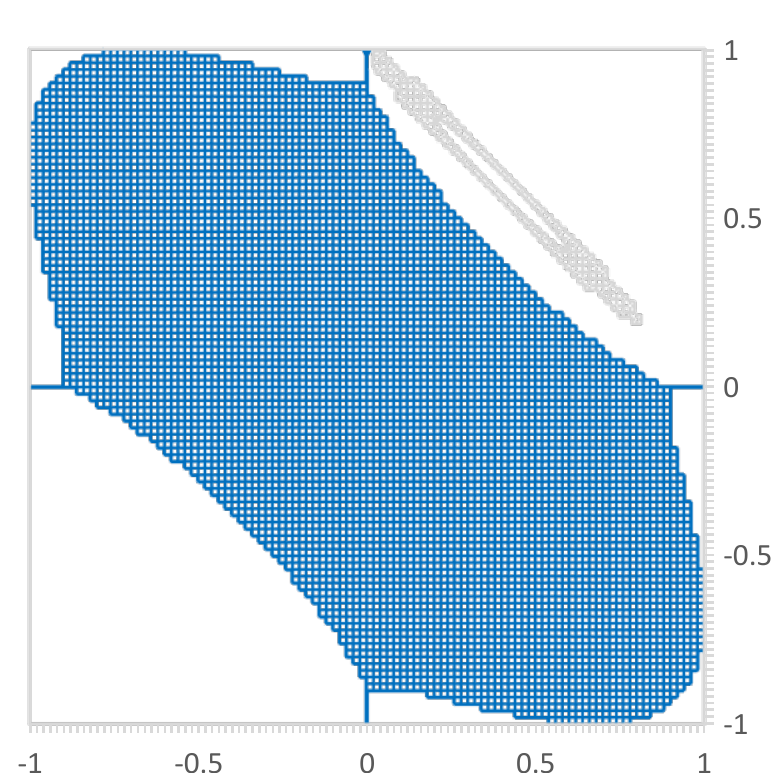}}\hfill
	\caption{\small  The axes correspond to the sparse solutions satisfying $\|\bs{\beta}\|_0\leq 1$.
		In gray: level sets given by $\|\bs{y}-\bs{X\beta}\|_2^2\leq \varepsilon^*$; in red: feasible region for $\|\bs{\beta}\|_1\leq k$; in green: feasible region for $\rho_{\text{\MC}}(\bs{\beta})\leq k$; in blue: feasible region for $\rho_{\text{\RO}}(\bs{\beta})\leq k$. All \texttt{lasso} and \MC \ solutions above are dense even with significant shrinkage ($k < 1$). Rank-one constraint attains sparse solutions on the axes with no shrinkage ($k=1$) for all diagonal dominance values $\delta$. 
	}
	\label{fig:constrained}
\end{figure}

\ignore{ 
\begin{figure}[!hp]
	\centering
	\subfloat[\footnotesize $\delta=1.0$, $\left.\|\bs{\beta}\|_1\leq 0.60\right.$, $\left.\bs{\hat\beta}=(0.20,0.40)\right.$]{\includegraphics[width=0.33\textwidth,trim={13cm 5.6cm 13cm 5.6cm},clip]{./images/Lassod10C.pdf}}\hfill\subfloat[\footnotesize $\delta=0.3$, $\left.\|\bs{\beta}\|_1\leq 0.84\right.$, $\left.\bs{\hat\beta}=(0.32,0.52)\right.$]{\includegraphics[width=0.33\textwidth,trim={13cm 5.6cm 13cm 5.6cm},clip]{./images/Lassod05C.pdf}}\hfill\subfloat[\footnotesize $\delta=0.1$, $\left.\|\bs{\beta}\|_1\leq 0.96\right.$, $\left.\bs{\hat\beta}=(0.38,0.56)\right.$]{\includegraphics[width=0.33\textwidth,trim={13cm 5.6cm 13cm 5.6cm},clip]{./images/Lassod01C.pdf}}\hfill\newline
\subfloat[\footnotesize $\delta=1.0$, $\left.\rho_{\text{\MC}}(\bs{\beta})\leq 0.95\right.$, $\left.\rho_{\text{\RO}}(\bs{\beta})\leq 0.95\right.$, $\left.\bs{\hat\beta}=(0.04,0.64)\right.$]{\includegraphics[width=0.33\textwidth,trim={13cm 5.6cm 13cm 5.6cm},clip]{./images/MCd10C.pdf}}\hfill\subfloat[\footnotesize $\delta=0.3$, $\left.\rho_{\text{\MC}}(\bs{\beta})\leq 0.77\right.$, $\left.\rho_{\text{\RO}}(\bs{\beta})\leq 0.77\right.$, $\left.\bs{\hat\beta}=(0.02,0.90)\right.$]{\includegraphics[width=0.33\textwidth,trim={13cm 5.6cm 13cm 5.6cm},clip]{./images/MCd05C.pdf}}\hfill\subfloat[\footnotesize $\delta=0.1$, $\left.\rho_{\text{\MC}}(\bs{\beta})\leq 0.53\right.$, $\left.\rho_{\text{\RO}}(\bs{\beta})\leq 0.53\right.$, $\left.\bs{\hat\beta}=(0.02,0.96)\right.$]{\includegraphics[width=0.33\textwidth,trim={13cm 5.6cm 13cm 5.6cm},clip]{./images/MCd01C.pdf}}\hfill\newline
	\subfloat[\footnotesize $\delta=1.0$, $\left.\rho_{\text{\RO}}(\bs{\beta})\leq 1.00\right.$, $\left.\bs{\hat\beta}=(0.00,0.80)\right.$]{\includegraphics[width=0.33\textwidth,trim={13cm 5.6cm 13cm 5.6cm},clip]{./images/R1d10C.pdf}}\hfill\subfloat[\footnotesize $\delta=0.3$, $\left.\rho_{\text{\RO}}(\bs{\beta})\leq 1.00\right.$, $\left.\bs{\hat\beta}=(0.00,0.96)\right.$]{\includegraphics[width=0.33\textwidth,trim={13cm 5.6cm 13cm 5.6cm},clip]{./images/R1d05C.pdf}}\hfill\subfloat[\footnotesize $\delta=0.1$, $\left.\rho_{\text{\RO}}(\bs{\beta})\leq 1.00\right.$, $\left.\bs{\hat\beta}=(0.00,1.00)\right.$]{\includegraphics[width=0.33\textwidth,trim={13cm 5.6cm 13cm 5.6cm},clip]{./images/R1d01C.pdf}}\hfill
	\caption{\small  The axes correspond to the sparse solutions satisfying $\|\bs{\beta}\|_0\leq 1$.
		In gray: level sets given by $\|\bs{y}-\bs{X\beta}\|_2^2\leq \varepsilon^*$; in red: feasible region for $\|\bs{\beta}\|_1\leq k$; in green: feasible region for $\rho_{\text{\MC}}(\bs{\beta})\leq k$; in blue: feasible region for $\rho_{\text{\RO}}(\bs{\beta})\leq k$; $\bs{\hat\beta}$: resulting estimator.$\left.\bs{\hat\beta}=(0.32,0.52)\right.$ \texttt{Lasso} and \MC \ solutions are dense even with significant shrinkage ($k < 1$). Rank-one constraint attains sparse solutions on the axes with no shrinkage ($k=1$) for all diagonal dominance values $\delta$. 
	}
	\label{fig:constrained}
\end{figure}
}

\rev{Finally, Figure~\ref{fig:formulations} shows the strength of relaxations of \eqref{eq:bestSubsetSelection} discussed in this paper. The ``big-$M$" relaxation is the natural convex relaxation of \eqref{eq:MIO} obtained by replacing $z \in \{0,1\}^p$ by $z \in [0,1]^p$, used in \cite{bertsimas2016best,cozad2014learning}. The perspective relaxation is the natural convex relaxation of \eqref{eq:MIOPersp}, which is the basis of recent methods \cite{bertsimas2017sparse,hazimeh2020sparse,pilanci2015sparse,xie2018ccp} -- note that this formulation may only be used if $\lambda > 0$. 
	The ``optimal perspective" relaxation, also referred to as \sdp{1} in this paper, was explicitly given in \cite{dong2015regularization}. This paper proposes new relaxations \sdp{r}, discussed in \S\ref{sec:convexification}, which dominate all existing relaxations in terms of strength. It also proposes the new formulation \sdp{LB}, discussed in \S\ref{sec:socp}, which is easier to solve than \sdp{r} but still compares favorably with the ``big-$M$" and perspective formulations.}

\begin{figure}[!h]
	\centering
	{\includegraphics[width=0.8\textwidth,trim={6cm 3.5cm 6cm 1cm},clip]{./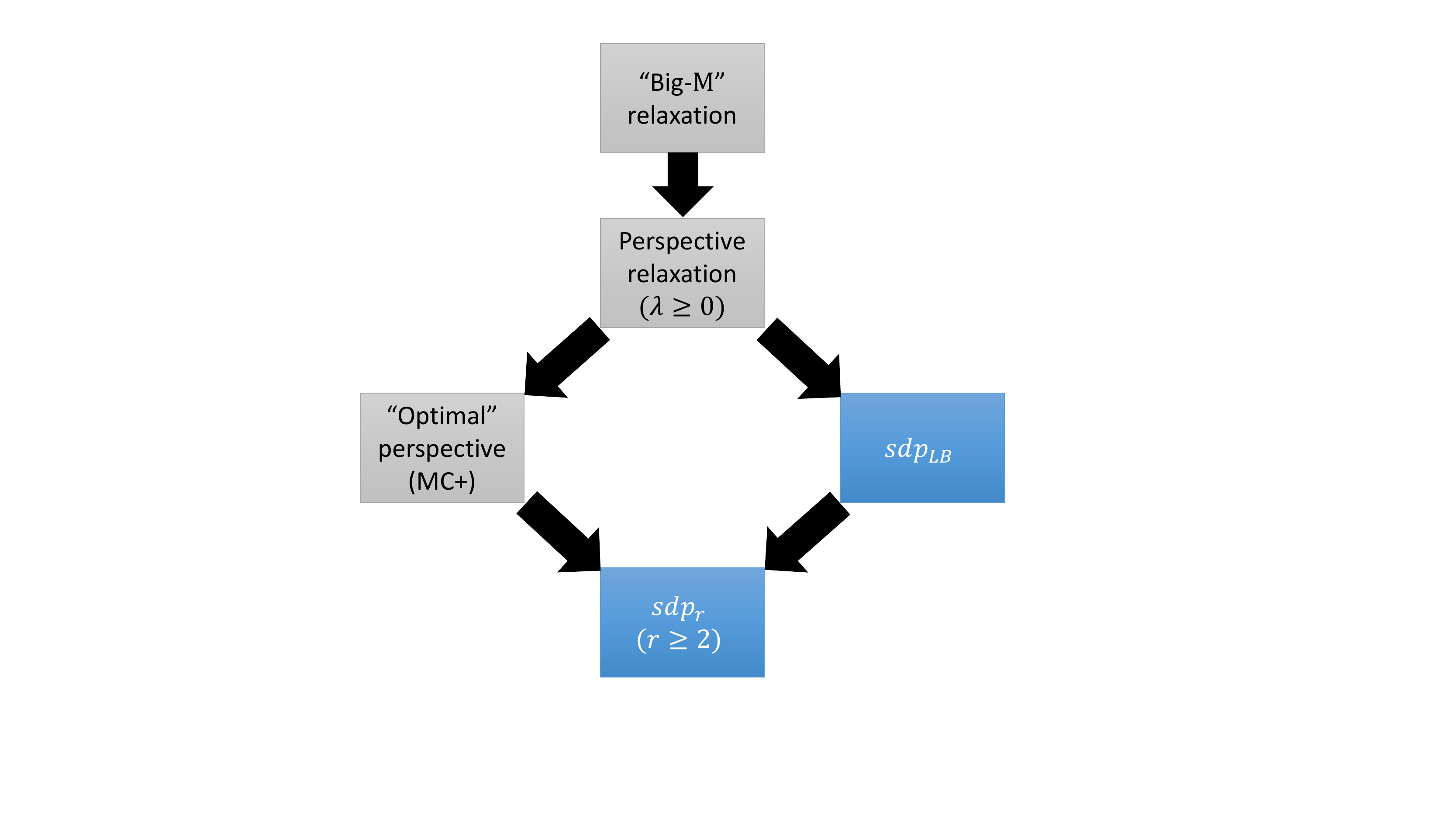}}
	\caption{\small \rev{Strength of relaxations discussed in the paper.  ``$A\Rightarrow B$" indicates that $B$ is a stronger relaxation than $A$, i.e., is a better approximation for the non-convex problem \eqref{eq:bestSubsetSelection}. Blue boxes correspond to the new formulations proposed in this paper.}
	}
	\label{fig:formulations}
\end{figure}

\subsection*{Outline} The rest of the paper is organized as follows. In \S\ref{sec:convexification} we derive the proposed convex relaxations based on ideal formulations for rank-one quadratic terms with indicator variables. We also give an interpretation of the convex relaxations as unbiased regularization penalties, and we give an explicit semidefinite optimization (SDP) formulation in an extended space, which can be implemented with off-the-shelf conic optimization solvers. In \S\ref{sec:regularization} we derive an explicit form of the regularization penalty for the two-dimensional case. In \S\ref{sec:socp} we discuss the implementation of the proposed relaxation in a conic quadratic framework. In \S\ref{sec:computations} we present computational experiments with synthetic as well as benchmark datasets, demonstrating that \textit{(i)} the proposed formulation delivers near-optimal solutions (with provable optimality gaps) of \eqref{eq:bestSubsetSelection} in most cases, \textit{(ii)} using the proposed convex relaxation results in superior statistical performance when compared with usual estimators obtained from convex optimization approaches. In \S\ref{sec:conclusions} we conclude the paper with a few final remarks. 

\subsection*{Notation} Define $P=\{1,\ldots,p\}$ and $\bs{e}\in \R^\rev{p}$ be the vector of ones. Given $T\subseteq P$ and a vector $\bs{a}\in \R^\rev{p}$, define $\bs{a_T}$ as the subvector of $\bs{a}$ induced by $T$, $a_i=\bs{a_{\{i\}}}$ as the $i$-th element of $\bs{a}$, and define $a(T)=\sum_{i\in T}a_i$. 
Given a symmetric matrix $\bs{A}\in \R^{\rev{p}\times \rev{p}}$, let $\bs{A_T}$ be the submatrix of $\bs{A}$ induced by $T\subseteq P$, and let $\Sp^T$ be the set of \rev{$|T|\times |T|$} symmetric positive semidefinite matrices, i.e., $\bs{A_T}\succeq 0 \Leftrightarrow \bs{A_T}\in \Sp^T$. We use $\bs{a_T}$ or $\bs{A_T}$ to make explicit that a given vector or matrix 
is indexed by the elements of $T$ or $T\times T$, respectively. 
Given matrices $\bs{A}$, $\bs{B}$ of the same dimension, $\bs{A}\circ\bs{B}$ denotes the Hadamard product of $\bs{A}$ and $\bs{B}$, and $\langle \bs{A},\bs{B}\rangle $ denotes their inner product.
Given a vector $\bs{a}\in \R^n$, let $\diag(\bs{a})$ be the $n\times n$ diagonal matrix $\bs{A}$ with $A_{ii}=a_i$. 
For a set $X\subseteq \R^\rev{p}$, $\clconv(X)$ denotes the closure of the convex hull of $X$. Throughout the paper, we adopt the following convention for division by 0: given a scalar $s\geq 0$, $s/0=\infty$ if $s>0$ and $s/0$ if $s=0$. For a scalar $a\in \R$, let $\sign(a)=a/|a|$.

\section{Convexification}\label{sec:convexification}
In this section we introduce the proposed relaxations of problem \eqref{eq:bestSubsetSelection}. First, in \S\ref{sec:rank1}, we describe the \emph{ideal} relaxations for the mixed-integer epigraph of a rank-one quadratic term. Then, in \S\ref{sec:general}, we use the relaxations derived in \S\ref{sec:rank1} to give strong relaxations of \eqref{eq:bestSubsetSelection}. Next, in \S\ref{sec:regularization2}, we give an interpretation of the proposed relaxations as unbiased sparsity-inducing regularizations. Finally, in \S\ref{sec:exactSDP} we present an explicit SDP representation of the proposed relaxations in an extended space.

\subsection{Rank-one case}\label{sec:rank1}
We first give a valid inequality for the mixed-integer epigraph of a convex quadratic function defined over the subsets of $P$.
Given $A_T\in \Sp^T$, consider the set
$$Q_{T}=\left\{(\bs{z},\bs{\beta},t)\in \{0,1\}^\rev{|T|}\times \R^\rev{|T|}\times \R_+: \bs{\beta^\top A_T\beta}\leq t,\; \beta_i(1-z_i)=0,\forall i\in T\right\}.$$

\begin{proposition}
	The inequality 
	\begin{equation}\label{eq:valid}
	\frac{\bs{\beta^\top A_T\beta}}{z(T)}\leq t
	\end{equation}
	is valid for $Q_T$.
\end{proposition}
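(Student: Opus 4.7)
The plan is to proceed by cases based on the support of $\bs{\beta}$, using the complementarity constraints to relate $z(T)$ to the support and then comparing with the convex quadratic bound $t$.

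First I would observe that the complementarity constraints $\beta_i(1-z_i)=0$ for $i\in T$ force the support of $\bs{\beta}$ to be contained in $S:=\{i\in T:z_i=1\}$, so $z(T)=|S|\geq 0$ is an upper bound on $\|\bs{\beta}\|_0$. In particular, whenever $\bs{\beta}\neq 0$ we must have $|S|\geq 1$, i.e., $z(T)\geq 1$.

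Next I would split into two cases. If $\bs{\beta}=0$, then $\bs{\beta^\top A_T\beta}=0$, so the left-hand side of \eqref{eq:valid} evaluates to $0/z(T)$; by the division convention adopted in the paper ($0/0=0$ and $0/s=0$ for $s>0$), this is $0$, which is $\leq t$ since $t\geq 0$. If $\bs{\beta}\neq 0$, then $z(T)\geq 1$, and since $\bs{A_T}\succeq 0$ gives $\bs{\beta^\top A_T\beta}\geq 0$, we get
\[
\frac{\bs{\beta^\top A_T\beta}}{z(T)}\;\leq\;\bs{\beta^\top A_T\beta}\;\leq\;t,
\]
where the first inequality uses $z(T)\geq 1$ and the second is the defining inequality of $Q_T$.

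There is no real obstacle here; the argument is essentially a bookkeeping exercise, and the only subtle point is handling the $z(T)=0$ case correctly via the division convention. The result is the rank-one analogue of the standard perspective inequality $\beta_i^2/z_i\leq t$ but aggregated over the indices in $T$, and it is this aggregation (made possible by the integrality of $\bs{z}$ giving $z(T)\geq 1$ whenever $\bs{\beta}\neq 0$) that prevents division by a sum that could otherwise be fractional.
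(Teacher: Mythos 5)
Your proof is correct and follows essentially the same argument as the paper's: the paper cases on $\bs{z}=\bs{0}$ (forcing $\bs{\beta}=\bs{0}$) versus $z_i=1$ for some $i$, while you case on $\bs{\beta}=\bs{0}$ versus $\bs{\beta}\neq\bs{0}$, but both hinge on the same two observations — integrality plus complementarity give $z(T)\geq 1$ whenever $\bs{\beta}\neq\bs{0}$, and positive semidefiniteness gives $\bs{\beta^\top A_T\beta}\geq 0$ so that dividing by $z(T)\geq 1$ can only decrease the left-hand side. No gap.
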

\begin{proof}
Let $(\bs{z},\bs{\beta},t)\in Q_T$, and we verify that inequality \eqref{eq:valid} is satisfied. First observe that if $\bs{z}=\bs{0}$, then $\bs{\beta}=\bs{0}$ and inequality \eqref{eq:valid} reduces to $0\leq t$, which is satisfied. Otherwise, if $z_i=1$ for some $i\in T$, then $z(T)\geq 1$ and we find that $\frac{\bs{\beta^\top A_T\beta}}{z(T)}\leq \bs{\beta^\top A_T\beta}\leq t$, and inequality \eqref{eq:valid} is satisfied again. 
\end{proof}
Observe that if $T$ is a singleton, i.e., $T=\{i\}$, then \eqref{eq:valid} reduces to the well-known perspective inequality $A_{ii}\beta_i^2\leq tz_i$. Moreover, if $T=\{i,j\}$ and $\bs{A_T}$ is rank-one, i.e.,
\rev{$\bs{A_T}=a_Ta_T^\top$ with $a_T=(a_i\; a_j)^\top$ and} $\bs{\beta^\top A_T\beta}=|A_{ij}|\left(a\beta_i^2\pm 2\beta_i\beta_j+(1/a)\beta_j^2\right)$ for $A_{ij}\rev{=a_ia_j}$ and $a\rev{=a_i/a_j}$, then \eqref{eq:valid} reduces to \begin{equation}\label{eq:2drank1}|A_{ij}|\left(a\beta_i^2\pm 2\beta_i\beta_j+(1/a)\beta_j^2\right)\leq t(z_i+z_j),\end{equation}
one of the inequalities proposed in \cite{jeon2017quadratic} in the context of quadratic optimization with indicators and bounded continuous variables. Note that inequality \eqref{eq:2drank1} is, in general, weak for bounded continuous variables (as non-negativity or other bounds can be used to strengthen the inequalities, see \cite{atamturk2018strong} for additional discussion); and inequality \eqref{eq:valid} is, in general, weak for arbitrary matrices $\bs{A_T}\in \Sp^T$. Nonetheless, as we show next, inequality \eqref{eq:valid} is sufficient to describe the \emph{ideal} (convex hull) description for $Q_T$ if $\bs{A_T}\rev{=\bs{a_Ta_T^\top}}$ is a rank-one matrix. Consider the
special case of $Q_T$ defined with a rank-one matrix:
$$Q_{T}^{r1}=\left\{(\bs{z},\bs{\beta},t)\in \{0,1\}^\rev{|T|}\times \R^\rev{|T|}\times \R_+: (\bs{a_T^\top\beta})^2\leq t,\; \beta_i(1-z_i)=0,\forall i\in T\right\}.$$

\begin{theorem}
If $a_i\neq 0$ for all $i\in T$, then 
$$\conv(Q_{T}^{r1})=\left\{(\bs{z},\bs{\beta},t)\in [0,1]^\rev{|T|}\times \R^\rev{|T|}\times \R_+: (\bs{a_T^\top \beta})^2\leq t,\; \frac{(\bs{a_T^\top\beta})^2}{z(T)}\leq t\right\} \cdot$$
\end{theorem}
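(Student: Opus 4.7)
The plan is to prove the two inclusions separately. First, to show that $\conv(Q_{T}^{r1})$ is contained in the right-hand side, I would verify that both defining inequalities are valid for $Q_{T}^{r1}$ and that the right-hand side is convex. The inequality $(\bs{a_T^\top\beta})^2\le t$ is definitional. The perspective inequality $(\bs{a_T^\top\beta})^2/z(T)\le t$ is exactly the preceding proposition applied to the rank-one positive semidefinite matrix $\bs{A_T}=\bs{a_T}\bs{a_T^\top}$. Convexity of the right-hand side follows because the perspective of a convex function is convex. Hence the right-hand side is a convex over-set of $Q_{T}^{r1}$ and contains $\conv(Q_{T}^{r1})$.

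For the reverse inclusion, I would construct an explicit decomposition of any point in the right-hand side as a convex combination of points in $Q_{T}^{r1}$. The rank-one structure is key: $(\bs{a_T^\top\beta})^2$ depends on $\bs{\beta}$ only through the scalar $s:=\bs{a_T^\top\beta}$. So I would first work in the reduced space $(\bs{z},s,t)$, where $Q_{T}^{r1}$ projects onto the standard multi-indicator perspective set $\{(\bs{z},s,t)\in\{0,1\}^{|T|}\times\R\times\R_+:s^2\le t,\;\bs{z}=\bs{0}\Rightarrow s=0\}$. A textbook argument --- decompose $\bs{z}$ as a convex combination of indicators of non-empty subsets $\bs{1}_{S_k}$ plus the zero vector, assign scalars $s^{(k)}$ to the non-zero atoms, and apply Cauchy--Schwarz to show that the minimum achievable $t$ equals $s^2/\min(z(T),1)$ --- yields that the convex hull in the reduced space is exactly $\{s^2\le t,\;s^2\le tz(T)\}$.

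It remains to lift such a decomposition back to $(\bs{z},\bs{\beta},t)$. Given a decomposition $\sum_k\lambda_k(\bs{1}_{S_k},s^{(k)},(s^{(k)})^2)$, one must select $\bs{\gamma}^{(k)}$ supported on $S_k$ with $\bs{a}_{S_k}^\top\bs{\gamma}_{S_k}^{(k)}=s^{(k)}$ so that $\sum_k\lambda_k\bs{\gamma}^{(k)}=\bs{\beta}$. For atoms with $|S_k|\ge 2$, each $\bs{\gamma}^{(k)}$ has $|S_k|-1$ degrees of freedom orthogonal to $\bs{a}_{S_k}$, which suffices to absorb the difference between any two vectors sharing the same value of $s$. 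The main obstacle is the degenerate case where $\bs{z}$ has a zero coordinate: complementarity forces the corresponding atoms to have singleton support with no orthogonal freedom, so matching a general $\bs{\beta}$ requires a closure argument using full-support atoms of vanishing weight with $\bs{\beta}$-components diverging in the null space of $\bs{a_T^\top}$. Interpreted in this way, the statement becomes $\clconv(Q_{T}^{r1})$ equals the right-hand side, consistent with the paper's convention $0/0=0$ that makes the degenerate points of the right-hand side accessible only in the limit.
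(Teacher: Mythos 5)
Your route is genuinely different from the paper's. The paper never exhibits convex combinations: it optimizes an arbitrary linear function over the proposed relaxation and, after reductions (coefficient of $t$ scaled to one, $\bs{a_T}=\bs{e_T}$, the $\bs{\beta}$-coefficients forced to be constant), shows that either both problems are unbounded or the relaxation admits an optimal solution that is binary in $\bs{z}$ and satisfies complementarity; this support-function argument proves equality of the relaxation with the \emph{closed} convex hull in one sweep. You instead give a primal argument: validity plus convexity for one inclusion, and for the other a reduction to the scalar $s=\bs{a_T^\top\beta}$ (this is exactly where $a_i\neq 0$ enters), the hull in the reduced space via Cauchy--Schwarz and a decomposition of $\bs{z}$ into indicator atoms (for $z(T)\geq 1$ you need, and should invoke, integrality of $\{\bs{z}\in[0,1]^{|T|}:z(T)\geq1\}$ so the zero atom can be avoided), followed by a lifting step. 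Your approach yields explicit certificates and, to its credit, surfaces a point the paper glosses over: the right-hand side is closed while $\conv(Q_T^{r1})$ need not be, so the statement is correctly read with $\clconv$, which is what the paper's optimization proof actually establishes.

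One correction to your case analysis: the closure is needed more often than ``when $\bs{z}$ has a zero coordinate.'' Take $T=\{1,2\}$, $\bs{a_T}=(1,1)^\top$, $\bs{z}=(0.25,0.25)$, $\bs{\beta}=(1,-0.5)$, $t=(\bs{a_T^\top\beta})^2/z(T)=0.5$. All $z_i>0$, yet this point lies outside $\conv(Q_T^{r1})$: since the perspective inequality is tight and $s\neq0$, the Cauchy--Schwarz chain forces any exact convex combination to place weight exactly $z(T)$ on nonempty atoms, hence to use only singleton atoms with a common value of $s^{(k)}$; matching $\bs{\beta}$ then forces $s^{(i)}=a_i\beta_i/z_i$, giving $t\geq\sum_i(a_i\beta_i)^2/z_i$ (here $5$), which strictly exceeds $(\bs{a_T^\top\beta})^2/z(T)$ unless the ratios $a_i\beta_i/z_i$ coincide. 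So precisely in the regime $z(T)<1$ your ``atoms with $|S_k|\geq2$ have orthogonal freedom'' mechanism is unavailable, and the point is reached only in the limit of a full-support atom whose weight vanishes while its $\bs{\beta}$-components diverge along the null space of $\bs{a_T^\top}$ --- i.e., the same vanishing-weight device you reserve for zero coordinates of $\bs{z}$. With the closure argument applied in this case as well (equivalently, proving the identity for $\clconv$ throughout), your plan goes through; the paper's linear-optimization proof simply avoids this bookkeeping.
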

\begin{proof}
Consider the optimization of an arbitrary linear function over $Q_{T}^{r1}$ and $\bar Q_T:=\Big\{(\bs{z},\bs{\beta},t)\in [0,1]^\rev{|T|}\times \R^\rev{|T|}\times \R_+: (\bs{a_T^\top\beta})^2\leq t,\; \frac{(\bs{a_T^\top\beta})^2}{z(T)}\leq t\Big\}$:
\begin{align}
&\min_{(\bs{z},\bs{\beta},t)\in Q_T^{r1}}\bs{u_T^\top z}+\bs{v_T^\top\beta}+\kappa t, \label{eq:discrete}\\
&\min_{(\bs{z},\bs{\beta},t)\in \bar Q_T}\bs{u_T^\top z}+\bs{v_T^\top\beta}+\kappa t,\label{eq:continuous}
\end{align}
where $\bs{u_T},\bs{v_T}\in \R^{\rev{|T|}}$ and $\kappa\in \R$.
We now show that either there exists an optimal solution of \eqref{eq:continuous} that is feasible for \eqref{eq:discrete}, hence also optimal for \eqref{eq:discrete} as $\bar Q_T$ is a relaxation of $Q_T^{r1}$, or that \eqref{eq:discrete} and \eqref{eq:continuous} are both unbounded.

Observe that if $\kappa<0$, then letting $\bs{z}=\bs{\beta}=\bs{0}$ and $t\to\infty$ we see that both problems are unbounded. If $\kappa=0$ and $\bs{v_T}=\bs{0}$, then \eqref{eq:continuous} reduces to $\min_{\bs{z}\in [0,1]^\rev{|T|}}\bs{u_T^\top z}$, which has an optimal integral solution $\bs{z}^*$, and $(\bs{z}^*,\bs{0},0)$ is optimal for \eqref{eq:discrete} and \eqref{eq:continuous}. If $\kappa=0$ and $v_i\neq 0$ for some $i\in T$, then letting $\beta_i\to \pm \infty$, $z_i=1$, and $\beta_j=z_j=t=0$ for $j\neq i$, we find that both problems are unbounded. Thus, we may assume, without loss of generality that $\kappa>0$, and, by scaling, $\kappa=1$. 

Additionally, as $\bs{a_T}$ has no zero entry, we may assume, without loss of generality, that $\bs{a_T}=\bs{e_T}$, since otherwise $\bs{\beta}$ and $\bs{v_T}$ can be scaled by letting $\bar \beta_i=a_i\beta_i$ and $\bar v_i =v_i/a_i$ to arrive at an equivalent problem.
 Moreover, a necessary condition for \eqref{eq:discrete}--\eqref{eq:continuous} to be bounded is that 
\begin{equation}\label{eq:bounded}-\infty <\min_{\bs{\beta}\in \R^\rev{|T|}} \bs{v_T^\top\beta} \text{ s.t. } \beta(T)=\zeta\end{equation}
for any fixed $\zeta \in \R$.
It is easily seen that \eqref{eq:bounded} has an optimal solution if and only if $v_i=v_j$ for all $i\neq j$. Thus, we may also assume without loss of generality that $\bs{v_T^\top\beta}=v_0\beta(T)$  for some scalar $v_0$. 
Performing the above simplifications, we find that \eqref{eq:continuous} reduces to 
\begin{equation}\label{eq:continuous2}
\min_{\bs{z}\in [0,1]^\rev{|T|},\bs{\beta}\in \R^{\rev{|T|}},t\in \R}\bs{u_T^\top z}+v_0\beta(T)+t \text{ s.t. } \beta(T)^2\leq t,\; \beta(T)^2\leq tz(T).
\end{equation}
Since the one-dimensional optimization $\min_{\beta\in \R}\left\{v_0\beta+\beta^2\right\}$ has an optimal solution, it follows that \eqref{eq:continuous2} is bounded and has an optimal solution. We now prove that \eqref{eq:continuous2} has an optimal solution that is integral in $\bs{z}$ and satisfies $\bs{\beta}\circ(\bs{e}-\bs{z})=0$. 

Let $(\bs{z}^*,\bs{\beta}^*,t^*)$ be an optimal solution of \eqref{eq:continuous2}. First note that if $0<z^*(T)<1$, then $(\gamma \bs{z}^*,\gamma \bs{\beta}^*,\gamma t^*)$ is feasible for \eqref{eq:continuous} for $\gamma$ sufficiently close to $1$, with objective value $\gamma\left(\bs{u_T^\top z}^*+v_0\beta^*(T)+ t^*\right)$. If $\bs{u_T^\top z}^*+v_0\beta^*(T)+ t^* \ge 0$, then for $\gamma = 0$, $(\gamma \bs{z}^*,\gamma \bs{\beta}^*,\gamma t^*)$ has an objective value equal or lower. Otherwise, for $\gamma=1/z^*(T)$, $(\gamma \bs{z}^*,\gamma \bs{\beta}^*,\gamma t^*)$ is feasible and has a lower objective value.
Thus, we find that either $\bs{0}$ is optimal for \eqref{eq:continuous2} (and the proof is complete), or there exists an optimal solution with $z^*(T)\geq 1$. In the later case, observe that any $(\bar{ \bs{z}},\bs{\beta}^*,t^*)$ with $\bar{ \bs{z}}\in\argmin\{\bs{u_T^\top z}: z^*(T)\geq 1, z\in [0,1]^\rev{|T|}\}$ is also optimal for \eqref{eq:continuous2}, an in particular there exists an optimal solution with $\bar{\bs{z}}$ integral. 

Finally, let $i\in T$ be any index  with $\bar z_i=1$. Setting $\bar \beta_i = \beta^*(T)$ and $\bar \beta_j=0$ for $i\neq j$, 
we find another optimal solution $(\bar{\bs{z}},\bar{\bs{\beta}},t^*)$ for \eqref{eq:continuous2} that satisfies the complementary constraints, and thus is feasible and optimal for \eqref{eq:discrete}. 
\end{proof}

\begin{remark}
Observe that describing $\conv(Q_{T}^{r1})$ requires two nonlinear inequalities in the original space of variables. More compactly, we can specify $\conv(Q_{T}^{r1})$ using a single convex inequality, as 
$$\conv(Q_{T}^{r1})=\left\{(\bs{z},\bs{\beta},t)\in [0,1]^\rev{|T|}\times \R^\rev{|T|}\times \R_+: \frac{(\bs{a_T^\top\beta})^2}{\min\{1,z(T)\}}\leq t \right\} \cdot$$ Finally, we point out that $\conv(Q_{T}^{r1})$ is conic quadratic representable, as $(\bs{z},\bs{\beta},t)\in \conv(Q_{T}^{r1})$ if and only if there exists $w$ such that the system 
 $$\bs{z}\in [0,1]^\rev{|T|},\;\bs{\beta}\in \R^\rev{|T|},\; t\in \R_+,\;w\in \R_+,\; w\leq 1,\; w\leq z(T),\; (\bs{a_T^\top\beta})^2\leq tw$$
is feasible, where the last constraint is a rotated conic quadratic constraint and all other constraints are linear.\qed
\end{remark}

\subsection{General case}\label{sec:general}Now consider again the mixed-integer optimization \eqref{eq:MIO}
\begin{subequations}\label{eq:MIOEpi}
	\begin{align}
	\bs{y^\top y}+\min_{\rev{\bs{\beta},\bs{z},\bs{u}}}\;&-2\bs{y^\top X}\bs{\beta}+\mu\left(\bs{e^\top u}\right)+t\\
	\text{ s.t.}\;& \bs{\beta^\top}\left(\bs{X}^\top\bs{X}+\lambda \bs{I}\right)\bs{\beta}\leq t\label{eq:MIOEpi_epigraph}\\
	&\bs{e^\top z}\leq k\\
	& \bs{\beta}\leq \bs{u},\; -\bs{\beta}\leq \bs{u}\\
	&\bs{\beta}\circ (\bs{e}-\bs{z})=\bs{0}\label{eq:MIOEpi_complementary}\\
	&\bs{\beta}\in \R^\rev{p},\; \bs{z}\in \{0,1\}^\rev{p},\; \bs{u}\in \R_+^\rev{p},\; t\in \R
	\end{align}
\end{subequations}
where the nonlinear terms of the objective is moved to constraint \eqref{eq:MIOEpi_epigraph}. A direct application of \eqref{eq:valid} yields the inequality $\bs{\beta}\rev{^\top}\left(\bs{X}^\top\bs{X}+\lambda \bs{I}\right)\bs{\beta}\leq tz(P)$, which is weak and has no effect when $z(P)\geq 1$. Instead, a more effective approach is to decompose the matrix $\bs{X^\top X}+\lambda \bs{I}$ into a sum of low-dimensional rank-one matrices, and use inequality~\eqref{eq:valid} to strengthen each quadratic term in the decomposition separately, as illustrated in Example~\ref{ex:decomposition} bellow.

\begin{example}\label{ex:decomposition}
Consider the example with $p=3$ and $\bs{X^{\top}X}+\lambda \bs{I}=\begin{pmatrix} 25 &15& -5\\ 15 & 18 & 0\\ -5 & 0 & 11\end{pmatrix}.$ Then, it follows that 
\begin{align}
\bs{\beta}\rev{^\top}\left(\bs{X}^\top\bs{X}+\lambda \bs{I}\right)\bs{\beta}=&\left(5\beta_1+3\beta_2-\beta_3\right)^2+\left(3\beta_2+\beta_3\right)^2+9\beta_3^2 \notag
\end{align}
and we have the corresponding valid inequality
\begin{align}
 &\frac{\left(5\beta_1+3\beta_2-\beta_3\right)^2}{\min\{1,z_1+z_2+z_3\}}+\frac{\left(3\beta_2+\beta_3\right)^2}{\min\{1,z_2+z_3\}}+9\frac{\beta_3^2}{z_3}\leq t.\label{eq:decompExample}
\end{align}
\qed
\end{example}

The decomposition of $\bs{X^\top X}+\lambda \bs{I}$ illustrated in Example~\ref{ex:decomposition} is not unique. 
\rev{Since one does not obtain a strengthening when the denominator is one, it is important to have decomposition both rank-one and sparse.
This motivates the question on how to find a decomposition that results in the best convex relaxation,} i.e., that maximizes the left hand side of \eqref{eq:decompExample}. Specifically,
let $\Ps\subseteq 2^P$ be a subset of the power set of $P$, \rev{i.e., $$\Ps=\left\{\bs{T_1},\dots,\bs{T_m}\right\}$$ with $\bs{T_h}\subseteq P$, $h=1,\ldots,m$. For each $h$, define a matrix variable $\bs{A_h}$ whose nonzero elements correspond to the submatrix induced by $\bs{T_h}$, and }
consider the valid inequality $\phi_\Ps(\bs{z},\bs{\beta})\leq t$, where $\phi_\Ps: [0,1]^\rev{p}\times \R^\rev{p}\to \R$ is defined as 
\begin{subequations}\label{eq:optDecomp}
\begin{align}
\phi_\Ps(\bs{z},\bs{\beta}) := \max_{\bs{A_\rev{h}},\bs{R}}\;&\bs{\beta^\top R\beta}+\sum_{\rev{h=1}}^\rev{m}\frac{\bs{\beta^\top A_\rev{h}\beta}}{\min\{1,z(T_\rev{h})\}}\label{eq:optDecompObj}\\
\text{s.t.}\;&
\rev{\sum_{h=1}^m}\bs{A_\rev{h}} +\bs{R} = \bs{X}^\top\bs{X}+\lambda \bs{I}\label{eq:optDecomp_decomp}\\
&\rev{\left(A_h\right)_{ij}=0} &\hspace{-3cm}\rev{\forall h=1,\ldots,m,\; i\not\in T_h\text{ or }j\not\in T_j}\\
&\bs{A_\rev{h}}\in \Sp^\rev{P} \quad &\hspace{-3cm}\forall \rev{h=1,\ldots,m}\\
&\bs{R}\in \Sp^P,\label{eq:optDecomp_psd}
\end{align}
\end{subequations}
where strengthening  \eqref{eq:valid} is applied to each low-dimensional quadratic term $\bs{\beta^\top A_\rev{h}\beta}$.
For a fixed value of $(\bs{z},\bs{\beta})$, problem \eqref{eq:optDecomp} finds the best decomposition of the matrix $\bs{X}^\top\bs{X}+\lambda \bs{I}$ as a sum of positive semidefinite matrices $\bs{A_\rev{h}}$, $\rev{h=1,\ldots,m}$, and a remainder positive semidefinite matrix $\bs{R}$ to maximize the strengthening.  

For a given decomposition, the objective \eqref{eq:optDecompObj} is convex in $(\bs{z},\bs{\beta})$, thus $\phi_\Ps$ is a supremum of convex functions and is convex on its domain. Observe that the inclusion or omission of the empty set does not affect function $\phi_\Ps$, and we assume for simplicity that $\emptyset\in \Ps$. 

Since inequalities \eqref{eq:valid} are ideal for rank-one matrices, inequality $\phi_\Ps(\bs{z},\bs{\beta})\leq t$ is particularly strong if matrices $\bs{A_\rev{h}}$ are rank-one in optimal solutions of \eqref{eq:optDecomp}. As we now show, this is indeed the case if $\Ps$ is downward closed.

\begin{proposition}\label{prop:rank1Decomp}
If \Ps is downward closed, i.e., $V \in \Ps \implies U\in \Ps$ for all $U\subseteq V$, then there exists an optimal solution to \eqref{eq:optDecomp}  where all matrices $\bs{A_\rev{h}}$ are rank-one.
\end{proposition}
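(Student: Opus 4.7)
The plan is to start with an arbitrary optimal solution $\{\bs{A_h}\}_{h=1}^m, \bs{R}$ of the decomposition problem \eqref{eq:optDecomp} at a fixed $(\bs{z},\bs{\beta})$, and to refine it by spectrally decomposing each $\bs{A_h}$ and redistributing the rank-one pieces onto smaller supports that, thanks to the downward closure hypothesis, still lie in $\mathcal{P}$.

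Concretely, for each $h$ I would take the eigendecomposition $\bs{A_h}=\sum_{k}\lambda_{h,k}\bs{v_{h,k}v_{h,k}^\top}$ with $\lambda_{h,k}>0$. Because $\bs{A_h}$ is supported on $\bs{T_h}\times\bs{T_h}$, each eigenvector $\bs{v_{h,k}}$ is supported on some $S_{h,k}\subseteq\bs{T_h}$. Downward closure of $\mathcal{P}$ then gives $S_{h,k}\in\mathcal{P}$ for every pair $(h,k)$, so each rank-one piece $\lambda_{h,k}\bs{v_{h,k}v_{h,k}^\top}$ is admissible as a summand associated with $S_{h,k}$ (permitting, if necessary, multiple matrix variables attached to the same element of $\mathcal{P}$, i.e.\ enlarging $m$).

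The new decomposition still satisfies \eqref{eq:optDecomp_decomp} with the same remainder $\bs{R}$ and each new matrix is by construction rank-one. The change in objective is
\[
\sum_{h,k}\frac{\lambda_{h,k}\,(\bs{v_{h,k}^\top\beta})^2}{\min\{1,z(S_{h,k})\}}\;\geq\;\sum_{h,k}\frac{\lambda_{h,k}\,(\bs{v_{h,k}^\top\beta})^2}{\min\{1,z(\bs{T_h})\}}=\sum_h\frac{\bs{\beta^\top A_h\beta}}{\min\{1,z(\bs{T_h})\}},
\]
since $S_{h,k}\subseteq\bs{T_h}$ implies $z(S_{h,k})\leq z(\bs{T_h})$ and hence $\min\{1,z(S_{h,k})\}\leq\min\{1,z(\bs{T_h})\}$. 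Because the starting decomposition was optimal, the inequality is an equality, and the constructed rank-one decomposition attains the optimum of \eqref{eq:optDecomp}.

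The only delicate point is the bookkeeping: after redistribution, several rank-one summands may share the same support $S\in\mathcal{P}$. The cleanest resolution is to regard the index set as allowing repetitions (formally, replace $\mathcal{P}$ by the multiset induced by the redistribution), under which the claim that each $\bs{A_h}$ is rank-one holds verbatim. Everything else in the argument is routine: the spectral decomposition preserves positive semidefiniteness of each piece, and the monotonicity $z(S)\leq z(T)$ for $S\subseteq T$ is immediate from $\bs{z}\in[0,1]^{p}$.
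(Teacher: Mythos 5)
Your argument does not establish the proposition as stated, and the gap is exactly where the hypothesis should be doing work. Formulation \eqref{eq:optDecomp} attaches \emph{one} matrix variable $\bs{A_h}$ to each set $\bs{T_h}\in\Ps$, so ``all matrices rank-one'' means that each of these single variables has rank at most one. The eigenvectors of a PSD matrix supported on $\bs{T_h}\times\bs{T_h}$ are, in general, fully supported on $\bs{T_h}$; hence in your redistribution the sets $S_{h,k}$ typically all equal $\bs{T_h}$, every eigen-piece of $\bs{A_h}$ stays attached to $\bs{T_h}$, and merging them back --- which the original formulation forces --- returns the original, possibly rank-two-or-more, matrix. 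Your fix of ``allowing repetitions'' changes the statement: it shows only the trivial fact that any feasible solution can be rewritten as a nonnegative sum of rank-one PSD pieces indexed by a multiset, which says nothing about the solution structure of \eqref{eq:optDecomp} itself. A telling symptom is that your argument never really uses downward closure --- it would go through verbatim keeping every piece at $\bs{T_h}$ --- whereas that hypothesis is essential to the proposition.

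The paper's proof uses a different, and crucial, factorization: a Cholesky decomposition $\bs{A_T}=\bs{LL^\top}$. Its triangular structure guarantees that every column beyond the first nonzero one is supported on a \emph{proper} suffix subset $\bar T_j\subsetneq T$, and downward closure then puts $\bar T_j\in\Ps$. So whenever some $\bs{A_T}$ has rank at least two, one of its rank-one Cholesky pieces $\bs{L_jL_j^\top}$ can be moved from $T$ to the strictly smaller set $\bar T_j$, which only (weakly) increases the objective because $z(\bar T_j)\leq z(T)$. Iterating this exchange terminates --- for instance, the integer potential $\sum_h \mathrm{rank}(\bs{A_h})\,|\bs{T_h}|$ strictly decreases at each step, since the rank at $T$ drops by one while the rank at $\bar T_j$ rises by at most one and $|\bar T_j|<|T|$ --- and the limit is an optimal solution of \eqref{eq:optDecomp} in which every $\bs{A_h}$ is rank-one. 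The missing idea in your proposal is precisely this structured splitting into pieces with strictly smaller supports (rather than a generic spectral decomposition), combined with the exchange-and-terminate argument that keeps a single, now rank-one, matrix per set.
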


\begin{proof}
Let $T\in \Ps$, \rev{let $\bs{A}$ be the matrix variable associated with $T$,} and suppose $\bs{A_T}$ is not rank-one in an optimal solution to \eqref{eq:optDecomp}, also suppose for simplicity that $T=\{1,\ldots,p_0\}$ for some $p_0\leq p$, and let $\bar T_i=\{i,\ldots,p_0\}$ for $i=1,\ldots,p_0$. Since $\bs{A_T}$ is positive semidefinite, there exists a Cholesky decomposition $\bs{A_T}=\bs{LL}^{\top}$ where $\bs{L}$ is a lower triangular matrix (possibly with zeros on the diagonal if $\bs{A_T}$ is not positive definite). Let \bs{L_i} denote the $i$-the column of $\bs{L}$. 
Since  $\bs{A_T}$ is not a rank-one matrix, there exist at least two non-zero columns of $\bs{L}$. Let
$\bs{L_j}$ with $j>1$ be the second non-zero column. Then
\begin{align}
\frac{\bs{\beta_T^\top A_T\beta_T}}{\min\{1,z(T)\}}=&\frac{\bs{\beta_T^\top}\left(\sum_{i\neq j}(\bs{L_i L_i^\top})\right)\bs{\beta_T}}{\min\{1,z(T)\}}+\frac{\bs{\beta_T^\top}(\bs{L_j L_j^\top})\bs{\beta_T}}{\min\{1,z(T)\}}\notag\\
\leq&\frac{\bs{\beta_T^\top}\left(\sum_{i\neq j}(\bs{L_i L_i^\top})\right)\bs{\beta_T}}{\min\{1,z(T)\}}+\frac{\bs{\beta_T^\top}(\bs{L_j L_j^\top})\bs{\beta_T}}{\min\{1,z(\bar{T_j})\}} \cdot \label{eq:betterDecomp}
\end{align}
Finally, since $\bar{T_j}\in \Ps$, the (better) decomposition \eqref{eq:betterDecomp} is feasible for \eqref{eq:optDecomp}, and the proposition is proven. 
\end{proof}

By dropping the complementary constraints \eqref{eq:MIOEpi_complementary}, replacing the integrality constraints $\bs{z}\in \{0,1\}^\rev{p}$ with bound constraints $\bs{z}\in [0,1]^\rev{p}$, and utilizing the convex function $\phi_\Ps$ to reformulate \eqref{eq:MIOEpi_epigraph}, we obtain the convex relaxation of \eqref{eq:bestSubsetSelection}
\begin{subequations}\label{eq:rank1RelaxationOriginalSpace}
	\begin{align}
	\bs{y^\top y}+\min_{\rev{\bs{\beta},\bs{z},\bs{u}}}\;&-2\bs{y^\top X}\bs{\beta}+\mu\left(\bs{e^\top u}\right)+\phi_\Ps(\bs{z},\bs{\beta})\\
	&\bs{e^\top z}\leq k\label{eq:rank1Card}\\
	& \bs{\beta}\leq \bs{u},\; -\bs{\beta}\leq \bs{u}\label{eq:rank1Abs}\\
	&\bs{\beta}\in \R^\rev{p},\; \bs{z}\in [0,1]^\rev{p},\; \bs{u}\in \R_+^\rev{p} \label{eq:rank1Bounds}
	\end{align}
\end{subequations}
for a given $\Ps\subseteq 2^P$. In the next section, we give an interpretation of formulation \eqref{eq:rank1RelaxationOriginalSpace} as a sparsity-inducing regularization penalty. 

\subsection{Interpretation as regularization}\label{sec:regularization2} 

Note that the relaxation \eqref{eq:rank1RelaxationOriginalSpace} can be rewritten as:
\begin{align*}
\min_{\bs{\beta}\in \R^\rev{p}}\;&\|\bs{y}-\bs{X\beta}\|_2^2+\lambda\|\bs{\beta}\|_2^2+\mu\|\bs{\beta}\|_1+\rho_{\text{\RO}}(\bs{\beta};k)
\end{align*}
where \begin{align}\label{eq:regularizationPenalty}
\rho_{\text{\RO}}(\bs{\beta};k):=\min_{\bs{z}\in [0,1]^\rev{p}}\phi_\Ps(\bs{z},\bs{\beta})-\bs{\beta^\top} (\bs{X^\top X}+\lambda\bs{I})\bs{\beta}\text{ s.t. }\bs{e^{\top}z}\leq k.
\end{align}
is the (non-convex) \emph{rank-one regularization penalty}. Observe that $\rho_{\text{\RO}}(\bs{\beta};k)$ is the difference of two convex functions: the quadratic function $\bs{\beta^\top} (\bs{X^\top X}+\lambda\bs{I})\bs{\beta}$ arising from the fitness term and the Tikhonov regularization; and the projection of its convexification $\phi_\Ps(\bs{z},\bs{\beta})$ in the original space of the regression variables $\bs{\beta}$. As we now show, unlike the usual $\ell_1$ penalty, the \emph{rank-one regularization penalty} does not induce a bias when $\bs{\beta}$ is sparse.
\begin{theorem}
	If $\|\bs{\beta}\|_0\leq k$, then $\rho_{\text{\RO}}(\bs{\beta};k)=0$.
\end{theorem}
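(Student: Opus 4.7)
The plan is to exhibit an explicit $\bs{z}$ witnessing the minimum in \eqref{eq:regularizationPenalty} and to verify that the inner maximum $\phi_\Ps(\bs{z},\bs{\beta})$ collapses to the original quadratic $\bs{\beta^\top}(\bs{X^\top X}+\lambda\bs{I})\bs{\beta}$ at that point. Specifically, I would set $z_i^*=\mathbbm{1}_{\beta_i\neq 0}$ for each $i\in P$. Since $\|\bs{\beta}\|_0\leq k$, this gives $\bs{e^\top z^*}=\|\bs{\beta}\|_0\leq k$, so $\bs{z}^*$ is feasible for \eqref{eq:regularizationPenalty}. Because $\rho_{\text{\RO}}(\bs{\beta};k)$ is defined as a minimum, it suffices to show $\phi_\Ps(\bs{z}^*,\bs{\beta})\leq \bs{\beta^\top}(\bs{X^\top X}+\lambda\bs{I})\bs{\beta}$; the reverse inequality $\phi_\Ps(\bs{z},\bs{\beta})\geq \bs{\beta^\top}(\bs{X^\top X}+\lambda\bs{I})\bs{\beta}$ holds universally (take the feasible choice $\bs{R}=\bs{X^\top X}+\lambda\bs{I}$, $\bs{A_h}=\bs{0}$ in \eqref{eq:optDecomp}), yielding $\rho_{\text{\RO}}(\bs{\beta};k)\geq 0$ for free and thus equality overall.

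For the upper bound, I would analyze each term in the sum of \eqref{eq:optDecompObj} at $\bs{z}=\bs{z}^*$ according to the interaction of $T_h$ with $\mathrm{supp}(\bs{\beta})$. If $T_h\cap \mathrm{supp}(\bs{\beta})\neq\emptyset$, then $z^*(T_h)\geq 1$, so $\min\{1,z^*(T_h)\}=1$ and the $h$-th term equals $\bs{\beta^\top A_h\beta}$. If $T_h\cap\mathrm{supp}(\bs{\beta})=\emptyset$, then $\bs{\beta_{T_h}}=\bs{0}$, and since $\bs{A_h}$ is supported on $T_h\times T_h$, one has $\bs{\beta^\top A_h\beta}=0$; combined with $z^*(T_h)=0$ and the $0/0=0$ convention adopted in the paper, this term contributes $0$. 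Therefore, for every feasible decomposition,
\begin{equation*}
\bs{\beta^\top R\beta}+\sum_{h=1}^m\frac{\bs{\beta^\top A_h\beta}}{\min\{1,z^*(T_h)\}}=\bs{\beta^\top R\beta}+\sum_{h=1}^m\bs{\beta^\top A_h\beta}=\bs{\beta^\top}(\bs{X^\top X}+\lambda\bs{I})\bs{\beta},
\end{equation*}
by the decomposition equality \eqref{eq:optDecomp_decomp}. Taking the maximum over feasible decompositions preserves this common value, so $\phi_\Ps(\bs{z}^*,\bs{\beta})=\bs{\beta^\top}(\bs{X^\top X}+\lambda\bs{I})\bs{\beta}$, which gives $\rho_{\text{\RO}}(\bs{\beta};k)\leq 0$ and hence $\rho_{\text{\RO}}(\bs{\beta};k)=0$.

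The only subtle step is the handling of $T_h$ with $T_h\cap\mathrm{supp}(\bs{\beta})=\emptyset$: one needs the fact that $\bs{A_h}$'s sparsity pattern is restricted to $T_h\times T_h$ to conclude that the numerator vanishes, before appealing to the $0/0=0$ convention for the fraction. Everything else is a direct, term-by-term calculation; there is no case where the support of $\bs{\beta}$ crosses outside the union of the $T_h$'s in a problematic way, because the non-support coordinates of $\bs{\beta}$ simply zero out the corresponding quadratic contributions regardless of $\bs{R}$ and the $\bs{A_h}$.
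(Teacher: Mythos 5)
Your proposal is correct and follows essentially the same argument as the paper: the lower bound $\rho_{\text{\RO}}(\bs{\beta};k)\geq 0$ comes from comparing $\phi_\Ps$ with the original quadratic, and the upper bound comes from choosing $\bs{z}$ as the indicator of $\mathrm{supp}(\bs{\beta})$ (feasible since $\|\bs{\beta}\|_0\leq k$) and checking term by term that sets $T_h$ meeting the support have denominator $1$ while disjoint sets contribute $0$. Your two small variations---certifying $\phi_\Ps\geq\bs{\beta^\top}(\bs{X^\top X}+\lambda\bs{I})\bs{\beta}$ via the trivial feasible decomposition $\bs{R}=\bs{X^\top X}+\lambda\bs{I}$, $\bs{A_h}=\bs{0}$, and observing that at $\bs{z}^*$ every feasible decomposition yields the same value (so no appeal to an optimal decomposition is needed)---are harmless refinements of the same idea.
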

\begin{proof}
Let $(\bs{\beta},\bs{z})\in \R^\rev{p}\times [0,1]^\rev{p}$, and let $\bs{R}$ and $\bs{A_\rev{h}}$, $\rev{h=1,\ldots,m}$, correspond to an optimal solution of \eqref{eq:optDecomp}. Since 
\begin{align*}
\bs{\beta^\top} (\bs{X^\top X}+\lambda\bs{I})\bs{\beta}&=\bs{\beta^\top R\beta}+\rev{\sum_{h=1}^m}\bs{\beta^\top A_\rev{h}\beta}\\&\leq\bs{\beta^\top R\beta}+\rev{\sum_{h=1}^m}\frac{\bs{\beta^\top A_\rev{h}\beta}}{\min\{1,z(T_\rev{h})\}}=\phi_\Ps(\bs{z},\bs{\beta}),
\end{align*}
it follows that $\rho_{\text{\RO}}(\bs{\beta};k)\geq 0$ for any $\bs{\beta}\in \R^\rev{p}$. Now let $\bs{\hat \beta}$ satisfy $\|\bs{\hat \beta}\|_0\leq k$, let $\hat T=\left\{i\in P: \hat \beta_i\neq 0 \right\}$ be the support of $\bs{\hat\beta}$ and let $\bs{\hat z}$ such that $\hat z_i=\mathbbm{1}_{i\in \hat T}$ be the indicator vector of $\hat T$. By construction, $\bs{e^\top \hat z}\leq k$ and $\bs{\hat z}$ is feasible for problem~\eqref{eq:regularizationPenalty}. Moreover 
\begin{align*}
\rho_{\text{\RO}}(\bs{\hat \beta};k)\leq& \ \phi_\Ps(\bs{\hat z},\bs{\hat \beta})-\bs{\hat \beta^\top} (\bs{X^\top X}+\lambda\bs{I})\bs{\hat \beta}\\
=&\rev{\sum_{\substack {1\leq h\leq m\\ T_h\cap \hat T\neq \emptyset}}}\left(\frac{\bs{\hat \beta^\top A_\rev{h}\hat \beta}}{\min\{1,\hat z(T_\rev{h})\}}-\bs{\hat \beta^\top A_\rev{h}\hat \beta}\right)=0.
\end{align*}
Thus, $\rho_{\text{\RO}}(\bs{\hat \beta};k)=0$. 
\end{proof}

The rank-one regularization penalty $\rho_{\text{\RO}}$ can also be interpreted from an optimization perspective: note that problem \eqref{eq:optDecomp} is the \emph{separation} problem that, given $(\bs{\beta},\bs{z})\in \R^\rev{p}\times [0,1]^\rev{p}$, finds a decomposition that results in a most violated inequality after applying the rank-one strengthening. Thus, the regularization penalty $\rho_{\text{\RO}}(\bs{\beta};k)$ is precisely the violation of this inequality when $\bs{z}$ is chosen optimally. 

In \S\ref{sec:regularization} we derive an explicit form of $\rho_{\text{\RO}}(\bs{\beta};k)$ when $p=2$; Figure~\ref{fig:regularization} plots the graphs of the usual regularization penalties and $\rho_{\text{\RO}}$ for the two-dimensional case, and Figure~\ref{fig:constrained} illustrates the better sparsity inducing properties of regularization $\rho_{\text{\RO}}$. Deriving explicit forms of $\rho_{\text{\RO}}$ is cumbersome for $p\geq 3$. Fortunately, problem \eqref{eq:rank1RelaxationOriginalSpace} can be explicitly reformulated in an extended space as an SDP and tackled using off-the-shelf conic optimization solvers. 

\subsection{Extended SDP formulation}\label{sec:exactSDP}  To state the extended SDP formulation, in addition to variables $\bs{z}\in [0,1]^\rev{p}$ and $\bs{\beta}\in \R^\rev{p}$, we introduce variables $\bs{w}\in [0,1]^\rev{m}$ corresponding to terms $w_\rev{h}:=\min\{1,z(T_\rev{h})\}$ and $\bs{B}\in \R^{\rev{p}\times \rev{p}}$ corresponding to terms $B_{ij}=\beta_i\beta_j$. 
\begin{theorem}\label{theo:sdp}
	Problem \eqref{eq:rank1RelaxationOriginalSpace} is equivalent to the SDP
	\begin{subequations}\label{eq:sdp}
		\begin{align}
		\bs{y^\top y}+\min\;& -2\bs{y^\top X\beta} +\bs{e^\top u} +\langle \bs{X}^\top\bs{X}+\lambda \bs{I}, \bs{B}\rangle\\
		{\normalfont \text{s.t.}}\;&\bs{e^\top z}\leq k \label{eq:sdp_z}\\
		& \bs{\beta}\leq \bs{u},\; -\bs{\beta}\leq \bs{u}\label{eq:sdp_u}\\
		& w_{\rev{h}}\leq \bs{e_{T_\rev{h}}^\top z_{T_\rev{h}}}\; \quad\quad\quad\quad\quad\;\;\;  \forall \rev{h=1,\ldots,m}\label{eq:sdp_wT}\\
		& w_{\rev{h}}\bs{B_{T_\rev{h}}}-\bs{\beta_{T_\rev{h}}}\bs{\beta_{T_\rev{h}}^\top}\in \Sp^{T_\rev{h}} \quad\;  \forall \rev{h=1,\ldots,m}\label{eq:sdp_spT}\\
		&\bs{B}- \bs{\beta\beta^\top}\in \Sp^P \label{eq:sdp_sp}\\
		&\bs{\beta}\in \R^\rev{p},\; \bs{z}\in [0,1]^\rev{p},\; \bs{u}\in \R_+^\rev{p},\; \bs{w}\in [0,1]^\rev{m},\; \bs{B}\in \R^{\rev{p}\times \rev{p}}.
		\end{align}
	\end{subequations}
\end{theorem}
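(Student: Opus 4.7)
The plan is to establish equivalence by fixing $(\bs\beta,\bs z,\bs u)$ common to both formulations and showing that the optimal value of \eqref{eq:sdp} over the remaining lifted variables $(\bs B,\bs w)$ equals $\phi_\Ps(\bs z,\bs\beta)$. Since the linear parts of the two objectives and the constraints \eqref{eq:sdp_z}--\eqref{eq:sdp_u} already match \eqref{eq:rank1Card}--\eqref{eq:rank1Bounds}, this identification of inner minima yields the theorem. The work will therefore be entirely concentrated on recovering $\phi_\Ps(\bs z,\bs\beta)$ as the optimal value of a parametric SDP in $(\bs B,\bs w)$.

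First I would reduce $\bs w$ to its upper envelope $w_h=\min\{1,z(T_h)\}$. The variable $w_h$ appears only in \eqref{eq:sdp_wT} and \eqref{eq:sdp_spT}, and the latter constraint $w_h\bs B_{T_h}\succeq\bs\beta_{T_h}\bs\beta_{T_h}^\top$ relaxes monotonically as $w_h$ grows, because $\bs B_{T_h}\succeq\bs\beta_{T_h}\bs\beta_{T_h}^\top\succeq 0$ by \eqref{eq:sdp_sp}. Since $\bs X^\top\bs X+\lambda\bs I\succeq 0$, the objective $\langle\bs X^\top\bs X+\lambda\bs I,\bs B\rangle$ is monotone in $\bs B$ in the PSD order, so enlarging the feasible set of $\bs B$ can only decrease the minimum. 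Hence raising $w_h$ to its cap is without loss.

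With $\bs w$ thus fixed, the remaining problem is a linear SDP in $\bs B$. I would then take its Lagrangian dual with PSD multipliers $\bs R\in\Sp^P$ attached to \eqref{eq:sdp_sp} and $\tilde{\bs A}_h\in\Sp^{T_h}$ attached to \eqref{eq:sdp_spT}. Rearranging the Lagrangian gives
\[
\mathcal L=\langle\bs B,\;\bs X^\top\bs X+\lambda\bs I-\bs R-\textstyle\sum_h w_h\tilde{\bs A}_h\rangle+\bs\beta^\top\bs R\bs\beta+\sum_h\bs\beta_{T_h}^\top\tilde{\bs A}_h\bs\beta_{T_h},
\]
where $\tilde{\bs A}_h$ is regarded as zero-padded into $\Sp^P$. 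Stationarity in the unconstrained $\bs B$ yields the dual constraint $\bs R+\sum_h w_h\tilde{\bs A}_h=\bs X^\top\bs X+\lambda\bs I$, and then the substitution $\bs A_h=w_h\tilde{\bs A}_h$ turns the dual into exactly the decomposition problem \eqref{eq:optDecomp} defining $\phi_\Ps(\bs z,\bs\beta)$. Strong duality, which supplies the equality between primal and dual, follows from Slater's condition: for any small $\varepsilon>0$ the point $\bs B=\bs\beta\bs\beta^\top+\varepsilon\bs I$ strictly satisfies \eqref{eq:sdp_sp} and also \eqref{eq:sdp_spT} whenever $w_h>0$.

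The main obstacle I anticipate is the degenerate case $z(T_h)=0$, which forces $w_h=0$, invalidates the reciprocal change of variables $\bs A_h=w_h\tilde{\bs A}_h$, and fails the naive Slater point. Here constraint \eqref{eq:sdp_spT} forces $\bs\beta_{T_h}=0$, and in parallel the $0/0=0$ convention in \eqref{eq:optDecompObj} makes the $h$-th term of $\phi_\Ps$ vanish automatically, so both sides degenerate consistently. I would resolve this by partitioning $\{1,\dots,m\}$ according to the sign of $w_h$, dropping the $w_h=0$ indices after the above observation, and applying the duality argument on the remaining indices; alternatively, a limiting argument $w_h\downarrow 0$ achieves the same end.
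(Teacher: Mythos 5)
Your proposal follows essentially the same route as the paper's proof: fix the outer variables, apply conic duality to the inner SDP in $\bs{B}$ with multipliers $\bs{R}\in \Sp^P$ and $\tilde{\bs{A}}_h\in \Sp^{T_h}$, and recover \eqref{eq:optDecomp} via the substitution $\bs{A_h}=w_h\tilde{\bs{A}}_h$, with $w_h=\min\{1,z(T_h)\}$ at optimality (the paper dualizes first and caps $\bs{w}$ afterwards; you cap first, which makes no difference). Your explicit treatment of the degenerate indices with $z(T_h)=0$ is, if anything, more careful than the paper's.

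One step as written is wrong, though easily repaired: $\bs{B}=\bs{\beta\beta^\top}+\varepsilon\bs{I}$ with $\varepsilon$ small is not a Slater point for \eqref{eq:sdp_spT}. Indeed,
\begin{equation*}
w_h\bs{B_{T_h}}-\bs{\beta_{T_h}}\bs{\beta_{T_h}^\top}=-(1-w_h)\,\bs{\beta_{T_h}}\bs{\beta_{T_h}^\top}+w_h\varepsilon\bs{I},
\end{equation*}
which has the eigenvalue $w_h\varepsilon-(1-w_h)\|\bs{\beta_{T_h}}\|_2^2$ along $\bs{\beta_{T_h}}$; this is negative for small $\varepsilon$ whenever $0<w_h<1$ and $\bs{\beta_{T_h}}\neq \bs{0}$, so your point is not even feasible there. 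Replace it by $\bs{B}=t\bs{I}$ with $t>\max\bigl\{\|\bs{\beta}\|_2^2,\ \max_{h:\,w_h>0}\|\bs{\beta_{T_h}}\|_2^2/w_h\bigr\}$ (or by $\bs{B}=w_{\min}^{-1}\bs{\beta\beta^\top}+\varepsilon\bs{I}$ with $w_{\min}=\min_{h:\,w_h>0}w_h$), which is strictly feasible for \eqref{eq:sdp_sp} and for every constraint \eqref{eq:sdp_spT} with $w_h>0$. With that correction the strong-duality step, and hence your argument, goes through as in the paper.
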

Observe that \eqref{eq:sdp} is indeed an SDP, as
\begin{align*}
w_{\rev{h}}\bs{B_{T_\rev{h}}}-\bs{\beta_{T_\rev{h}}}\bs{\beta_{T_\rev{h}}^\top}\in \Sp^{T_\rev{h}} \Leftrightarrow \ \begin{pmatrix}w_\rev{h} & \bs{\beta_{T_\rev{h}}^\top} \\ \bs{\beta_{T_\rev{h}}} & \bs{B_{T_\rev{h}}}\end{pmatrix}\succeq 0;
\end{align*}
thus constraints \eqref{eq:sdp_spT} and \eqref{eq:sdp_sp} are indeed SDP-representable and the remaining constraints and objective are linear.
\begin{proof}[Proof of Theorem~\ref{theo:sdp}] 
	It is easy to check that \eqref{eq:sdp} is strictly feasible (set $\bs{\beta}=0$, $\bs{z}=\bs{e}$, $\bs{w}>\bs{0}$ and $\bs{B}=\bs{I}$). Adding surplus variables $\bs{\Gamma}$, $\bs{\Gamma_\rev{h}}$ \rev{for $h=1,\ldots,m$},  write \eqref{eq:sdp} as 
	\small
	\begin{align}
	\bs{y^\top y}+\min_{\small (\bs{\beta},\bs{z},\bs{u},\bs{w})\in C}\,\,\Big\{-2\bs{y^\top X\beta} +\bs{e^\top u} +\min_{\bs{B},\bs{\Gamma_\rev{h}},\bs{\Gamma}}\;&\langle \bs{X}^\top\bs{X}+\lambda \bs{I}, \bs{B}\rangle\Big\}\notag\\
	\text{s.t.}\;
	& w_{\rev{h}}\bs{B_{T_\rev{h}}}-\bs{\Gamma_\rev{h}}=\bs{\beta_{T_\rev{h}}}\bs{\beta_{T_\rev{h}}^\top} \;\;  \forall \rev{h} \tag{$\bs{ A_\rev{h}}$}\\
	& \bs{B}-\bs{\Gamma}= \bs{\beta\beta^\top} \tag{$\bs{R}$}\\
	&\bs{\Gamma_\rev{h}}\in \Sp^{T_\rev{h}} \;\; \quad\quad\quad\quad\quad\;\; \forall \rev{h}\notag\\
	&\bs{\Gamma}\in \Sp^P \notag\\
	&\bs{B}\in \R^{\rev{p}\times \rev{p}},\notag
	\end{align}\normalsize
	where $C=\left\{\bs{\beta}\in \R^\rev{p}, \bs{z}\in [0,1]^\rev{p}, \bs{u}\in \R_+^\rev{p}, \bs{w}\in [0,1]^\rev{m}: \eqref{eq:sdp_z},\eqref{eq:sdp_u},\eqref{eq:sdp_wT}\right\}$. Using conic duality for the inner minimization problem, we find the dual
	\begin{align*}
	\bs{y^\top y}+\min_{(\bs{\beta},\bs{z},\bs{u},\bs{w})\in C}\;\Big\{-2\bs{y^\top X\beta} +\bs{e^\top u}+\max_{\bs{ A_\rev{h}},\bs{R}}\;&\langle \bs{\beta\beta^\top}, \bs{R}\rangle+\rev{\sum_{h=1}^m}\langle \bs{\beta\beta^\top},\bs{ A_\rev{h}}\rangle\Big\}\\
	\text{s.t.}\;& \rev{\sum_{j=1}^m} w_\rev{h} \bs{ A_\rev{h}} + \bs{R} = \bs{X}^\top\bs{X}+\lambda \bs{I} \\ 
	&\rev{(A_h)_{ij}=0} \qquad\rev{\text{for }i\not\in T_h \text{ or }j\not\in T_h}\\
	& \bs{ A_\rev{h}}\in \Sp^P \quad \forall T\in \Ps \\ 
	& \bs{R}\in \Sp^P.  
	\end{align*}
	After substituting $\bs{\bar A_\rev{h}}=w_\rev{h}\bs{A_\rev{h}}$ and noting that there exists an optimal solution with $w_\rev{h}=\min\{1,z(T_\rev{h})\}$, we obtain formulation \eqref{eq:optDecomp}.
\end{proof}

Note that if $\Ps=\{\emptyset\}$, there is no strengthening and \eqref{eq:sdp} is equivalent to \texttt{elastic net} ($\lambda,\mu>0$), \texttt{lasso} ($\lambda=0$, $\mu>0$), \texttt{ridge regression} ($\lambda>0$, $\mu=0$) or \texttt{ordinary least squares} ($\lambda=\mu=0$). 
As $|\Ps|$ increases, the quality of the conic relaxation \eqref{eq:sdp} for the non-convex $\ell_0$-problem \eqref{eq:bestSubsetSelection} improves, but the computational burden required to solve the resulting SDP also increases. In particular, the \emph{full} rank-one strengthening with $\Ps=2^P$ requires $2^{\rev{p}}$ semidefinite constraints and is impractical. Proposition~\ref{prop:rank1Decomp} suggests using down-monotone sets $\Ps$ with limited size 
\begin{subequations}\label{eq:sdpr}
	\begin{align}
	\bs{y^\top y}+\min\;& -2\bs{y^\top X\beta} +\bs{e^\top u} +\langle \bs{X}^\top\bs{X}+\lambda \bs{I}, \bs{B}\rangle\\
	\text{s.t.}\;&\bs{e^\top z}\leq k \\
	& \bs{\beta}\leq \bs{u},\; -\bs{\beta}\leq \bs{u}\\
(\sdp{r})  \ \ \ \ \ \ \ \ \ \ \ \ \ \ \ \ \ 	& 0\leq w_{T}\leq \min\{1,\bs{e_T^\top z_T}\} \quad  &\hspace{-2cm}\forall T: |T|\leq r\label{eq:sdpr_T1}\;\\
	&\; w_{T}\bs{B_T}-\bs{\beta_T\beta_T^\top}\in \Sp^T\quad  &\hspace{-2cm}\forall T: |T|\leq r \label{eq:sdpr_T2}\;\\
	&\bs{B}- \bs{\beta\beta^\top}\in \Sp^P \label{eq:sdpr_P}\\
	&\bs{\beta}\in \R^\rev{p},\; \bs{z}\in [0,1]^\rev{p},\; \bs{u}\in \R_+^\rev{p},\; \bs{B}\in \R^{\rev{p}\times \rev{p}},\bs{w}\in \R^m
	\end{align}
	\end{subequations}
for some $r\in \Z_+$ -- \rev{note that in the above formulation, $w_T$ is a scalar corresponding to the $T$-th coordinate of the $m$-dimensional vector $\bs{w}$}. In fact, if $r=1$, then \sdp{1} reduces to the formulation of the optimal \texttt{perspective relaxation} proposed in \cite{dong2015regularization}, which is equivalent to using \MC\ regularization. Our computations experiments show that whereas \sdp{1} may be a weak convex relaxation for problems with low diagonal dominance, \sdp{2} achieves excellent relaxation bounds even for the case of low diagonal-dominance within reasonable compute times. \rev{For clarity, we give the explicit form of the case \sdp{2}:
\begin{subequations}\label{eq:sdp2}
	\begin{align}
	\bs{y^\top y}+\min\;& -2\bs{y^\top X\beta} +\bs{e^\top u} +\langle \bs{X}^\top\bs{X}+\lambda \bs{I}, \bs{B}\rangle\\
	\text{s.t.}\;&\bs{e^\top z}\leq k \\
	& \bs{\beta}\leq \bs{u},\; -\bs{\beta}\leq \bs{u}\\
	& \begin{pmatrix}z_i& \beta_i\\
	\beta_i & B_{ii}
	\end{pmatrix}\succeq 0 &\hspace{-0.2cm}\forall i=1,\dots,p\label{eq:sdp2_T1}\\
	(\sdp{2})  \ \ \ \ \ \ \ \ \ \ \ \ \ \ \ \ \ & 0\leq w_{ij}\leq \min\{1,z_i+z_j\} \quad  &\hspace{-0.2cm}\forall i<j\\
	&\begin{pmatrix}w_{ij}& \beta_i & \beta_j\\
	\beta_i & B_{ii}&B_{ij}\\
	\beta_j & B_{ij}&B_{jj}\end{pmatrix}\succeq 0\quad  &\hspace{-0.2cm}\forall i<j \label{eq:sdp2_T2}\\
	&\begin{pmatrix} 1 & \bs{\beta^\top}\\
	\bs{\beta}& \bs{B}\end{pmatrix}\succeq 0 \label{eq:sdp2_P}\\
	&\bs{\beta}\in \R^p,\; \bs{z}\in [0,1]^p,\; \bs{u}\in \R_+^p,\; \bs{B}\in \R^{p\times p}.
	\end{align}
\end{subequations}
}

\section{Regularization for the two-dimensional case}\label{sec:regularization}
To better understand the properties of the proposed conic relaxations, in this section, we study them from a regularization perspective. 
 Consider formulation \eqref{eq:rank1Card} in Lagrangean form with multiplier $\kappa$:
 	\begin{subequations}\label{eq:lagrangean}
 	\begin{align}	
 \bs{y^\top y}+\min\;&-2\bs{y^\top X}\bs{\beta}+\bs{e^\top u}+\phi_\Ps(\bs{z},\bs{\beta})+\kappa \bs{e^\top z}\\
 & \bs{\beta}\leq \bs{u},\; -\bs{\beta}\leq \bs{u}\\
 &\bs{\beta}\in \R^P,\; \bs{z}\in [0,1]^P,\; \bs{u}\in \R_+^P, 
 \end{align}
 \end{subequations}
 where $p=2$, and 
\begin{equation}\label{eq:assumptionP2}\bs{X}^\top \bs{X}+\lambda \bs{I}=\begin{pmatrix}1+\delta_1 & 1\\ 1 & 1+\delta_2\end{pmatrix} \cdot
\end{equation}
Observe that assumption \eqref{eq:assumptionP2} is without loss of generality, provided that $\bs{X}^\top \bs{X}$ is not diagonal: given a two-dimensional convex quadratic function  $a_1\beta_1^2 +2a_{12}\beta_1\beta_2+a_2\beta_2^2$ (with $a_{12} \neq 0$), the substitution $\bar \beta_1=\alpha \beta_1$ and $\bar \beta_2=(a_{12}/\alpha) \beta_2$ with $|a_{12}|/a_2\leq \alpha\leq a_1$ yields a quadratic form satisfying \eqref{eq:assumptionP2}. Also note that we are using the Lagrangean form instead of the cardinality constrained form given in \eqref{eq:regularizationPenalty} for simplicity; however, since $\phi_\Ps(\bs{z},\bs{\beta})$ is convex in $\bs{z}$, there exists a value of $\kappa$ such that both forms are equivalent, i.e., result in the same optimal solutions $\bs{\hat \beta}$ for the regression problem, and the objective values differ by the constant $\kappa \cdot k$. 

If $\Ps=\{\emptyset, \{1\},\{2\}\}$, then \eqref{eq:lagrangean} reduces to a perspective strengthening of the form 
\begin{equation}\label{eq:perspRegularization}\bs{y}'\bs{y}+\min_{\bs{z}\in [0,1]^2,\bs{\beta}\in \R^2,}\; -2\bs{y'}\bs{X\beta}+\left(\beta_1+\beta_2\right)^2+\delta_1\frac{\beta_1^2}{z_1}+\delta_2\frac{\beta_2^2}{z_2}+\mu\|\bs{\beta}\|_1+\kappa\|\bs{z}\|_1.\end{equation} The links between \eqref{eq:perspRegularization} and regularization were studied\footnote{The case with $\mu=0$ is explicitly considered in \citet{dong2015regularization}, but the results extend straightforwardly to the case with  $\mu>0$ . The results presented here differ slightly from those in \cite{dong2015regularization} to account for a different scaling in the objective function.} in \cite{dong2015regularization}. 
\begin{proposition}[\citet{dong2015regularization}]\label{prop:regularizationPerspective}
	Problem \eqref{eq:perspRegularization} is equivalent to the regularization problem
	$$\min_{\bs{\beta}\in \R^2}\;\|\bs{y}-\bs{X\beta}\|_2^2+\lambda\|\bs{\beta}\|_2^2+\mu\|\bs{\beta}\|_1+\rho_{\text{\MC}}(\bs{\beta};\kappa,\bs{\delta})$$
	where 
	$$\rho_{\text{\MC}}(\bs{\beta};\kappa,\bs{\delta})=\begin{cases}\sum_{i=1}^2\left(2\sqrt{\kappa\delta_i}|\beta_i|-\delta_i\beta_i^2\right) & \text{if }\delta_i \beta_i^2\leq \kappa,\; i=1,2\\ 
	\kappa +2\sqrt{\kappa\delta_i}|\beta_i|-\delta_i\beta_i^2 & \text{if }\delta_i \beta_i^2\leq \kappa\text{ and } \delta_j \beta_j^2> \kappa\\
	2\kappa & \text{if }\delta_i \beta_i^2> \kappa,\; i=1,2.\end{cases}$$
\end{proposition}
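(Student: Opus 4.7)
The plan is to reduce the equivalence to a pair of independent one-dimensional minimizations over $z_1$ and $z_2$, and then evaluate each of these in closed form, matching the three cases in the definition of $\rho_{\text{\MC}}$.

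First, I would use the decomposition implicit in \eqref{eq:assumptionP2},
\[
\bs{\beta}^\top(\bs{X}^\top\bs{X}+\lambda\bs{I})\bs{\beta}=(\beta_1+\beta_2)^2+\delta_1\beta_1^2+\delta_2\beta_2^2,
\]
to rewrite the objective of \eqref{eq:perspRegularization} by adding and subtracting $\delta_1\beta_1^2+\delta_2\beta_2^2$. This produces the ``full'' squared-error and Tikhonov terms $\|\bs{y}-\bs{X\beta}\|_2^2+\lambda\|\bs{\beta}\|_2^2$, the $\ell_1$ term $\mu\|\bs{\beta}\|_1$, and a residual term
\[
\sum_{i=1}^{2}\!\left(\delta_i\frac{\beta_i^2}{z_i}+\kappa z_i-\delta_i\beta_i^2\right)
\]
that is the only place where $\bs{z}$ enters. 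Since the only coupling between $\bs{\beta}$ and $\bs{z}$ in \eqref{eq:perspRegularization} appears through these perspective terms and they are separable across $i$, the minimization over $\bs{z}\in[0,1]^2$ splits into the two scalar problems
\[
g_i(\beta_i):=\min_{z_i\in[0,1]}\;\delta_i\frac{\beta_i^2}{z_i}+\kappa z_i-\delta_i\beta_i^2,\qquad i=1,2.
\]
Once I show $g_1(\beta_1)+g_2(\beta_2)=\rho_{\text{\MC}}(\bs{\beta};\kappa,\bs{\delta})$, the equivalence of \eqref{eq:perspRegularization} with the claimed regularization problem follows directly.

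Next I would solve each $g_i$ in closed form. The function $h(z)=\delta_i\beta_i^2/z+\kappa z$ is convex on $(0,1]$ with unique unconstrained minimizer $z_i^{\star}=|\beta_i|\sqrt{\delta_i/\kappa}$, obtained from the stationarity condition $-\delta_i\beta_i^2/z^2+\kappa=0$. Two cases arise: if $z_i^{\star}\le 1$, equivalently $\delta_i\beta_i^2\le\kappa$, then evaluating at $z_i^{\star}$ gives the minimum value $2\sqrt{\kappa\delta_i}\,|\beta_i|-\delta_i\beta_i^2$; otherwise $z_i^{\star}>1$ and by convexity the minimizer on $[0,1]$ is $z_i=1$, yielding the value $\delta_i\beta_i^2+\kappa-\delta_i\beta_i^2=\kappa$. (The boundary $z_i=0$ only arises when $\beta_i=0$, in which case both formulas give $0$, consistent with the convention $s/0=0$ for $s=0$.) Summing these two pieces over $i=1,2$ and distinguishing the four possible combinations of the cases collapses exactly into the three-case formula for $\rho_{\text{\MC}}$: when both $\delta_i\beta_i^2\le\kappa$, both pieces contribute $2\sqrt{\kappa\delta_i}|\beta_i|-\delta_i\beta_i^2$; when exactly one inequality is violated, the violated index contributes $\kappa$ and the other contributes the ``small'' expression; when both are violated, the total is $2\kappa$.

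The argument is routine once the decomposition and separability are in place; the only minor care point is the boundary case $z_i^{\star}=1$, which sits on both branches and gives the same value $2\sqrt{\kappa\delta_i}|\beta_i|-\delta_i\beta_i^2=\kappa$ on the shared boundary $\delta_i\beta_i^2=\kappa$, so the piecewise function is well-defined. Because the mapping in each direction preserves the optimal $\bs{\beta}$ and changes the objective only by terms independent of $\bs{\beta}$, the two optimization problems are equivalent in the sense stated.
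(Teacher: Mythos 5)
Your proposal is correct: the decomposition via \eqref{eq:assumptionP2}, the observation that $\bs{z}$ enters only through the separable terms $\delta_i\beta_i^2/z_i+\kappa z_i$, and the closed-form scalar minimization over $z_i\in[0,1]$ together give exactly the stated $\rho_{\text{\MC}}$. The paper itself does not prove this proposition (it is quoted from \cite{dong2015regularization}), but your computation coincides with the one the paper performs in Case~2 ($z_1+z_2>1$) of its proof of Proposition~\ref{prop:regularizationRank1}, where the optimal $z_i^*=\min\{\sqrt{\delta_i/\kappa}\,|\beta_i|,1\}$ yields precisely this penalty, so your route is essentially the paper's own technique.
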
 
Regularization $\rho_{\text{\MC}}$ is non-convex and separable. Moreover, as pointed out in \cite{dong2015regularization}, the regularization given in Proposition~\ref{prop:regularizationPerspective} is the same as the Minimax Concave Penalty given in \cite{zhang2010nearly}; and, if $\lambda=\delta_1=\delta_2$, then the regularization given in Proposition~\ref{prop:regularizationPerspective} reduces to the reverse Huber penalty derived in \cite{pilanci2015sparse}. Observe that the regularization function $\rho_{\text{\MC}}$ is highly dependent on the diagonal dominance $\bs{\delta}$: specifically, in the low diagonal dominance setting with $\bs{\delta}=\bs{0}$, we find that $\rho_{\text{\MC}}(\bs{\beta};\kappa,\bs{0})=0$. 

We now consider conic formulation \eqref{eq:lagrangean} for the case $\Ps=\{\emptyset, \{1\},\{2\}, \{1,2\}\}$, corresponding to the full rank-one strengthening:
\begin{equation}\label{eq:rank1Regularization}\bs{y^\top y}+\min_{\bs{z}\in [0,1]^2,\bs{\beta}\in \R^2,}\! \! -2\bs{y^\top X\beta}+\frac{\left(\beta_1+\beta_2\right)^2}{\min\{1,z_1+z_2\}}+\delta_1\frac{\beta_1^2}{z_1}+\delta_2\frac{\beta_2^2}{z_2}+\mu\|\bs{\beta}\|_1+\kappa\|\bs{z}\|_1.\end{equation}

\begin{proposition}\label{prop:regularizationRank1}
	Problem \eqref{eq:rank1Regularization} is equivalent to the regularization problem
	$$\min_{\bs{\beta}\in \R^2}\;\|\bs{y}-\bs{X\beta}\|_2^2+\lambda\|\bs{\beta}\|_2^2+\mu\|\bs{\beta}\|_1+\rho_{\RO}(\bs{\beta};\kappa,\bs{\delta})$$
	where \small
	$$\rho_{\RO}(\bs{\beta};\kappa,\bs{\delta})=\begin{cases}2\sqrt{\kappa}\sqrt{\bs{\beta}'(\bs{X}^\top\bs{X}+\lambda \bs{I})\bs{\beta}+2\sqrt{\delta_1\delta_2}|\beta_1\beta_2|}-\bs{\beta}'(\bs{X}^\top\bs{X}+\lambda \bs{I})\bs{\beta}\\ & \hspace{-5cm}\hfill\text{if }\bs{\beta}'(\bs{X}^\top\bs{X}+\lambda \bs{I})\bs{\beta}+2\sqrt{\delta_1\delta_2}|\beta_1\beta_2|< \kappa\\
	\kappa +2\sqrt{\delta_1\delta_2}|\beta_1\beta_2| \\ &\hspace{-9cm}\hfill \text{if }\left(\sqrt{\delta_1}|\beta_1|+\sqrt{\delta_2}|\beta_2|\right)^2\leq \kappa \leq\bs{\beta}'(\bs{X}^\top\bs{X}+\lambda \bs{I})\bs{\beta}+2\sqrt{\delta_1\delta_2}|\beta_1\beta_2|\\
	\sum_{i=1}^2 \left(2\sqrt{\kappa\delta_i}|\beta_i|-\delta_i\beta_i^2\right)\\&\hspace{-7cm}\hfill\text{if }\left(\sqrt{\delta_1}|\beta_1|+\sqrt{\delta_2}|\beta_2|\right)^2> \kappa\text{ \& }\delta_i\beta_i^2\leq \kappa,\; i=1,2\\
	\kappa+\sqrt{\kappa\delta_i}|\beta_i|-\delta_i\beta_i^2 &\hfill\hspace{-2cm}\text{if }\delta_i\beta_i^2\leq \kappa \text{ \& }\delta_j\beta_j^2>\kappa\\
	2\kappa &\hspace{-5cm}\hfill \text{if }\delta_i\beta_i^2> \kappa,\; i=1,2.
	\end{cases}$$\normalsize
\end{proposition}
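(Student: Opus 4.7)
The key algebraic identity is
\[
\bs{\beta}^\top(\bs{X}^\top\bs{X}+\lambda \bs{I})\bs{\beta} \;=\; (\beta_1+\beta_2)^2+\delta_1\beta_1^2+\delta_2\beta_2^2,
\]
so the bracketed terms in the objective of \eqref{eq:rank1Regularization}, after adding $\bs{y^\top y}$ and subtracting/adding this quadratic form, yield $\|\bs{y}-\bs{X\beta}\|_2^2+\lambda\|\bs{\beta}\|_2^2$. The remaining $\bs{z}$-dependent expression is exactly a candidate for $\rho_{\RO}(\bs{\beta};\kappa,\bs{\delta})$, namely
\[
\rho_{\RO}(\bs{\beta};\kappa,\bs{\delta})\;=\;\min_{\bs{z}\in[0,1]^2}\!\left[\frac{(\beta_1+\beta_2)^2}{\min\{1,z_1+z_2\}}+\delta_1\frac{\beta_1^2}{z_1}+\delta_2\frac{\beta_2^2}{z_2}+\kappa(z_1+z_2)\right] - \bs{\beta}^\top(\bs{X}^\top\bs{X}+\lambda\bs{I})\bs{\beta}.
\]
So the whole task reduces to evaluating the inner minimization in closed form.

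The plan is to split the inner minimization into two regions: Region $A$ where $z_1+z_2\ge 1$ (so $\min\{1,z_1+z_2\}=1$) and Region $B$ where $z_1+z_2<1$. In Region $A$, the objective separates as $(\beta_1+\beta_2)^2+\sum_i\!\big(\delta_i\beta_i^2/z_i+\kappa z_i\big)$; minimizing each summand over $z_i\in(0,1]$ gives the unconstrained optimum $z_i^\star=\sqrt{\delta_i/\kappa}|\beta_i|$ if $\delta_i\beta_i^2\le\kappa$ and $z_i=1$ otherwise, and one must also check the $z_1+z_2\ge 1$ constraint---if it is violated, push onto the face $z_1+z_2=1$ and use Cauchy--Schwarz to minimize $\delta_1\beta_1^2/z_1+\delta_2\beta_2^2/(1-z_1)$, yielding $(\sqrt{\delta_1}|\beta_1|+\sqrt{\delta_2}|\beta_2|)^2+\kappa$. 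In Region $B$, for fixed $s:=z_1+z_2$, Cauchy--Schwarz gives $\min_{z_1+z_2=s}\sum_i\delta_i\beta_i^2/z_i=(\sqrt{\delta_1}|\beta_1|+\sqrt{\delta_2}|\beta_2|)^2/s$; combined with $(\beta_1+\beta_2)^2/s$ and $\kappa s$, the one-variable minimization in $s$ of $C/s+\kappa s$ with $C:=\bs{\beta}^\top(\bs{X}^\top\bs{X}+\lambda\bs{I})\bs{\beta}+2\sqrt{\delta_1\delta_2}|\beta_1\beta_2|$ yields $2\sqrt{\kappa C}$ at $s^\star=\sqrt{C/\kappa}$, which is attained inside $(0,1)$ precisely when $C<\kappa$.

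The four/five sub-cases in the statement then emerge by comparing the two regional optima:
(i) if $C<\kappa$, Region $B$ is feasible and strictly better than Region $A$ (by AM--GM: $2\sqrt{\kappa C}< C+\kappa$), producing the first branch;
(ii) if $(\sqrt{\delta_1}|\beta_1|+\sqrt{\delta_2}|\beta_2|)^2\le\kappa\le C$, Region $A$'s unconstrained optimum is infeasible so the minimum is attained on $z_1+z_2=1$ with value $C+\kappa$, producing the second branch;
(iii) if $(\sqrt{\delta_1}|\beta_1|+\sqrt{\delta_2}|\beta_2|)^2>\kappa$ and both $\delta_i\beta_i^2\le\kappa$, Region $A$'s unconstrained interior optimum $z_i^\star\in(0,1]$ is feasible and gives the separable MC-type expression;
(iv) if $\delta_i\beta_i^2\le\kappa$ and $\delta_j\beta_j^2>\kappa$, set $z_j=1$, $z_i=z_i^\star$;
(v) if both $\delta_i\beta_i^2>\kappa$, set $z_1=z_2=1$. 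In each sub-case, subtracting $\bs{\beta}^\top(\bs{X}^\top\bs{X}+\lambda\bs{I})\bs{\beta}$ from the minimum produces the stated formula for $\rho_{\RO}$.

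The main obstacle is the case analysis: one must carefully verify feasibility of the candidate optimizers against both the box constraints $z_i\in[0,1]$ and the regional split at $z_1+z_2=1$, and check continuity/consistency of the two regional objectives along $z_1+z_2=1$ to ensure the overall minimum is well-defined. The two Cauchy--Schwarz applications---one to fold $\delta_1\beta_1^2/z_1+\delta_2\beta_2^2/z_2$ into $(\sqrt{\delta_1}|\beta_1|+\sqrt{\delta_2}|\beta_2|)^2/s$ given $s$ fixed, and one to handle the face optimization in Region $A$---do the bulk of the work, while the remainder is routine one-variable convex optimization of the form $\min_{s}\{c/s+\kappa s\}$ clipped to $[0,1]$.
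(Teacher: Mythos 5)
Your plan is correct and follows essentially the same route as the paper: both eliminate $\bs{z}$ in closed form by a case analysis on $z_1+z_2$ relative to $1$ (the paper treats $z_1+z_2<1$, $=1$, $>1$ via stationarity conditions, whereas you fold the perspective terms with Cauchy--Schwarz, minimize $C/s+\kappa s$ over $s=z_1+z_2$, and compare the regional optima via AM--GM), arriving at the same branch structure. One remark: your case (iv) correctly yields $\kappa+2\sqrt{\kappa\delta_i}|\beta_i|-\delta_i\beta_i^2$, consistent with the mixed branch of Proposition~\ref{prop:regularizationPerspective} (as it must be, since $\min\{1,z_1+z_2\}=1$ in that regime); the missing factor of $2$ in the stated fourth branch is a typo that also appears in the final display of the paper's own derivation.
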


Observe that, unlike $\rho_{\text{\MC}}$, the function $\rho_{\RO}$ is not separable in $\beta_1$ and $\beta_2$ and does not vanish when $\bs{\delta}=0$: indeed, for $\bs{\delta}=0$ we find that \small
$$\rho_{\RO}(\bs{\beta};\kappa,\bs{0})=\begin{cases}2\sqrt{\kappa}\sqrt{\bs{\beta}'(\bs{X}^\top\bs{X}+\lambda \bs{I})\bs{\beta}}-\bs{\beta}'(\bs{X}^\top\bs{X}+\lambda \bs{I})\bs{\beta} & \text{if }\bs{\beta}'(\bs{X}^\top\bs{X}+\lambda \bs{I})\bs{\beta}< \kappa\\
\kappa & \text{if }0\leq \kappa \leq\bs{\beta}'(\bs{X}^\top\bs{X}+\lambda \bs{I})\bs{\beta}.
\end{cases}$$ 
\normalsize

\begin{proof}[Proof of Proposition ~\ref{prop:regularizationRank1}]
We prove the result by projecting out the $z$ variables in \eqref{eq:rank1Regularization}, i.e., giving closed form solutions for them. There are three cases to consider, depending on the optimal value for $z_1+z_2$.

\paragraph{$\bullet$ Case 1: $z_1+z_2<1$} In this case, we find by setting the derivatives of the objective in \eqref{eq:rank1Regularization} with respect to $z_1$ and $z_2$  that
\[
\left .
\begin{aligned}
\kappa-\delta_1\frac{\beta_1^2}{z_1^2}-\frac{\left(\beta_1+\beta_2\right)^2}{\left(z_1+z_2\right)^2} = 0\\
\kappa-\delta_2\frac{\beta_2^2}{z_2^2}-\frac{\left(\beta_1+\beta_2\right)^2}{\left(z_1+z_2\right)^2} = 0
\end{aligned} \ \ \right \}
\implies z_2 =\sqrt{\frac{\delta_2}{\delta_1}}\frac{|\beta_2|}{|\beta_1|}z_1.
\]
Define $\bar{z}:=\frac{z_1}{\sqrt{\delta_1}|\beta_1|}$, so $z_2=\sqrt{\delta_2}|\beta_2|\bar z$, and $z_1+z_2=\left(\sqrt{\delta_1}|\beta_1|+\sqrt{\delta_2}|\beta_2|\right)\bar z$. Moreover, we find that \eqref{eq:rank1Regularization} reduces to 
\begin{align}\bs{y^\top y}+\min_{\bar z>0,\bs{\beta}\in \R^2}& -2\bs{y^\top X\beta}+\mu\|\bs{\beta}\|_1\notag\\
+& \frac{\left(\beta_1+\beta_2\right)^2+\left(\sqrt{\delta_1}|\beta_1|+\sqrt{\delta_2}|\beta_2|\right)^2}{\left(\sqrt{\delta_1}|\beta_1|+\sqrt{\delta_2}|\beta_2|\right)\bar z}+\kappa\left(\sqrt{\delta_1}|\beta_1|+\sqrt{\delta_2}|\beta_2|\right)\bar z.\label{eq:intermediate1}\end{align}
An optimal solution of \eqref{eq:intermediate1} is attained at $$\bar{z}^*=\sqrt{\frac{\frac{\left(\beta_1+\beta_2\right)^2+\left(\sqrt{\delta_1}|\beta_1|+\sqrt{\delta_2}|\beta_2|\right)^2}{\left(\sqrt{\delta_1}|\beta_1|+\sqrt{\delta_2}|\beta_2|\right)}}{\kappa\left(\sqrt{\delta_1}|\beta_1|+\sqrt{\delta_2}|\beta_2|\right)}}=\frac{\sqrt{\left(\beta_1+\beta_2\right)^2+\left(\sqrt{\delta_1}|\beta_1|+\sqrt{\delta_2}|\beta_2|\right)^2}}{\sqrt{\kappa}\left(\sqrt{\delta_1}|\beta_1|+\sqrt{\delta_2}|\beta_2|\right)}$$
with objective value 
\begin{align}&\bs{y^\top y}+\min_{\bs{\beta}\in \R^2} -2\bs{y^\top X\beta}+\mu\|\bs{\beta}\|_1
+ 2\sqrt{\kappa}\sqrt{\left(\beta_1+\beta_2\right)^2+\left(\sqrt{\delta_1}|\beta_1|+\sqrt{\delta_2}|\beta_2|\right)^2}\notag\\
=& \min_{\bs{\beta}\in \R^2} \|\bs{y}-\bs{X\beta}\|_2^2+\lambda \|\bs{\beta}\|_2^2 +\mu \|\bs{\beta}\|_1\notag\\
&+\left(2\sqrt{\kappa}\sqrt{\left(\beta_1+\beta_2\right)^2+\left(\sqrt{\delta_1}|\beta_1|+\sqrt{\delta_2}|\beta_2|\right)^2}-(\beta_1+\beta_2)^2-\delta_1\beta_1^2-\delta_2\beta_2^2\right) \cdot\notag
\end{align}
Finally, this case happens when $z_1+z_2< 1\Leftrightarrow (\beta_1+\beta_2)^2 +(\sqrt{\delta_1}|\beta_1|+\sqrt{\delta_2}|\beta_2)^2< \kappa$. \\

\paragraph{$\bullet$ Case 2: $z_1+z_2>1$} In this case, we find by setting the derivatives of the objective in \eqref{eq:rank1Regularization} with respect to $z_1$ and $z_2$ that
$\bar z_i=\sqrt{\frac{\delta_i}{\kappa}}|\beta_i|$ for $i=1,2$. Thus, in this case, for an optimal solution $\bs{z}^*$ of \eqref{eq:rank1Regularization}, we have $z_i^*=\min\{\bar z_i,1\}$, and problem \eqref{eq:rank1Regularization} reduces to 
\small\begin{align*}&\bs{y^\top y}+\min_{\bs{\beta}\in \R^2,}\; -2\bs{y^\top X\beta}+\left(\beta_1+\beta_2\right)^2+\sum_{i=1}^2\max\left\{\delta_i \beta_i^2, \sqrt{\kappa\delta_i}|\beta_i|\right\}+\mu\|\bs{\beta}\|_1\\
&\quad\quad\quad\quad\quad\quad\quad +\sum_{i=1}^2\min\left\{\sqrt{\kappa\delta_i}|\beta_i|,\kappa\right\}\\
=&\min_{\bs{\beta}\in \R^2}\|\bs{y}-\bs{X\beta}\|_2^2+\lambda\|\bs{\beta}\|_2^2+\mu\|\bs{\beta}\|_1 \! + \! \sum_{i=1}^2\left(\max\left\{\delta_i \beta_i^2, \sqrt{\kappa\delta_i}|\beta_i|\right\} \! + \! \min\left\{\sqrt{\kappa\delta_i}|\beta_i|,\kappa\right\}\!-\! \delta_i\beta_i^2\right)\\
=&\min_{\bs{\beta}\in \R^2}\|\bs{y}-\bs{X\beta}\|_2^2+\lambda\|\bs{\beta}\|_2^2+\mu\|\bs{\beta}\|_1 \! + \!
\begin{cases}
\sum_{i=1}^2 \left(2\sqrt{\kappa\delta_i}|\beta_i|-\delta_i\beta_i^2\right) \! \! &\text{if }\delta_i\beta_i^2\leq \kappa,\; i=1,2\\
\sqrt{\kappa\delta_i}|\beta_i|-\delta_i\beta_i^2+\kappa \! \! &\text{if }\delta_i\beta_i^2\leq \kappa \  \& \ \delta_j\beta_j^2>\kappa\\
2\kappa \! \!& \text{if }\delta_i\beta_i^2> \kappa,\; i=1,2.
\end{cases}
\end{align*}\normalsize
Finally, this case happens when $z_1+z_2> 1\Leftrightarrow \left(\sqrt{\delta_1}|\beta_1|+\sqrt{\delta_2}|\beta_2|\right)^2> \kappa$. Observe that, in this case, the penalty function is precisely the one given in Proposition~\ref{prop:regularizationPerspective}. \\

\paragraph{$\bullet$ Case 3: $z_1+z_2=1$} In this case, problem \eqref{eq:rank1Regularization} reduces to
\begin{equation}\label{eq:intermediate2}\bs{y^\top y}+\min_{0\leq z_1\leq 1,\bs{\beta}\in \R^2,}\; -2\bs{y^\top X\beta}+\left(\beta_1+\beta_2\right)^2+\delta_1\frac{\beta_1^2}{z_1}+\delta_2\frac{\beta_2^2}{1-z_1}+\mu\|\bs{\beta}\|_1+\kappa.\end{equation}
Setting derivative with respect to $z_1$ in \eqref{eq:intermediate2} to $0$, we have 
\begin{align*}
0&=\delta_1\beta_1^2(1-z_1)^2-\delta_2\beta_2^2z_1^2\\
&=\delta_1\beta_1^2-2\delta_1\beta_1^2z_1+(\delta_1\beta_1^2-\delta_2\beta_2^2)z_1^2.
\end{align*}
Thus, we find that \begin{align*}z_1&=\frac{2\delta_1\beta_1^2 \pm \sqrt{4\delta_1^2\beta_1^4-4\delta_1\beta_1^2(\delta_1\beta_1^2-\delta_2\beta_2^2)}}{2\left(\delta_1\beta_1^2-\delta_2\beta_2^2\right)}\\
&=\frac{\delta_1\beta_1^2 \pm \sqrt{\delta_1\delta_2}|\beta_1\beta_2|}{\delta_1\beta_1^2-\delta_2\beta_2^2}=\frac{\sqrt{\delta_1}|\beta_1|(\sqrt{\delta_1}|\beta_1|\pm \sqrt{\delta_2}|\beta_2|)}{(\sqrt{\delta_1}|\beta_1|+\sqrt{\delta_2}|\beta_2|)(\sqrt{\delta_1|\beta_1|}-\sqrt{\delta_2}|\beta_2|)} \cdot 
\end{align*}
Moreover, since $0\leq z_1\leq 1$, we have 
$z_1=\frac{\sqrt{\delta_1}|\beta_1|}{\sqrt{\delta_1}|\beta_1|+\sqrt{\delta_2}|\beta_2|}$ and $1-z_1=\frac{\sqrt{\delta_2}|\beta_2|}{\sqrt{\delta_1}|\beta_1|+\sqrt{\delta_2}|\beta_2|}$. Substituting in \eqref{eq:intermediate2}, we find the equivalent form
\begin{align*}&\bs{y^\top y}+\min_{\bs{\beta}\in \R^2,}\; -2\bs{y^\top X\beta}+\left(\beta_1+\beta_2\right)^2+\left(\sqrt{\delta_1}|\beta_1|+\sqrt{\delta_2}|\beta_2|\right)^2+\mu\|\bs{\beta}\|_1+\kappa\\
=&\min_{\bs{\beta}\in \R^2}\|\bs{y}-\bs{X\beta}\|_2^2+\lambda\|\bs{\beta}\|_2^2+\mu \|\bs{\beta}\|_1+\kappa +2\sqrt{\delta_1\delta_2}|\beta_1\beta_2|.\end{align*}
This final case occurs when neither case 1 or 2 does, i.e., when $\left(\sqrt{\delta_1}|\beta_1|+\sqrt{\delta_2}|\beta_2|\right)^2 \leq \kappa \leq (\beta_1+\beta_2)^2 +(\sqrt{\delta_1}|\beta_1|+\sqrt{\delta_2}|\beta_2)^2$. 
\end{proof}

\ignore{
Figure~\ref{fig:regularization} compares usual regularization penalties with $\rho_{\text{\MC}}$ and $\rho_{\text{\RO}}$ (shown for $\kappa=1$). Additionally, Figure~\ref{fig:constrained} illustrates the better sparsity inducing properties of regularization $\rho_{\text{\RO}}$ with respect to $\rho_{\text{\MC}}$. Note that for the general $p$-dimensional case, the conic relaxation \eqref{eq:rank1RelaxationOriginalSpace} can be rewritten as:
	\begin{align*}
	\min\;&\|\bs{y}-\bs{X\beta}\|_2^2+\lambda\|\bs{\beta}\|_2^2+\bs{e^\top u}+\rho_{\text{\RO}}(\bs{\beta};k)\\
\text{s.t.} \;	& \bs{\beta}\leq \bs{u},\; -\bs{\beta}\leq \bs{u}\\
	&\bs{\beta}\in \R^P,\; \bs{u}\in \R_+^P,
	\end{align*}
with \begin{align*}
\rho_{\text{\RO}}(\bs{\beta};k):=\min_{\bs{z}\in [0,1]^P}\phi_\Ps(\bs{z},\bs{\beta})-\bs{\beta^\top} (\bs{X^\top X}+\lambda\bs{I})\bs{\beta}\text{ s.t. }\bs{e^{\top}z}\leq k.
\end{align*}
Explicit forms of such regularization penalties are difficult to derive for $p>2$. 
}

The plots of $\rho_{\text{\MC}}$ and $\rho_{\RO}$ shown in Figures~\ref{fig:regularization} and \ref{fig:constrained} correspond to setting the natural value $\kappa=1$.

\section{Conic quadratic relaxations}\label{sec:socp}

As mentioned in \S\ref{sec:intro}, strong convex relaxations of problem \eqref{eq:bestSubsetSelection}, such as \sdp{r}, can either be directly used to obtain good estimators via conic optimization, which is the approach we use in our computations, or can be embedded in a branch-and-bound algorithm to solve \eqref{eq:bestSubsetSelection} to optimality. However, using SDP formulations such as \eqref{eq:sdp} in branch-and-bound may be daunting since, to date, efficient branch-and-bound algorithms with SDP relaxations are not available.
\rev{In contrast, conic quadratic optimization problems are considerably easier to solve that semidefinite optimization problems, thus scaling to larger dimensions. Moreover there exist off-the-shelf mixed-integer conic quadratic optimization solvers that are actively maintained and improved by numerous software vendors.}
In this section we show how the proposed conic relaxations, and specifically \sdp{2}, can be implemented in a conic quadratic framework. \rev{The resulting convex formulations can then be directly used as a fast approximation to the SDP formulations presented in \S\ref{sec:convexification}, and pave the way towards an} integration with branch-and-bound solvers\footnote{An effective implementation would require careful constraint management strategies and integration with the different aspects of branch-and-bound solvers, e.g., branching strategies and heuristics. Such an implementation is beyond the scope of the paper.}. 

\subsection{\rev{Two-dimensional PSD constraints}}
Constraint \eqref{eq:sdp2_T1}, $\left.\beta_i^2\leq z_iB_{ii}\right.$, is a rotated cone constraint as $z_i\geq 0$ and $B_{ii}\geq 0$ in any feasible solution of \eqref{eq:sdpr}, and thus conic quadratic representable.

\subsection{\rev{Three-dimensional PSD constraints}}\label{sec:sdd}
As we now show, constraints \eqref{eq:sdp2_T2} can be accurately approximated using conic quadratic constraints. 
\begin{proposition}
	Problem \sdp{2} is equivalent to the optimization problem
	\begin{subequations}\label{eq:optimalSDDSOCP}
		\begin{alignat}{2}
		\bs{y^\top y}+\min\;& -2\bs{y^\top X\beta} +\bs{e^\top u} +\langle \bs{X}^\top\bs{X}+\lambda \bs{I}, \bs{B}\rangle\\
		{\normalfont\text{s.t.}}\;&\bs{e^\top z}\leq k  &&\label{eq:optimalSDDSOCP_card}\\
		& \bs{\beta}\leq \bs{u},\; -\bs{\beta}\leq \bs{u}  \label{eq:optimalSDDSOCP_abs} & \\
		& z_iB_{ii}\geq \beta_i^2 &\forall i\in P & \label{eq:optimalSDDSOCP_persp}\\
		&0\leq w_{ij}\leq 1,\; w_{ij}\leq z_i+z_j \quad &\forall i\neq j & \label{eq:optimalSDDSOCP_w}\\
		& 0\geq \max_{\alpha\geq 0}\left\{\frac{\alpha\beta_i^2+2\beta_i\beta_j+\beta_j^2/\alpha}{w_{ij}}\!-\!2B_{ij}\!-\!\alpha B_{ii}\!-\!B_{jj}/\alpha\right\} &\forall i\neq j & \label{eq:optimalSDDSOCP_2d1}\\
		& 0\geq \max_{\alpha\geq 0}\left\{\frac{\alpha\beta_i^2-2\beta_i\beta_j+\beta_j^2/\alpha}{w_{ij}}\!+\!2B_{ij}\!-\!\alpha B_{ii}\!-\!B_{jj}/\alpha\right\} &\forall  i\neq j & \label{eq:optimalSDDSOCP_2d2}\\
		&\bs{B}- \bs{\beta\beta}'\in \Sp^P \label{eq:optimalSDDSOCP_psd}\\
		&\bs{\beta}\in \R^\rev{p},\; \bs{z}\in [0,1]^\rev{p},\; \bs{u}\in \R_+^\rev{p},\; \bs{B}\in \R^{\rev{p}\times \rev{p}}\label{eq:optimalSDDSOCP_bounds}.
		\end{alignat}
	\end{subequations}
\end{proposition}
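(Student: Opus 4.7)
The plan is to prove the equivalence constraint by constraint, the nontrivial work being the translation of the $3\times 3$ semidefinite constraints \eqref{eq:sdp2_T2} into the pair of parametric rotated cone families \eqref{eq:optimalSDDSOCP_2d1}--\eqref{eq:optimalSDDSOCP_2d2}. The remaining constraints are shared between both formulations: the cardinality bound, the $\ell_1$ lift, the outer PSD constraint \eqref{eq:sdp2_P} $=$ \eqref{eq:optimalSDDSOCP_psd}, and the variable bounds are identical, while the $2\times 2$ constraint \eqref{eq:sdp2_T1} rewrites as the rotated cone \eqref{eq:optimalSDDSOCP_persp} because $z_i\geq 0$ and $B_{ii}\geq 0$ in any feasible solution.

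The core argument, applied to each pair $i<j$, is a Schur complement reduction. Using the paper's convention $s/0=\infty$ for $s>0$, the $3\times 3$ matrix in \eqref{eq:sdp2_T2} is PSD if and only if $w_{ij}\geq 0$ and its Schur complement with respect to the $(1,1)$ entry,
\[
N_{ij}=\begin{pmatrix} B_{ii}-\beta_i^2/w_{ij} & B_{ij}-\beta_i\beta_j/w_{ij} \\ B_{ij}-\beta_i\beta_j/w_{ij} & B_{jj}-\beta_j^2/w_{ij}\end{pmatrix},
\]
is itself PSD as a $2\times 2$ symmetric matrix.

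Next, I would characterize $N_{ij}\succeq 0$ via the one-parameter family of test vectors $u(\alpha,\sigma)=(\sqrt{\alpha},\,\sigma/\sqrt{\alpha})$ for $\alpha>0$ and $\sigma\in\{+1,-1\}$. Computing $u(\alpha,\sigma)^\top N_{ij}\,u(\alpha,\sigma)\geq 0$ and rearranging yields
\[
\alpha B_{ii}+2\sigma B_{ij}+B_{jj}/\alpha \;\geq\; \frac{\alpha\beta_i^2+2\sigma\beta_i\beta_j+\beta_j^2/\alpha}{w_{ij}},
\]
which, after moving everything to one side, is exactly the $\sigma=+1$ instance of \eqref{eq:optimalSDDSOCP_2d1} or the $\sigma=-1$ instance of \eqref{eq:optimalSDDSOCP_2d2}. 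This proves necessity. For sufficiency, every $u\in\R^2$ with two nonzero coordinates is a positive multiple of some $u(\alpha,\sigma)$ (same signs $\Rightarrow \sigma=+1$, opposite signs $\Rightarrow \sigma=-1$); moreover taking $\alpha\to\infty$ and $\alpha\to 0^+$ in either family produces the axis tests $B_{ii}\geq \beta_i^2/w_{ij}$ and $B_{jj}\geq \beta_j^2/w_{ij}$. Together these cover all $u\in\R^2$, so the parametric family forces $N_{ij}\succeq 0$.

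The main obstacle is the careful handling of the boundary case $w_{ij}=0$, where the Schur complement is only formally defined. I would treat it directly: the $3\times 3$ PSD constraint forces the entire first row and column of the matrix to vanish, giving $\beta_i=\beta_j=0$ and leaving the trailing block constraint $\bigl(\begin{smallmatrix}B_{ii}&B_{ij}\\B_{ij}&B_{jj}\end{smallmatrix}\bigr)\succeq 0$, which is already implied by \eqref{eq:optimalSDDSOCP_psd}. Symmetrically, the $\max$ expression in \eqref{eq:optimalSDDSOCP_2d1}--\eqref{eq:optimalSDDSOCP_2d2} with the $s/0=\infty$ convention is finite only when $\beta_i=\beta_j=0$, after which both sides of \eqref{eq:optimalSDDSOCP_2d1}--\eqref{eq:optimalSDDSOCP_2d2} trivialize. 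Once these degenerate cases and the limiting cases $\alpha\in\{0,\infty\}$ are verified, combining the Schur complement reduction with the parametric characterization of $2\times 2$ PSD matrices yields the claimed equivalence of \sdp{2} and \eqref{eq:optimalSDDSOCP}.
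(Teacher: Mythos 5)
Your proof is correct and follows essentially the same route as the paper's: both reduce the three-dimensional PSD constraints \eqref{eq:sdp2_T2} to positive semidefiniteness of the $2\times 2$ Schur complement $w_{ij}\bs{B_T}-\bs{\beta_T}\bs{\beta_T^\top}$ and identify that condition with the $\alpha$-parametrized inequalities \eqref{eq:optimalSDDSOCP_2d1}--\eqref{eq:optimalSDDSOCP_2d2}. The only differences are presentational: the paper computes the inner maximum in closed form (obtaining the optimizing $\alpha$ in \eqref{eq:optimalAlpha}, which it reuses for cut generation) and argues at optimal solutions where $w_{ij}=\min\{1,z_i+z_j\}$, whereas you test the Schur complement along the dense family of directions $(\sqrt{\alpha},\pm 1/\sqrt{\alpha})$ and treat the degenerate case $w_{ij}=0$ directly at the level of the feasible set.
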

\begin{proof}
It suffices to compute the optimal value of $\alpha$ in \eqref{eq:optimalSDDSOCP_2d1}--\eqref{eq:optimalSDDSOCP_2d2}. Observe that the rhs of \eqref{eq:optimalSDDSOCP_2d1} can be written as
\begin{equation}\label{eq:innerMax}
v=\frac{2\beta_i\beta_j}{w_{ij}}-2B_{ij}-\min_{\alpha\geq 0}\left\{\alpha\left(B_{ii}-\frac{\beta_i^2}{w_{ij}}\right)+\frac{1}{\alpha}\left(B_{jj}-\frac{\beta_j^2}{w_{ij}}\right)\right\} \cdot
\end{equation}
Moreover, in an optimal solution of \eqref{eq:optimalSDDSOCP}, we have that $w_{ij}=\min\{1,z_i+z_j\}$. Thus,
due to constraints \eqref{eq:optimalSDDSOCP_persp}, we find that $B_{ii}-\nicefrac{\beta_i^2}{w_{ij}}\geq 0$ in optimal solutions of \eqref{eq:optimalSDDSOCP}, and equality only occurs if either $z_i=1$ or $z_j=0$. If either $B_{ii}=\nicefrac{\beta_i^2}{\min\{1,z_i+z_j\}}$ or $B_{jj}=\nicefrac{\beta_j^2}{\min\{1,z_i+z_j\}}$, then the optimal value of \eqref{eq:innerMax} is $v=\nicefrac{2\beta_i\beta_j}{\min\{1,z_i+z_j\}}-2B_{ij}$, by setting $\alpha\to \infty$ or $\alpha=0$, respectively. Otherwise, the optimal $\alpha$ equals \begin{equation}\label{eq:optimalAlpha}\alpha=\sqrt{\frac{B_{jj}w_{ij}-\beta_j^2}{B_{ii}w_{ij}-\beta_i^2}},\end{equation}
with the objective value 
$$v=\frac{2\beta_i\beta_j}{w_{ij}}-2B_{ij}-2\sqrt{\left(B_{ii}-\frac{\beta_i^2}{w_{ij}}\right)\left(B_{jj}-\frac{\beta_j^2}{w_{ij}}\right)}.$$
Observe that this expression is also correct when $B_{ii}=\nicefrac{\beta_i^2}{\min\{1,z_i+z_j\}}$ or $B_{jj}=\nicefrac{\beta_j^2}{\min\{1,z_i+z_j\}}$.
Thus, constraint \eqref{eq:optimalSDDSOCP_2d1} reduces to 
\begin{equation}\label{eq:simplifiedSDDSOCP_2d1}0\geq \beta_i\beta_j-B_{ij}w_{ij}-\sqrt{\left(B_{ii}w_{ij}-\beta_i^2\right)\left(B_{jj}w_{ij}-\beta_j^2\right)}.\end{equation}
Similarly, it can be shown that constraint \eqref{eq:optimalSDDSOCP_2d2} reduces to 
\begin{equation}\label{eq:simplifiedSDDSOCP_2d2}0\geq -\beta_i\beta_j+B_{ij}w_{ij}-\sqrt{\left(B_{ii}w_{ij}-\beta_i^2\right)\left(B_{jj}w_{ij}-\beta_j^2\right)}.\end{equation}
More compactly, constraints \eqref{eq:simplifiedSDDSOCP_2d1}--\eqref{eq:simplifiedSDDSOCP_2d2} are equivalent to
\begin{equation}\label{eq:simplifiedSDDSOCP_2d}
\left(w_{ij}B_{ii} -\beta_i^2\right)\left(w_{ij}B_{jj} -\beta_j^2\right)\geq \left(w_{ij}B_{ij}-\beta_i\beta_j\right)^2.
\end{equation}

Moreover, note that constraints \eqref{eq:sdpr_T2} with $T=\{i,j\}$ are equivalent to
\begin{align*}
& \begin{pmatrix}w_{ij}B_{ii} -\beta_i^2& w_{ij}B_{ij}-\beta_i\beta_j \\
w_{ij}B_{ij}-\beta_i\beta_j&w_{ij}B_{jj} -\beta_j^2\end{pmatrix}\in \Sp^2\\
\Leftrightarrow\;&w_{ij}B_{ii} -\beta_i^2\geq 0,\; w_{ij}B_{jj} -\beta_j^2\geq 0,\text{ and } \eqref{eq:simplifiedSDDSOCP_2d}.
\end{align*}
Since the first two constraints are implied by \eqref{eq:optimalSDDSOCP_persp} and $w_{ij}=\min\{1,z_i+z_j\}$ in optimal solutions, the proof is complete.
\end{proof}

Observe that, for any fixed value of $\alpha$, constraints \eqref{eq:optimalSDDSOCP_2d1}--\eqref{eq:optimalSDDSOCP_2d2} are conic quadratic representable. Thus, we can obtain relaxations of \eqref{eq:optimalSDDSOCP} of the form
\begin{subequations}\label{eq:relaxSDDSOCP}
	\begin{align}
	\bs{y^\top y}+\min\;& -2\bs{y^\top X\beta} +\bs{e^\top u} +\langle \bs{X}^\top\bs{X}+\lambda \bs{I}, \bs{B}\rangle\\
	\text{s.t.}\;&\eqref{eq:optimalSDDSOCP_card},\; \eqref{eq:optimalSDDSOCP_abs},\;\eqref{eq:optimalSDDSOCP_persp},\;\eqref{eq:optimalSDDSOCP_w},\; \eqref{eq:optimalSDDSOCP_psd},\; \eqref{eq:optimalSDDSOCP_bounds}\\
	& 0\geq \frac{\alpha\beta_i^2+2\beta_i\beta_j+\beta_j^2/\alpha}{\min\{1,z_i+z_j\}}\!-\! 2B_{ij} \!- \!\alpha B_{ii}- \! B_{jj}/\alpha,  \forall i\neq j,\! \alpha\in V_{ij}^+\\
	& 0\geq \frac{\alpha\beta_i^2-2\beta_i\beta_j+\beta_j^2/\alpha}{\min\{1,z_i+z_j\}}\!+\! 2B_{ij} \!- \! \alpha B_{ii} \! - \! B_{jj}/\alpha, \forall i\neq j,\! \alpha\in V_{ij}^-,
	\end{align}
\end{subequations}
where $V_{ij}^+$ and  $V_{ij}^-$ are any finite subsets of $\R_+$. Relaxation \eqref{eq:relaxSDDSOCP} can be refined dynamically: given an optimal solution of \eqref{eq:relaxSDDSOCP}, new values of $\alpha$ generated according to \eqref{eq:optimalAlpha} (resulting in most violated constraints) can be added to sets $V_{ij}^+$ and $V_{ij}^-$, resulting in tighter relaxations. Note that the use of cuts (as described here) to improve the continuous relaxations of mixed-integer optimization problems is one of the main reasons of the dramatic improvements of MIO software \cite{bixby2012brief}.

In relaxation \eqref{eq:relaxSDDSOCP}, $V_{ij}^+$ and $V_{ij}^-$ can be initialized with any (possibly empty) subsets of $\R_+$. However, setting $V_{ij}^+=V_{ij}^-=\{1\}$ yields a relaxation with a simple interpretation, discussed next.  

\subsection{Diagonally dominant matrix relaxation}
Let $\bs{\Lambda}\in \Sp^P$ be diagonally dominant matrix. Observe that for any $(\bs{z},\bs{\beta})\in \{0,1\}^\rev{p}\times \R^\rev{p}$ such that $\bs{\beta}\circ (\bs{e}-\bs{z})=\bs{0}$,  
\begin{align}t\geq\bs{\beta^\top\Lambda \beta}\Leftrightarrow & \;t\geq\sum_{i=1}^p\bigg(\Lambda_{ii}-\sum_{j\neq i}|\Lambda_{ij}|\bigg)\beta_i^2+\sum_{i=1}^p\sum_{j=i+1}^p|\Lambda_{ij}|\left(\beta_i+\sign(\Lambda_{ij})\beta_j\right)^2\notag\\
\Leftrightarrow & \; t\geq\sum_{i=1}^p\bigg(\Lambda_{ii}-\sum_{j\neq i}|\Lambda_{ij}|\bigg)\frac{\beta_i^2}{z_i}+\sum_{i=1}^p\sum_{j=i+1}^p|\Lambda_{ij}|\frac{\left(\beta_i+\sign(\Lambda_{ij})\beta_j\right)^2}{\min\{1,z_i+z_j\}},\label{eq:ddStrengthening}
\end{align}
where the last line follows from using perspective strengthening for the separable quadratic terms, and using \eqref{eq:valid} for the non-separable, rank-one terms. See \cite{atamturk2018sparse} for a similar strengthening for signal estimation based on nonnegative pairwise quadratic terms.

We now consider using decompositions of the form $\bs{\Lambda}+\bs{R}=\bs{X}^\top\bs{X}+\lambda \bs{I}$, where $\bs{\Lambda}$ is a diagonally dominant matrix and $\bs{R}\in \Sp^P$. Given such a decomposition, inequalities \eqref{eq:ddStrengthening} can be used to strengthen the formulations. Specifically, we consider relaxations of \eqref{eq:MIO} of the form
\begin{subequations}\label{eq:ddRelaxationOriginalSpace}
	\begin{align}
	\bs{y^\top y}+\min\;&-2\bs{y^\top X}\bs{\beta}+\bs{e^\top u}+\hat \phi(\bs{z},\bs{\beta})\\
	&\eqref{eq:rank1Card},\; \eqref{eq:rank1Abs},\; \eqref{eq:rank1Bounds},
	\end{align}
\end{subequations}
where
\begin{subequations}\label{eq:ddDecomp}
	\begin{align}
	\hat\phi(\bs{z},\bs{\beta}) := \max_{\bs{\Lambda},\bs{R}}\;&\bs{\beta^\top \! R\beta}+ \! \sum_{i=1}^p\! \bigg (\! \Lambda_{ii} \! - \! \sum_{j\neq i}|\Lambda_{ij}|\bigg)\! \frac{\beta_i^2}{z_i}\! + \! \sum_{i=1}^p \! \sum_{j=i+1}^p|\Lambda_{ij}|\frac{\left(\beta_i+\sign(\Lambda_{ij})\beta_j\right)^2}{\min\{1,z_i+z_j\}}\\
	\text{s.t.}\;&
     \bs{\Lambda}+\bs{R} = \bs{X}^\top\bs{X}+\lambda \bs{I}\\
	&\Lambda_{ii}\geq \sum_{j< i}|\Lambda_{ji}|+\sum_{j> i}|\Lambda_{ij}|\quad\quad \forall i\in P\\
	&\bs{R}\in \Sp^P.
	\end{align}
\end{subequations}

\begin{proposition}\label{prop:dd}
	Problem \eqref{eq:ddRelaxationOriginalSpace} is equivalent to 
	\begin{subequations}\label{eq:optimalDDSOCP}
		\begin{align}
		\bs{y^\top y}+\min\;& -2\bs{y^\top X\beta} +\bs{e^\top u} +\langle \bs{X}^\top\bs{X}+\lambda \bs{I}, \bs{B}\rangle\\
		\text{s.t.}\;&\bs{e^\top z}\leq k \\
		& \bs{\beta}\leq \bs{u},\; -\bs{\beta}\leq \bs{u}\\
		& z_iB_{ii}\geq \beta_i^2 \quad&\forall i\in P\\
	(\sdp{dd})  \ \ \ \ \ \ \ \ \ \ \ \ \ \ \ \ \	&0\leq w_{ij}\leq 1,\; w_{ij}\leq z_i+z_j \quad &\forall i\neq j \\
		& 0\geq \frac{\beta_i^2+2\beta_i\beta_j+\beta_j^2}{w_{ij}}-2B_{ij}- B_{ii}-B_{jj}\quad &\forall i\neq j\\
		& 0\geq \frac{\beta_i^2-2\beta_i\beta_j+\beta_j^2}{w_{ij}}+2B_{ij}- B_{ii}-B_{jj}\quad &\forall i\neq j\\
		&\bs{B}- \bs{\beta\beta}'\in \Sp^P \\
		&\bs{\beta}\in \R^\rev{p},\; \bs{z}\in [0,1]^\rev{p},\; \bs{u}\in \R_+^\rev{p},\; \bs{B}\in \R^{\rev{p}\times \rev{p}}.
		\end{align}
	\end{subequations}
\end{proposition}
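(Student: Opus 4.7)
The plan is to mirror the strategy used in the proof of Theorem~\ref{theo:sdp}. First, I would introduce auxiliary lifting variables $\bs{B}\in\R^{p\times p}$ and scalars $w_{ij}$ (for $i\neq j$) standing in for the nonlinear terms in $\hat\phi(\bs{z},\bs{\beta})$. Specifically, $B_{ii}$ will represent $\beta_i^2/z_i$ (enforced via the rotated cone $z_iB_{ii}\geq\beta_i^2$), the off-diagonals $B_{ij}$ will capture $\beta_i\beta_j$, and $w_{ij}=\min\{1,z_i+z_j\}$ will replace the perspective denominators. The residual term $\bs{\beta^\top R\beta}$ appearing in \eqref{eq:ddDecomp} becomes the linear expression $\langle\bs{R},\bs{B}\rangle$ provided that we also include the PSD lifting constraint $\bs{B}-\bs{\beta\beta}^\top\succeq 0$, exactly as in the proof of Theorem~\ref{theo:sdp}.

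Next, observe that the feasible region defining $\hat\phi$ in \eqref{eq:ddDecomp} is a conic optimization problem in the decision variables $\bs{\Lambda}$ and $\bs{R}$: the diagonal dominance inequalities on $\bs{\Lambda}$, the semidefinite constraint on $\bs{R}$, and the linear equation $\bs{\Lambda}+\bs{R}=\bs{X}^\top\bs{X}+\lambda\bs{I}$. After substituting the lifted variables, the objective of $\hat\phi$ becomes linear in $\bs{\Lambda}$ and $\bs{R}$. I would then dualize this inner optimization, exactly as in the proof of Theorem~\ref{theo:sdp}; strict feasibility (e.g., taking $\bs{\Lambda}$ to be a small multiple of the identity plus the diagonal of $\bs{X}^\top\bs{X}+\lambda\bs{I}$ and $\bs{R}$ the remainder shifted to be positive definite) ensures strong duality.

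The critical structural fact is the well-known representation of the diagonally dominant cone: a symmetric matrix $\bs{\Lambda}$ is diagonally dominant with nonnegative diagonal if and only if it can be written as a nonnegative combination of the rank-one matrices $e_ie_i^\top$ together with $(e_i+e_j)(e_i+e_j)^\top$ and $(e_i-e_j)(e_i-e_j)^\top$. Thus, in the dual reformulation, each pair $(i,j)$ yields two separate dual multipliers, one for the $+$ and one for the $-$ rank-one atom. Upon simplification, these correspond exactly to the two rotated-cone constraints in \sdp{dd} controlling $(\beta_i+\beta_j)^2$ and $(\beta_i-\beta_j)^2$ against $w_{ij}(B_{ii}\pm 2B_{ij}+B_{jj})$; the singleton atoms $e_ie_i^\top$ produce the perspective constraints $z_iB_{ii}\geq\beta_i^2$.

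The main obstacle will be the bookkeeping around the sign of $\Lambda_{ij}$: since the optimal sign is not known a priori, the primal decomposition splits each pairwise term into two nonnegative pieces (one for each sign), and care is needed to show that the dual cleanly packages these into the two symmetric constraints of \sdp{dd} without extra slackness. After this step, one verifies that any optimal solution may be chosen with $w_{ij}=\min\{1,z_i+z_j\}$, which matches the perspective denominators in $\hat\phi$ and completes the equivalence.
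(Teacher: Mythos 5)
Your proposal is correct and follows essentially the same route as the paper: the paper's proof rewrites \eqref{eq:ddDecomp} by splitting $\bs{\Lambda}$ into nonnegative pieces $\bs{\Gamma},\bs{\Gamma^+},\bs{\Gamma^-}$ --- precisely the extreme-ray representation of the diagonally dominant cone that you invoke --- and then takes the conic dual exactly as in the proof of Theorem~\ref{theo:sdp}, with the singleton atoms yielding the perspective constraints and the $\pm$ pairwise atoms yielding the two rotated-cone constraints of \sdp{dd}. One small caution: your suggested Slater point for the inner maximization (the diagonal of $\bs{X^\top X}+\lambda\bs{I}$ plus a small multiple of the identity) need not leave the remainder $\bs{R}$ positive semidefinite, so it is cleaner to justify strong duality via strict feasibility of the minimization side, as done for Theorem~\ref{theo:sdp} (e.g., $\bs{\beta}=\bs{0}$, $\bs{z}=\bs{e}$, $\bs{w}>\bs{0}$, $\bs{B}=\bs{I}$).
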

\begin{proof}
Let $\bs{\Gamma},\bs{\Gamma^+},\bs{\Gamma^-}$ be nonnegative $p\times p$ matrices such that: $\Gamma_{ii}=\Lambda_{ii}$ and $\Gamma_{ij}=0$ for $i\neq j$; $\Gamma_{ii}^+=\Gamma_{ii}^-=0$ and $\Gamma_{ij}^+-\Gamma_{ij}^-=\Lambda_{ij}$ for $i\neq j$. 
Problem \eqref{eq:ddDecomp} can be written as 
\begin{subequations}\label{eq:dd_dual}
	\begin{align}
	\hat\phi(\bs{z},\bs{\beta}) := \max_{\bs{\Gamma},\bs{\Gamma^+},\bs{\Gamma^-}\bs{R}}\;&\bs{\beta^\top R\beta}+\sum_{i=1}^p\bigg(\Gamma_{ii}-\sum_{j\neq i}(\Gamma_{ij}^++\Gamma_{ij}^-)\bigg)\frac{\beta_i^2}{z_i}\\
	&+\sum_{i=1}^p\sum_{j=i+1}^p\bigg(\Gamma_{ij}^+\frac{\left(\beta_i+\beta_j\right)^2}{\min\{1,z_i+z_j\}}+\Gamma_{ij}^-\frac{\left(\beta_i-\beta_j\right)^2}{\min\{1,z_i+z_j\}}\bigg)\\
	\text{s.t.}\;&
	\bs{\Gamma}+\bs{\Gamma^+}+\bs{\Gamma^-}+\bs{R} = \bs{X}^\top\bs{X}+\lambda \bs{I}\\
	&\Gamma_{ii}\geq \sum_{j< i}(\Gamma_{ji}^++\Gamma_{ji}^-)+\sum_{j> i}(\Gamma_{ij}^++\Gamma_{ij}^-)\quad\quad \forall i\in P\\
	&\bs{R}\in \Sp^P.
	\end{align}
\end{subequations}
Then, similarly to the proof of Theorem~\ref{theo:sdp}, it is easy to show that the dual of \eqref{eq:dd_dual} is precisely \eqref{eq:optimalDDSOCP}.
\end{proof}

\subsection{Relaxing the \rev{$\bs{(p+1)}$-dimensional PSD constraint}}\label{sec:relaxPsdR} 
We now discuss a relaxation of the $p$-dimensional semidefinite constraint $\bs{B}- \bs{\beta\beta^\top}\in \Sp^P$, present in all formulations. 
\rev{Let $\bs{V}$ 
	be a matrix 
	whose $j$-th column $\bs{V_j}$ is an eigenvector of $\bs{X^\top X}$. }Consider the optimization problem
\begin{subequations}\label{eq:LPextraction}
	\begin{align}
	\underline{\phi_\Ps}(\bs{z},\bs{\beta}) := \max_{\bs{A_T},\bs{R},\bs{\pi}}\;&\bs{\beta^\top R\beta}+\sum_{T\in \Ps}\frac{\bs{\beta_T^\top A_T\beta_T}}{\min\{1,z(T)\}}\label{eq:LPextraction_obj}\\
	\text{s.t.}\;&
	\sum_{T\in \Ps}\bs{A_T} +\bs{R} = \bs{X}^\top\bs{X}+\lambda \bs{I}\\
	&\bs{A_T}\in \Sp^T \quad &\forall T\in \Ps \label{eq:LPextraction_AT}\\
	&\bs{R}=\rev{\bs{V}\diag(\bs{\pi})\bs{V^\top}}\\
	&\bs{\pi}\in \R_+^n.
	\end{align}
\end{subequations}

Observe that the objective and constraints \eqref{eq:LPextraction_obj}--\eqref{eq:LPextraction_AT} are identical to \eqref{eq:optDecomp}. However, instead of \eqref{eq:optDecomp_psd}, we have $\bs{R} =\rev{\sum_{j=1}^{\min\{p,n\}} \pi_j \bs{V_jV_j^\top}}$. Moreover, since $\bs{\pi}\geq \bs{0}$, $\bs{R}\in \Sp^P$ in any feasible solution of \eqref{eq:LPextraction}, thus \eqref{eq:optDecomp} is a relaxation of \eqref{eq:LPextraction}, and, hence, $\underline{ \phi_\Ps}$ is indeed a lower bound on $\phi_\Ps$. \rev{Finally, \eqref{eq:LPextraction} is feasible if $\lambda=0$ or \Ps\ contains all singletons, as it is possible to set $\bs{A_{\{i\}}}=\lambda$, $\bs{A_{T}}=0$ for $|T|>1$, and set $\bs{\pi}$ equal to the eigenvalues of $\bs{X^\top X}$.} 
Therefore, instead of \eqref{eq:rank1RelaxationOriginalSpace}, one may use the simpler convex relaxation
\begin{subequations}\label{eq:rank1RelaxationOriginalSpaceSOCP}
	\begin{align}
	\bs{y^\top y}+\min\;&-2\bs{y^\top X}\bs{\beta}+\bs{e^\top u}+\underline{ \phi_\Ps}(\bs{z},\bs{\beta})\\
	&\bs{e^\top z}\leq k\\
	& \bs{\beta}\leq \bs{u},\; -\bs{\beta}\leq \bs{u}\\
	&\bs{\beta}\in \R^\rev{p},\; \bs{z}\in [0,1]^\rev{p},\; \bs{u}\in \R_+^\rev{p}
	\end{align}
\end{subequations}
for \eqref{eq:bestSubsetSelection}.

\begin{proposition}
	\rev{If $\Ps=\left\{T\subseteq P: |T|\leq 2\right\}$, then} problem \eqref{eq:rank1RelaxationOriginalSpaceSOCP} is equivalent to
	\rev{
	\begin{subequations}\label{eq:socp1}
		\begin{align}
		\bs{y^\top y}+\min\;& -2\bs{y^\top X\beta} +\bs{e^\top u} +\langle \bs{X}^\top\bs{X}+\lambda \bs{I}, \bs{B}\rangle\\
		\text{s.t.}\;&\bs{e^\top z}\leq k \\
		& \bs{\beta}\leq \bs{u},\; -\bs{\beta}\leq \bs{u}\\
		& \begin{pmatrix}z_i& \beta_i\\
		\beta_i & B_{ii}
		\end{pmatrix}\succeq 0 &\hspace{-0.2cm}\forall i=1,\dots,p\\
		(\sdp{LB})  \ \ \ \ \ \ \ \ \ \ \ \ \ \ \ \ \ & 0\leq w_{ij}\leq \min\{1,z_i+z_j\} \quad  &\hspace{-3cm}\forall i<j\\
		&\begin{pmatrix}w_{ij}& \beta_i & \beta_j\\
		\beta_i & B_{ii}&B_{ij}\\
		\beta_j & B_{ij}&B_{jj}\end{pmatrix}\succeq 0\quad  &\hspace{-0.2cm}\forall i<j \label{eq:socp1_st}\\
		&\bs{\rev{V}_j^\top}\left(\bs{B}- \bs{\beta\beta^\top}\right)\bs{\rev{V}_j}\geq 0 &\hspace{-3cm} \forall \rev{j=1,\ldots,\min\{n,p\}} \label{eq:socp1_sp}\\ 
		&\bs{\beta}\in \R^p,\; \bs{z}\in [0,1]^p,\; \bs{u}\in \R_+^p,\; \bs{B}\in \R^{p\times p}.
		\end{align}
	\end{subequations}
}
\ignore{	
	\begin{subequations}\label{eq:socp1}
		\begin{alignat}{2}
		\bs{y^\top y}+\min\;& -2\bs{y^\top X\beta} +\bs{e^\top u} +\langle \bs{X}^\top\bs{X} +\lambda \bs{I},&& \bs{B}\rangle\\
		{\normalfont\text{s.t.}}\;&\bs{e^\top z}\leq k \\
		& \bs{\beta}\leq \bs{u},\; -\bs{\beta}\leq \bs{u}\\
(\cqp{r})  \ \ \ \ \ \ \ \ \ \  \ 			& w_{T}\leq \bs{e_T^\top z_T} \quad&& \forall T\in \Ps\\
		& w_{T}\bs{B_T}-\bs{\beta_T}\bs{\beta_T^\top}\in \Sp^T \quad&&   \forall T\in \Ps\label{eq:socp1_st}\\
		&\bs{\rev{V}_j^\top}\left(\bs{B}- \bs{\beta\beta^\top}\right)\bs{\rev{V}_j}\geq 0 && \forall \rev{j=1,\ldots,\min\{n,p\}} \label{eq:socp1_sp}\\ 
		&\bs{\beta}\in \R^\rev{p},\; \bs{z}\in [0,1]^\rev{p},\; \bs{u}\in \R_+^\rev{p},\; &&\bs{w}\in [0,1]^\rev{|\Ps|},\; \bs{B}\in \R^{\rev{p}\times \rev{p}}.
		\end{alignat}
	\end{subequations}
}
\end{proposition}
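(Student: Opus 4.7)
The plan is to mirror the conic-duality argument used in the proof of Theorem~\ref{theo:sdp}. The only structural change from \sdp{2} to \sdp{LB} is that the single $(p+1)$-dimensional PSD constraint $\bs{B}-\bs{\beta\beta^\top}\in\Sp^P$ has been replaced by the family of linear inequalities $\bs{V_j^\top}(\bs{B}-\bs{\beta\beta^\top})\bs{V_j}\geq 0$, one per eigenvector of $\bs{X^\top X}$. Consequently, I expect the dual of the inner minimization over the matrix variable $\bs{B}$ (and slack variables) to produce matrix-valued dual variables $\bs{A_T}\in\Sp^T$ corresponding to \eqref{eq:socp1_st} exactly as in Theorem~\ref{theo:sdp}, while the linear inequalities \eqref{eq:socp1_sp} will generate nonnegative scalar duals $\pi_j\geq 0$. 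A nonnegative combination $\sum_j \pi_j \bs{V_j V_j^\top} = \bs{V}\diag(\bs{\pi})\bs{V^\top}$ then reconstructs precisely the constrained form $\bs{R}=\bs{V}\diag(\bs{\pi})\bs{V^\top}$ that distinguishes \eqref{eq:LPextraction} from \eqref{eq:optDecomp}.

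First I would write \sdp{LB} as an outer minimization over $(\bs{\beta},\bs{z},\bs{u},\bs{w})$ satisfying the linear constraints, with an inner minimization over $(\bs{B},\bs{\Gamma_T},\bs{\gamma})$ obtained by introducing surplus variables: $\bs{\Gamma_T}\in\Sp^T$ satisfying $w_{T}\bs{B_T}-\bs{\Gamma_T}=\bs{\beta_T\beta_T^\top}$ for each $T\in\Ps$ with $|T|\leq 2$, and scalar slacks $\gamma_j\geq 0$ satisfying $\bs{V_j^\top}\bs{B}\bs{V_j}-\gamma_j=(\bs{V_j^\top\beta})^2$. This inner problem is a linear program over the product of PSD cones $\Sp^T$ and the nonnegative orthant, and I would verify Slater's condition by exhibiting a strictly feasible point (e.g.\ $\bs{\beta}=\bs{0}$, $\bs{B}=\bs{I}$, $\bs{w}$ strictly in $(0,1)$, $\bs{\Gamma_T}=w_T\bs{I_T}\succ 0$, $\bs{\gamma}>\bs{0}$), so conic strong duality applies.

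Next, assigning dual variable $\bs{A_T}\in\Sp^T$ to each defining equation of $\bs{\Gamma_T}$ and scalar dual $\pi_j\geq 0$ to each defining equation of $\gamma_j$, the dual of the inner problem takes the form of \eqref{eq:LPextraction} after substituting $\bar{\bs{A}}_T = w_T \bs{A_T}$ and invoking the same argument as in the proof of Theorem~\ref{theo:sdp} to set $w_T=\min\{1,z(T)\}$ in optimal solutions. The equality constraint $\sum_T \bar{\bs{A}}_T + \sum_j \pi_j \bs{V_j V_j^\top} = \bs{X^\top X}+\lambda\bs{I}$ that emerges corresponds exactly to the decomposition requirement in \eqref{eq:LPextraction} with $\bs{R}=\bs{V}\diag(\bs{\pi})\bs{V^\top}$, and the dual objective $\sum_T \bs{\beta_T^\top \bar A_T\beta_T}/w_T + \sum_j \pi_j(\bs{V_j^\top\beta})^2$ coincides with $\underline{\phi_\Ps}(\bs{z},\bs{\beta})$.

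The main obstacle I anticipate is handling the substitution $\bar{\bs{A}}_T = w_T\bs{A_T}$ at the boundary $w_T=0$, since division by $w_T$ is ill-defined there; my plan is to borrow the same optimality-based reasoning used in the proof of Theorem~\ref{theo:sdp}, which shows that one may either fix $w_T$ to its upper bound or argue that the affected term vanishes in the objective. A secondary bookkeeping point is to ensure that the eigenvectors $\{\bs{V_j}\}_{j=1}^{\min\{n,p\}}$ entering \eqref{eq:socp1_sp} are exactly the columns of the matrix $\bs{V}$ appearing in \eqref{eq:LPextraction}, including truncation to the range of $\bs{X^\top X}$; this is simply a matter of matching indices and uses only that $\bs{X^\top X}$ has rank at most $\min\{n,p\}$.
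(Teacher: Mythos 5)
Your proposal is correct and follows essentially the same route as the paper, whose proof is stated only as ``conic duality similar to the proof of Theorem~\ref{theo:sdp}'': you dualize the inner minimization over $\bs{B}$ with matrix duals $\bs{A_T}\in\Sp^T$ for \eqref{eq:socp1_st} and nonnegative scalar duals $\pi_j$ for \eqref{eq:socp1_sp}, recovering $\bs{R}=\bs{V}\diag(\bs{\pi})\bs{V^\top}$ and hence $\underline{\phi_\Ps}$ after the substitution $\bs{\bar A_T}=w_T\bs{A_T}$ and setting $w_T=\min\{1,z(T)\}$ at optimality. The details you flag (strict feasibility, the $w_T=0$ boundary, matching the eigenvectors of $\bs{X^\top X}$) are handled exactly as in Theorem~\ref{theo:sdp}, so no further work is needed.
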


\begin{proof}
	The proof is based on conic duality similar to the proof of Theorem~\ref{theo:sdp}. 
\end{proof}
Observe that in formulation \eqref{eq:socp1}, the $(p+1)$-dimensional semidefinite constraint \eqref{eq:sdp_sp} is replaced with $\min\{p,n\}$ rank-one quadratic constraints \eqref{eq:socp1_sp}. We denote by \sdp{LB} the relaxation of \sdp{2} obtained by replacing \eqref{eq:sdpr_P} with \eqref{eq:socp1_sp}. In general, \sdp{LB} is still an SDP due to constraints \eqref{eq:socp1_st}; however, note that \sdp{LB} can be implemented in a conic quadratic framework by using cuts, as described in \S\ref{sec:sdd}. Moreover, constraints \eqref{eq:socp1_sp} could also be dynamically refined to better approximate the SDP constraint, or formulation \eqref{eq:socp1} could be improved with ongoing research on approximating SDP via mixed-integer conic quadratic optimization, e.g., see \cite{kocuk2016strong,kocuk2018matrix}.

\rev{
\begin{remark}
	We observe that formulation \eqref{eq:socp1} is solved substantially faster than \sdp{2} (with Mosek) with constraints \eqref{eq:socp1_st} formulated as semi-definite constraints. Indeed, the $\mathcal{O}(p^2)$ low-dimensional constraints \eqref{eq:sdp2_T2} can actually be handled efficiently, but the major computational bottleneck towards solving \sdp{2} is handling the single large-dimensional positive semi-definite constraint \eqref{eq:sdp2_P}.
\end{remark}
}

\ignore{
\begin{remark}[Further rank-one strenghtening]
	Since constraints \eqref{eq:socp1_sp} are rank-one quadratic constraints, additional strengthening can be achieved using the inequalities given in \S\ref{sec:rank1}. Specifically, let $T_j=\left\{i\in P: X_{ji}\neq 0\right\}$, and inequalities \eqref{eq:socp1_sp} may be replaced with stronger versions
	$$\langle \bs{B},\bs{\rev{V}_j\rev{V}_j^\top}\rangle- \frac{\left(\bs{\rev{V}_j^\top\beta}\right)^2}{\min\{1,z(T_j)\}}\geq 0\quad \;\;\;\forall j\rev{=1,\ldots,\min\{n,p\}}.$$
\qed
\end{remark}
}

\section{Computations}\label{sec:computations}

In this section, we report computational experiments with the proposed conic relaxations 
on synthetic as well as benchmark datasets. \rev{Semidefinite} optimization problems are solved with MOSEK 8.1 solver\rev{, and conic quadratic optimization problems (continuous and mixed-integer) are solved with CPLEX 12.8 solver. All computations are performed} on a laptop with a 1.80GHz Intel\textregistered Core\textsuperscript{TM} i7-8550U CPU and 16 GB main memory. All solver parameters were set to their default values. We divide our discussion in two parts: first, in \S\ref{sec:real}, we focus on the relaxation quality of \sdp{r} and its ability to approximate the exact $\ell_0$-problem \eqref{eq:bestSubsetSelection}; then, in \S\ref{sec:synthetic}, we adopt the same experimental framework used in \cite{bertsimas2016best,hastie2017extended} to generate synthetic instances and evaluate the proposed conic formulations from an inference perspective. In both cases, our results compare favorably with existing approaches in the literature. 

\subsection{Datasets} \label{sec:data}We use the benchmark datasets in Table~\ref{tab:diagonalDominance}. The first five were first used in \cite{miyashiro2015mixed} in the context of MIO algorithms for best subset selection, and later used in \cite{gomez2018mixed}. The \texttt{diabetes} dataset with all second interactions was introduced in \cite{efron2004least} in the context of \texttt{lasso}, and later used in \cite{bertsimas2016best}. 
A few datasets require some manipulation to eliminate missing values and handle categorical variables. The processed datasets before standardization\footnote{In our experiments, the datasets were standardized first.} can be  downloaded from \\ \texttt{http://atamturk.ieor.berkeley.edu/data/sparse.regression}.

In addition, we also use synthetic datasets generated similarly to \cite{bertsimas2016best,hastie2017extended}. Here we present a summary of the simulation setup and refer the readers to \cite{hastie2017extended} for an extended description. . For given dimensions $n,p$, sparsity $s$, predictor autocorrelation $\rho$, and signal-to-noise ratio SNR, the instances are generated as follows:
\begin{enumerate}
	\item The (true) coefficients $\bs{\beta}_0$ have the first $s$ components equal to one, and the rest equal to zero.
	\item The rows of the predictor matrix $\bs{X}\in \R^{n\times p}$ are drawn from i.i.d. distributions $\mathcal{N}_p(\bs{0},\bs{\Sigma})$, where $\bs{\Sigma}\in \R^{p\times p}$ has entry $(i,j)$ equal to $\rho^{|i-j|}$.
	\item The response vector $\bs{y}\in \R^n$ is drawn from $\mathcal{N}_p(\bs{X\beta_0},\sigma^2\bs{I})$, where $\sigma^2=\bs{{\beta_0}^\top X\beta_0}/\text{SNR}$.
\end{enumerate}
Similar data generation has been used in the literature \cite{bertsimas2016best,hastie2017extended}. 

\subsection{\rev{Relaxation quality}}
\label{sec:real}

\rev{
In this section we test the ability of \sdp{2}, given in \eqref{eq:sdp2}, and of \sdp{LB}, given in \eqref{eq:socp1}, to provide near-optimal solutions to problem \eqref{eq:bestSubsetSelection}, and compare its performance with MIO approaches. In \S\ref{sec:comp_lambda0}, we focus on the pure best subset selection problem with $\lambda=0$, which has received relatively little attention in the literature \cite{bertsimas2016best}; in \S\ref{sec:comp_lambdaPos} we consider problems with $\ell_0$-$\ell_2$ regularization, which has received more attention in the literature \cite{bertsimas2017sparse,hazimeh2018fast,hazimeh2020sparse,xie2018ccp}; in \S\ref{sec:comp_r} we study the impact of model complexity parameter $r$ on the relaxation quality, and in \S\ref{sec:comp_scal} we study the scalability of the proposed methods. }

\paragraph{\textbf{Computing optimality gaps \rev{for \sdp{r}}}} The optimal objective value $\nu_\ell^*$ of \sdp{r} provides a lower bound on the optimal objective value of \eqref{eq:bestSubsetSelection}. To obtain an upper bound, we use a simple greedy heuristic to retrieve a feasible solution for \eqref{eq:bestSubsetSelection}: given an optimal solution vector ${\bs{
		\bar\beta}}^*$ for \sdp{r}, let $\bar{\beta}_{(k)}^*$ denote the $k$-th largest absolute value. For $T=\left\{i\in P: |\bar\beta_i^*|\geq \bar{\beta}_{(k)}^*\right\}$, let $\bs{\hat \beta_T}$ be the $k$-dimensional \texttt{ols}/\texttt{ridge} estimator using only predictors in $T$, i.e., $$\bs{\hat \beta_T}=(\bs{X_T^\top X_T}+\lambda\bs{I_T})^{-1}\bs{X_T^{\top}y},$$
where $\bs{X_T}$ denotes the $n\times k$ matrix obtained by removing the columns with indexes not in $T$, and let $\bs{\tilde \beta}$ be the $P$-dimensional vector obtained by filling the missing entries in $\bs{\hat \beta_T}$ with zeros.
Since $\|\bs{\tilde\beta}\|_0\leq k$ by construction, $\bs{\tilde\beta}$ is feasible for \eqref{eq:bestSubsetSelection}, and its objective value $\nu_u$ is an upper bound on the optimal objective value of \eqref{eq:bestSubsetSelection}. Moreover, the optimality gap \rev{provided by any approach} can be computed as
\begin{equation}
\label{eq:gap}
\texttt{gap}=\frac{\nu_u-\nu_\ell^*}{\nu_\ell^*}\times 100.
\end{equation}
While stronger relaxations result in improved lower bounds $\nu_\ell^*$, the corresponding heuristic upper bounds $\nu_u$ are not necessarily better; thus, the optimality gaps are not guaranteed to improve with stronger relaxations. Nevertheless, as shown next, stronger relaxations in general yield much smaller gaps in practice.

We point out that the main focus of the strong relaxations is to obtain improved lower bounds $\nu_\ell^*$. 
Randomized rounding methods \cite{pilanci2015sparse,xie2018ccp}, more sophisticated rounding heuristics \cite{dong2015regularization}, or alternative heuristic methods \cite{hazimeh2018fast} can be used to obtain
improved upper bounds. Nevertheless, the quality of the upper bounds obtained from the greedy rounding method can be used to estimate how well the solutions from the relaxations match the sparsity pattern of the optimal solution. 

\subsubsection{\rev{$\lambda=0$ case}}
\label{sec:comp_lambda0}
\rev{
For each dataset with $\lambda=\mu=0$, we solve the conic relaxations of \eqref{eq:bestSubsetSelection} \sdp{1} and \sdp{2} as well as \sdp{LB} and the mixed-integer formulation \mio\, given by \eqref{eq:MIO}--\eqref{eq:bigM}. In our experiments, we set $M=3\|\bs{\beta_\text{ols}}\|_\infty$, where $\bs{\beta_\text{ols}}$ is the ordinary least square estimator\footnote{\citet{bertsimas2016best} set $M=2\|\bs{\hat \beta}\|_\infty$ for some heuristic solution $\bs{\hat \beta}$} and set a time limit of 10 minutes. For data with $p\leq 40$ we solve problems with cardinalities $k\in \{3,\dots,10\}$, and for \texttt{diabetes} and \texttt{crime} we solve problems with $k\in\{3,\dots,30\}$. Table~\ref{tab:resultsLambda0} shows, for each dataset and method, the average lower bound (\texttt{LB}) and upper bound (\texttt{UB}) found by each method, the \texttt{gap} \eqref{eq:gap}, and the \texttt{time} required to solve the problems (in seconds) -- the average is taken across all $k$ values. In all cases, lower and upper bounds are scaled so that the best upper bound for any given instance has value $\nu_u^*=100$.}

\begin{table}[!t]
	\caption{\rev{Results with $\lambda=0$ on real instances. Lower and upper bounds are scaled so that the best upper bound for a given instance has value 100. Mean $\pm$ stdev are reported.}}
	\label{tab:resultsLambda0}
	\scalebox{1.0}{
		\begin{tabular}{c c| r r r | c }
			\hline \hline
			\texttt{dataset}& \texttt{method} & \texttt{LB} & \texttt{UB} & \texttt{gap(\%)}&\texttt{time}\\
			\hline
			\multirow{4}{*}{\texttt{housing}}&\sdp{1}&99.4$\pm$0.6 & 100.1$\pm$0.1 & 0.7$\pm$0.0 & 0.03$\pm$0.02\\
			&\sdp{2}& 99.6$\pm$0.6 & 100.1$\pm$0.1 & 0.5$\pm$0.6 & 0.07$\pm$0.03\\
			&\sdp{LB}& 98.8$\pm$0.6 & 100.4$\pm$0.1 & 1.6$\pm$0.8 & 0.06$\pm$0.03\\
			&\mio& 100.0$\pm$0.0 & 100.0$\pm$0.0 & 0.0$\pm$0.0 & 0.01$\pm$0.01\\
			&&&&&\\
			\multirow{4}{*}{\texttt{servo}}&\sdp{1}&86.8$\pm$5.5 & 109.5$\pm$10.3 & 27.3$\pm$20.6 & 0.02$\pm$0.01\\
			&\sdp{2}& 94.9$\pm$2.9 & 106.2$\pm$16.5 & 12.2$\pm$19.8 & 0.10$\pm$0.01\\
			&\sdp{LB}& 89.5$\pm$2.7 & 109.2$\pm$15.5 & 21.8$\pm$14.9 & 0.17$\pm$0.03\\
			&\mio$\dagger$& $\dagger$ & $\dagger$ & $\dagger$ & $\dagger$\\
			&&&&&\\
			\multirow{4}{*}{\texttt{auto MPG}}&\sdp{1}&75.3$\pm$10.3 & 115.3$\pm$6.0 & 55.8$\pm$23.7 & 0.07$\pm$0.04\\
			&\sdp{2}& 96.7$\pm$3.3 & 100.5$\pm$0.8 & 4.0$\pm$4.2 & 0.24$\pm$0.02\\
			&\sdp{LB}& 78.8$\pm$7.7 & 101.6$\pm$2.7 & 30.0$\pm$14.0 & 0.40$\pm$0.09\\
			&\mio$\dagger$& $\dagger$ & $\dagger$ & $\dagger$ & $\dagger$\\
			&&&&&\\
			\multirow{4}{*}{\texttt{solar flare}}&\sdp{1}&97.5$\pm$1.5 & 103.3$\pm$1.1 & 6.0$\pm$2.0 & 0.07$\pm$0.03\\
			&\sdp{2}& 99.2$\pm$0.8 & 100.0$\pm$0.0 & 1.0$\pm$0.6 & 0.28$\pm$0.06\\
			&\sdp{LB}& 97.8$\pm$1.6 & 102.3$\pm$1.9 & 4.6$\pm$2.7 & 0.13$\pm$0.02\\
			&\mio$\dagger\dagger$& 98.1$\pm$1.7  & 98.1$\pm$1.7 & - & 0.01$\pm$0.01\\
			&&&&&\\
			\multirow{4}{*}{\texttt{breast cancer}}&\sdp{1}&88.9$\pm$3.1 & 101.5$\pm$1.7 & 14.4$\pm$5.6 & 0.15$\pm$0.02\\
			&\sdp{2}& 98.0$\pm$0.6 & 100.4$\pm$0.8 & 2.4$\pm$1.1 & 0.77$\pm$0.07\\
			&\sdp{LB}& 94.8$\pm$0.5 & 100.5$\pm$0.7 & 6.0$\pm$0.5 & 0.40$\pm$0.03\\
			&\mio$\dagger$& $\dagger$  & $\dagger$ & $\dagger$ & $\dagger$\\
			&&&&&\\
			\multirow{4}{*}{\texttt{diabetes}}&\sdp{1}&95.2$\pm$3.2 & 115.2$\pm$11.8 & 22.2$\pm$16.3 & 3.58$\pm$0.77\\
			&\sdp{2}& 97.4$\pm$1.3 & 105.4$\pm$4.2 & 8.2$\pm$5.2 & 9.28$\pm$1.12\\
			&\sdp{LB}$\dagger$& $\dagger$ & $\dagger$ & $\dagger$ & $\dagger$\\
			&\mio& 99.0$\pm$0.9  & 100.0$\pm$0.0 & 1.0$\pm$0.9 & 416.17$\pm$260.57 \\
			&&&&&\\
			\multirow{4}{*}{\texttt{crime}}&\sdp{1}&97.8$\pm$1.3 & 103.2$\pm$2.4 & 5.6$\pm$3.6 & 17.82$\pm$0.98\\
			&\sdp{2}& 99.0$\pm$0.8 & 101.6$\pm$2.0 & 2.7$\pm$2.7 &45.29$\pm$4.06\\
			&\sdp{LB}& 94.6$\pm$2.0 & 109.7$\pm$2.8 & 16.0$\pm$4.9 & 5.87$\pm$0.43\\
			&\mio& 96.4$\pm$1.7  & 100.0$\pm$0.0 & 3.7$\pm$1.8 & 527.03$\pm$185.64 \\
			\hline \hline
			\multicolumn{5}{l}{$\dagger$ Error in solving problem.}\\
			\multicolumn{5}{l}{$\dagger\dagger$ Infeasible solution is reported as optimal.}\\
	\end{tabular}}
\end{table}

\rev{The \mio\ method is highly inconsistent and prone to numerical difficulties, due to the use of big-$M$ constraints. First, for three datasets (\texttt{servo}, \texttt{auto MPG} and \texttt{breast cancer}) the method fails due to numerical issues (``failure to solve MIP subproblem"). In addition, for \texttt{solar flare} the solver reports very fast solution times but the solutions are in fact infeasible for problem \eqref{eq:bestSubsetSelection}: by default in CPLEX, if $z_i\leq 10^{-5}$ in a solution then $z_i$ is deemed to satisfy the integrality constraint $z_i\in \{0,1\}$. Thus, if the big-$M$ constant is large enough, then constraint \eqref{eq:bigM} may in fact allow nonzero values for $\beta_i$ even when ``$z_i=0$". In particular, in \texttt{solar flare} we found that the solution $\bs{\beta_\text{mio}}$ reported by the MIO solver satisfies\footnote{We consider $\beta_i\neq 0$ whenever $\|\beta_i\|>10^{-4}$.} $\|\bs{\beta_\text{mio}}\|_0=20$, regardless of the value of $k$ used, violating the sparsity constraint. We also point out that \sdp{LB} struggles with numerical difficulties in \texttt{diabetes}: the problems are incorrectly found to be unbounded. In contrast, \sdp{r} methods are solved without numerical difficulties.}

\rev{In terms of the relaxation quality, we find that \sdp{2} is the best as expected. It consistently delivers better lower and upper bounds compared to the other conic relaxations, and even outperforming \mio\ in terms of lower bounds and gaps in the largest dataset (\texttt{crime}). The strength of the relaxation comes at the expense 2--4-fold larger computation time than \sdp{1}, but on the other hand \sdp{2} is substantially faster than \texttt{big-M}\ on large datasets. We see that neither \sdp{1} nor \sdp{LB} dominates each other in terms of relaxation quality. While \sdp{1} is faster on the smaller datasets, \sdp{LB} is faster on \texttt{crime}, indicating that \sdp{LB} may scale better (we corroborate this statement in \S\ref{sec:comp_scal}). Finally \mio, in datasets where numerical issues do not occur, is able to find high quality solutions consistently, but struggles to find matching lower bound in larger instances, despite significantly higher computation time spent.}

\rev{Figures~\ref{fig:diabetesl0} and \ref{fig:crimel0} present detailed results on lower bounds and gaps as a function of the sparsity parameter $k$ for the \texttt{diabetes} and \texttt{crime} datasets. For small values of $k$, \texttt{big-M} is arguably the best method, solving the problems to optimality. However, as $k$ increases, the quality of the lower bounds and gaps deteriorate: for \texttt{diabetes}, \sdp{2} finds better solutions than \texttt{big-M} for $k\geq 18$; for \texttt{crime}, \sdp{1} and \sdp{2} find better lower bounds for $k\geq 8$ (and, in the case of \sdp{2}, better gaps as well), and \sdp{LB} matches the lower bound found by \texttt{big-M} for $k\geq 14$, despite requiring only five seconds (instead of 10 minutes) to find such lower bounds. Observe that the number of possible supports ${p \choose k}=\mathcal{O}(p^k)$ for problem \eqref{eq:bestSubsetSelection} scales exponentially with $k$, thus enumerative methods such as branch-and-bound may struggle as $k$ grows.  }

\begin{figure}[!h]
	\centering
	\subfloat[Lower bounds]{\includegraphics[width=0.49\textwidth,trim={11cm 5.8cm 11cm 5.8cm},clip]{./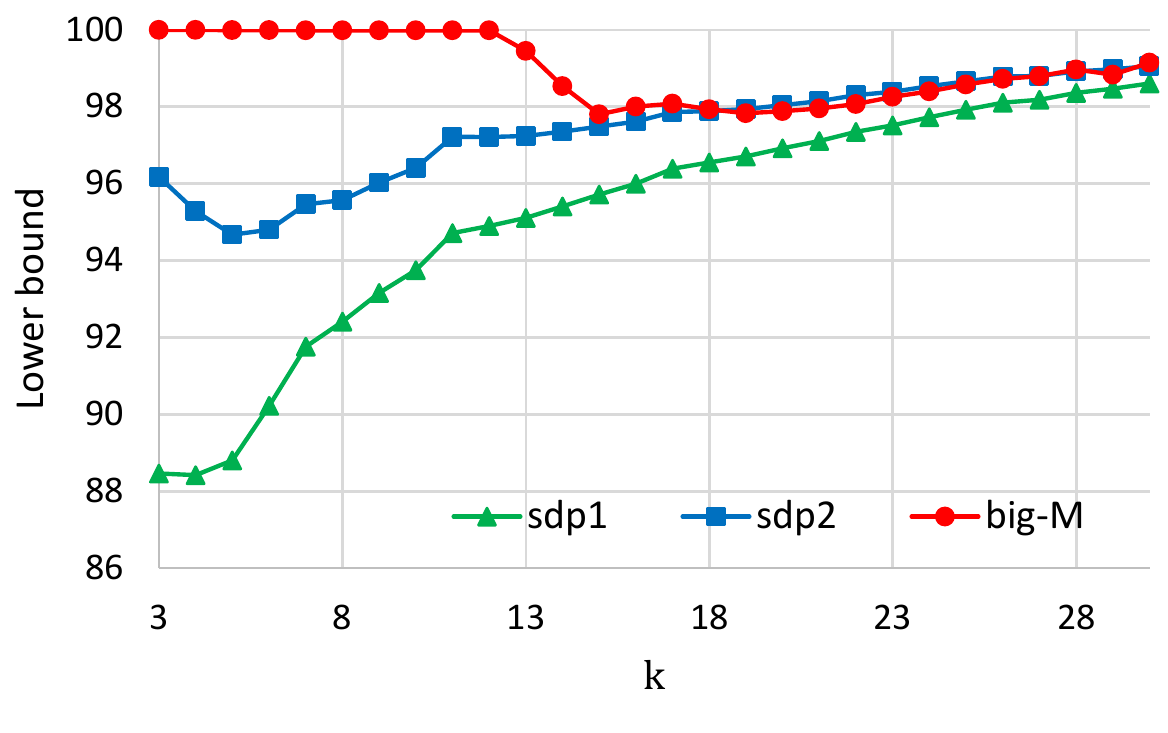}}\hfill\subfloat[Gaps]{\includegraphics[width=0.49\textwidth,trim={11cm 5.8cm 11cm 5.8cm},clip]{./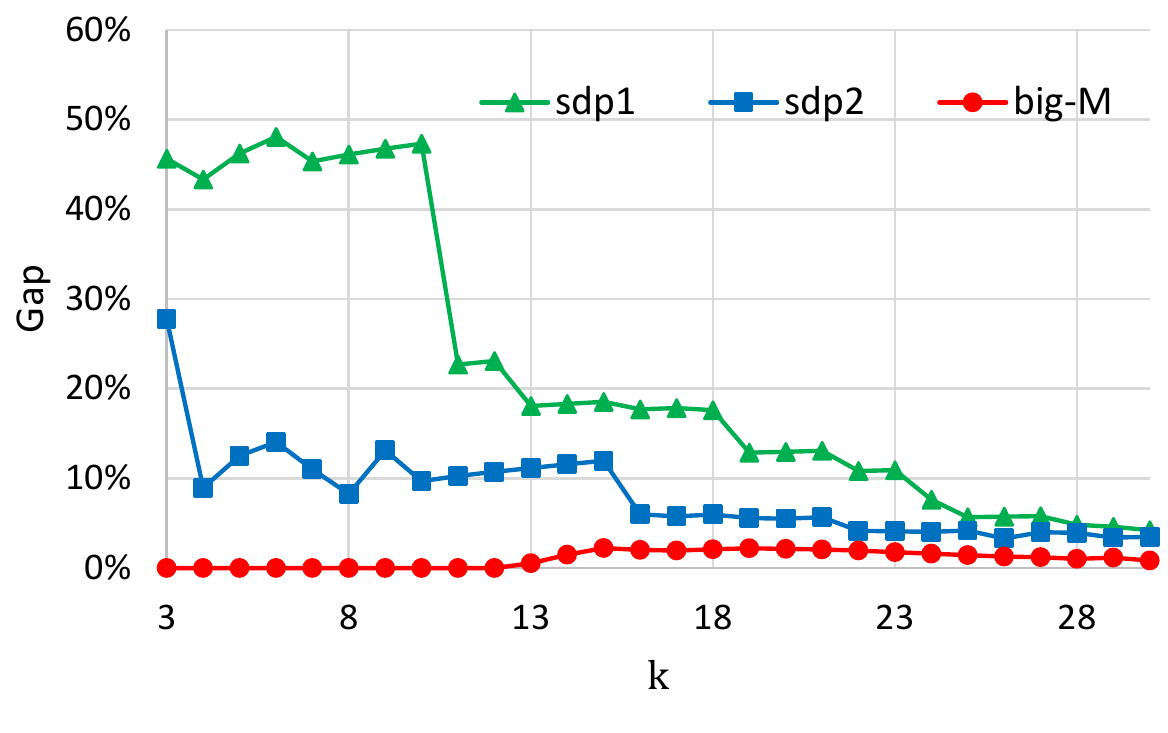}}
	\caption{\small Detailed results on the \texttt{diabetes} dataset with $\lambda=0$.}
	\label{fig:diabetesl0}
\end{figure}

\begin{figure}[!h]
	\centering
	\subfloat[Lower bounds]{\includegraphics[width=0.49\textwidth,trim={11cm 5.8cm 11cm 5.8cm},clip]{./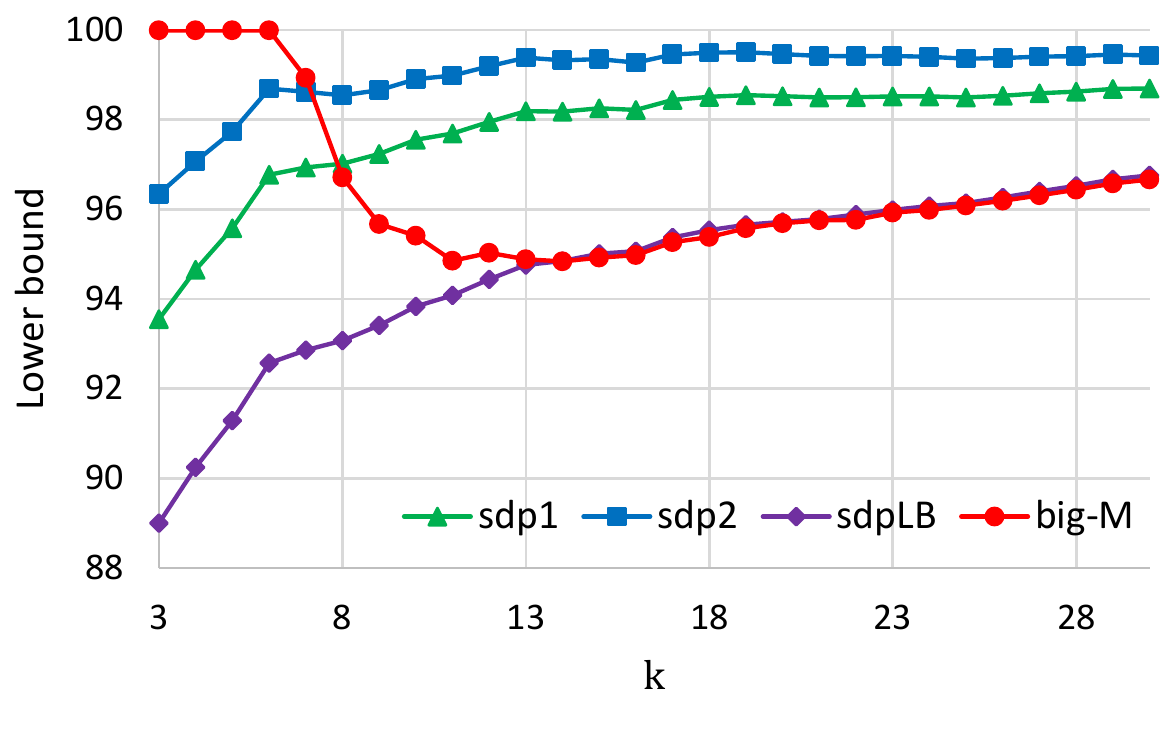}}\hfill\subfloat[Gaps]{\includegraphics[width=0.49\textwidth,trim={11cm 5.8cm 11cm 5.8cm},clip]{./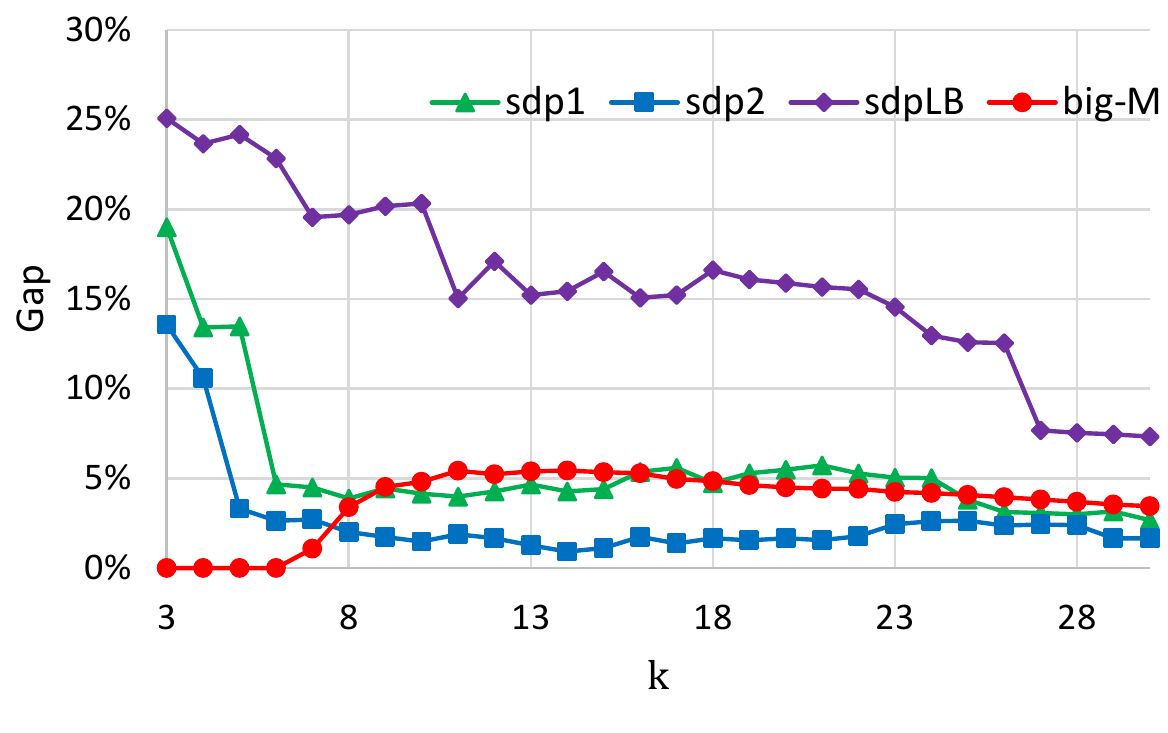}}
	\caption{\small Detailed results on the \texttt{crime} dataset with $\lambda=0$.}
	\label{fig:crimel0}
\end{figure}

\subsubsection{\rev{$\lambda>0$ case}}
\label{sec:comp_lambdaPos}

\rev{For each dataset with\footnote{\rev{Since data is standardized so that each column has unit norm, a value of $\lambda=0.05$ corresponds to an increase of 5\% in the diagonal elements of the matrix $\bs{X^\top X}+\lambda \bs{I}$.}} $\lambda=0.05$ and $\mu=0$, we solve the conic relaxations of \eqref{eq:bestSubsetSelection} \sdp{1}, \sdp{2} and \sdp{LB} and the ``big-$M$ free" mixed-integer formulation \eqref{eq:MIOPersp} with a time limit of 10 minutes (\mioP). This MIO formulation is possible since $\lambda>0$, and has been shown to be competitive \cite{xie2018ccp,hazimeh2020sparse} with the tailored algorithm proposed in \cite{bertsimas2017sparse}. For datasets with $p\leq 40$ we solve the problems with cardinalities $k\in \{3,\dots,10\}$, and for \texttt{diabetes} and \texttt{crime} we solve the problems with $k\in\{3,\dots,30\}$.
Table~\ref{tab:resultsLambda005} shows, for each dataset and method, the average lower bound (\texttt{LB}) and upper bound (\texttt{UB}) found by each method, the \texttt{gap} \eqref{eq:gap}, and the \texttt{time} required to solve the problems (in seconds) -- the average is taken across all $k$ values. In all cases, lower and upper bounds are scaled so that the best upper bound for any given instance has value $\nu_u^*=100$.}

\begin{table}[!h]
	\caption{\rev{Results with $\lambda=0.05$ on real instances. Lower and upper bounds are scaled so that the best upper bound for a given instance has value 100. Mean $\pm$ stdev are reported.}}
	\label{tab:resultsLambda005}
	\scalebox{1.0}{
		\begin{tabular}{c c| r r r | c }
			\hline \hline
			\texttt{dataset}& \texttt{method} & \texttt{LB} & \texttt{UB} & \texttt{gap(\%)}&\texttt{time}\\
			\hline
			\multirow{4}{*}{\texttt{housing}}&\sdp{1}&99.7$\pm$0.4 & 100.2$\pm$0.3 & 0.5$\pm$0.6 & 0.03$\pm$0.02\\
			&\sdp{2}& 99.8$\pm$0.3 & 100.1$\pm$0.2 & 0.3$\pm$0.5 & 0.06$\pm$0.02\\
			&\sdp{LB}& 99.5$\pm$0.4 & 100.3$\pm$0.3 & 0.8$\pm$0.6 & 0.06$\pm$0.02\\
			&\mioP& 100.0$\pm$0.0 & 100.0$\pm$0.0 & 0.0$\pm$0.0 & 0.11$\pm$0.03\\
			&&&&&\\
			\multirow{4}{*}{\texttt{servo}}&\sdp{1}&95.9$\pm$3.0 & 102.2$\pm$6.7 & 6.7$\pm$4.1 & 0.03$\pm$0.01\\
			&\sdp{2}& 99.5$\pm$0.5 & 100.6$\pm$1.1 & 1.1$\pm$1.6 & 0.11$\pm$0.01\\
			&\sdp{LB}& 97.6$\pm$1.4 & 102.0$\pm$2.1 & 4.6$\pm$3.3 & 0.16$\pm$0.02\\
			&\mioP& 100.0$\pm$0.0 & 100.0$\pm$0.0 & 0.0$\pm$0.0 & 0.28$\pm$0.13\\
			&&&&&\\
			\multirow{4}{*}{\texttt{auto MPG}}&\sdp{1}&89.1$\pm$6.1 & 101.4$\pm$1.2 & 14.4$\pm$8.5 & 0.05$\pm$0.01\\
			&\sdp{2}& 99.8$\pm$0.2 & 100.0$\pm$0.1 & 0.2$\pm$0.3 & 0.25$\pm$0.04\\
			&\sdp{LB}& 92.7$\pm$3.1 & 101.1$\pm$1.5 & 9.2$\pm$4.0 & 0.35$\pm$0.02\\
			&\mioP& 100.0$\pm$0.0 & 100.0$\pm$0.0 & 0.0$\pm$0.0 & 1.29$\pm$0.60\\
			&&&&&\\
			\multirow{4}{*}{\texttt{solar flare}}&\sdp{1}&99.3$\pm$0.5 & 100.1$\pm$0.1 & 0.8$\pm$0.5 & 0.07$\pm$0.01\\
			&\sdp{2}& 99.9$\pm$0.1 & 100.1$\pm$0.1 & 0.2$\pm$0.1 & 0.28$\pm$0.03\\
			&\sdp{LB}& 99.2$\pm$0.7 & 100.4$\pm$1.2 & 1.2$\pm$1.0 & 0.16$\pm$0.03\\
			&\mioP& 100.0$\pm$0.0  & 100.0$\pm$0.0 & 0.0$\pm$0.0 & 1.75$\pm$1.07\\
			&&&&&\\
			\multirow{4}{*}{\texttt{breast cancer}}&\sdp{1}&94.9$\pm$1.8 & 100.8$\pm$0.4 & 6.3$\pm$2.4 & 0.18$\pm$0.04\\
			&\sdp{2}& 99.6$\pm$0.2 & 100.1$\pm$0.2 & 0.5$\pm$0.3 & 0.72$\pm$0.06\\
			&\sdp{LB}& 97.5$\pm$0.6 & 100.5$\pm$0.4 & 2.9$\pm$0.9 & 0.36$\pm$0.05\\
			&\mioP& 100.0$\pm$0.0  & 100.0$\pm$0.0 & 0.0$\pm$0.0 & 56.12$\pm$44.34\\
			&&&&&\\
			\multirow{4}{*}{\texttt{diabetes}}&\sdp{1}&98.9$\pm$0.6 & 100.2$\pm$0.2 & 1.2$\pm$0.7 & 2.13$\pm$0.24\\
			&\sdp{2}& 99.6$\pm$0.2 & 100.1$\pm$0.1 & 0.5$\pm$0.3 & 5.83$\pm$0.79\\
			&\sdp{LB}& 98.2$\pm$1.3 & 100.3$\pm$0.3 & 2.2$\pm$1.4 & 1.48$\pm$0.18\\
			&\mioP& 99.4$\pm$0.5  & 100.0$\pm$0.0 & 0.6$\pm$0.5 & 441.90$\pm$258.29\\
			&&&&&\\
			\multirow{4}{*}{\texttt{crime}}&\sdp{1}&99.3$\pm$0.9 & 100.3$\pm$0.9 & 1.1$\pm$1.7 & 19.15$\pm$1.30\\
			&\sdp{2}& 99.7$\pm$0.4 & 100.2$\pm$0.8 & 0.5$\pm$1.0 & 43.86$\pm$2.38\\
			&\sdp{LB}& 98.7$\pm$1.0 & 100.7$\pm$1.3 & 2.0$\pm$2.3 & 5.30$\pm$0.35\\
			&\mioP& 99.5$\pm$0.4  & 100.1$\pm$0.1 & 0.6$\pm$0.4 & 518.03$\pm$175.65\\
			\hline \hline
	\end{tabular}}
\end{table}

\rev{We observe that instances with $\lambda=0.05$ are much easier to solve than those with $\lambda=0$: no numerical issues occur for \sdp{LB} or \texttt{persp}, and lower and upper bounds are much better for all methods. The mixed integer formulation \texttt{persp} comfortably solves the small instances with $p\leq 40$ to optimality, but \sdp{2} yields better lower bounds and gaps for the larger instances \texttt{diabetes} and \texttt{crime} in a fraction of the time used by \texttt{persp}. }

\rev{Figures~\ref{fig:diabetesl2} and \ref{fig:crimel2} present lower bounds and gaps as a function of the regularization parameter $\lambda$, for \texttt{diabetes} and \texttt{crime} datasets (with $k=15$).
We observe that for low value of $\lambda$, \texttt{persp} struggles to find good lower bounds, e.g., it is outperformed by all conic relaxations in \texttt{crime} for $\lambda\leq 0.02$, and is worse than \sdp{2} for $\lambda \leq 0.1$ in terms of lower bounds and gaps in both datasets. As $\lambda$ increases, all methods deliver better bounds, and \texttt{persp} is eventually able to solve all problems to optimality. }

\begin{figure}[!h]
	\centering
	\subfloat[Lower bounds]{\includegraphics[width=0.49\textwidth,trim={11cm 5.8cm 11cm 5.8cm},clip]{./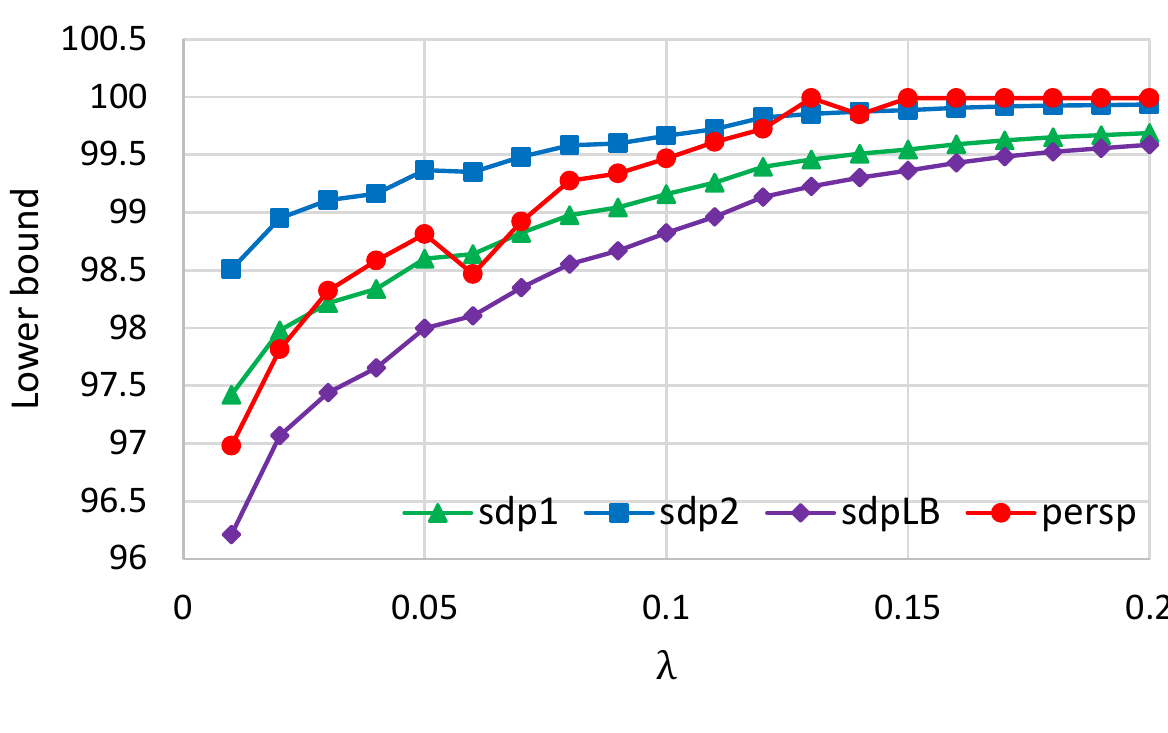}}\hfill\subfloat[Gaps]{\includegraphics[width=0.49\textwidth,trim={11cm 5.8cm 11cm 5.8cm},clip]{./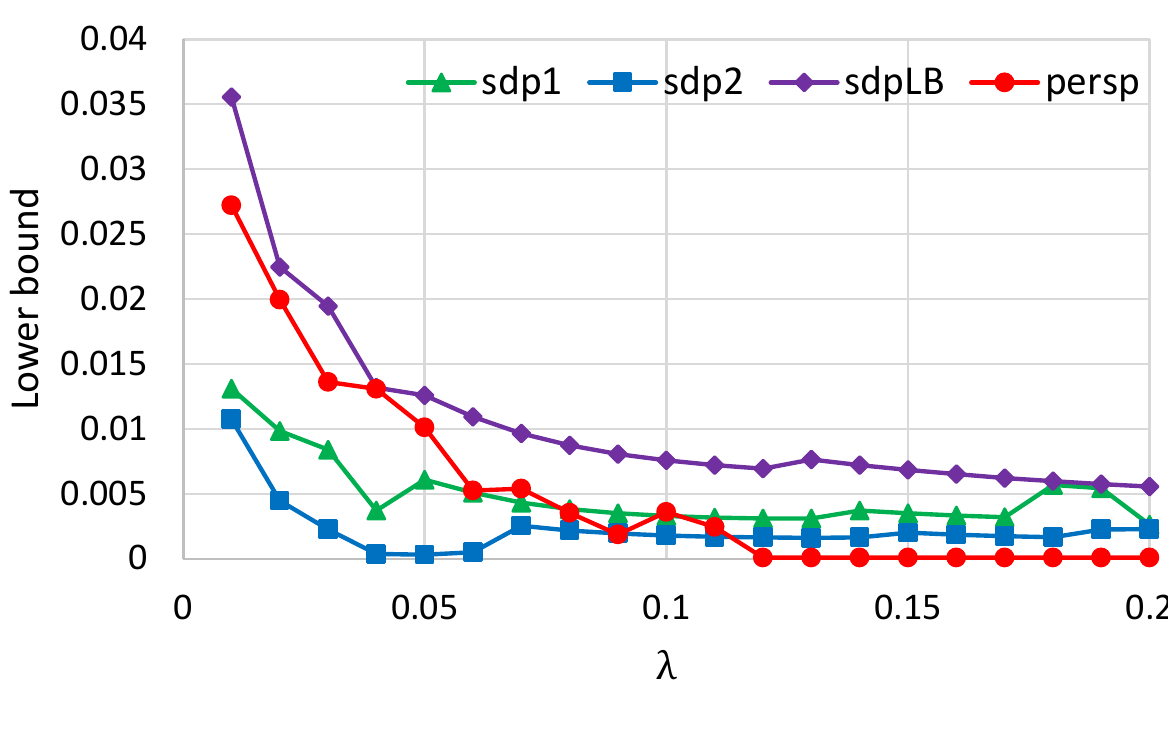}}
	\caption{\small Detailed results on the \texttt{diabetes} dataset with $k=15$.}
	\label{fig:diabetesl2}
\end{figure}

\begin{figure}[!h]
	\centering
	\subfloat[Lower bounds]{\includegraphics[width=0.49\textwidth,trim={11cm 5.8cm 11cm 5.8cm},clip]{./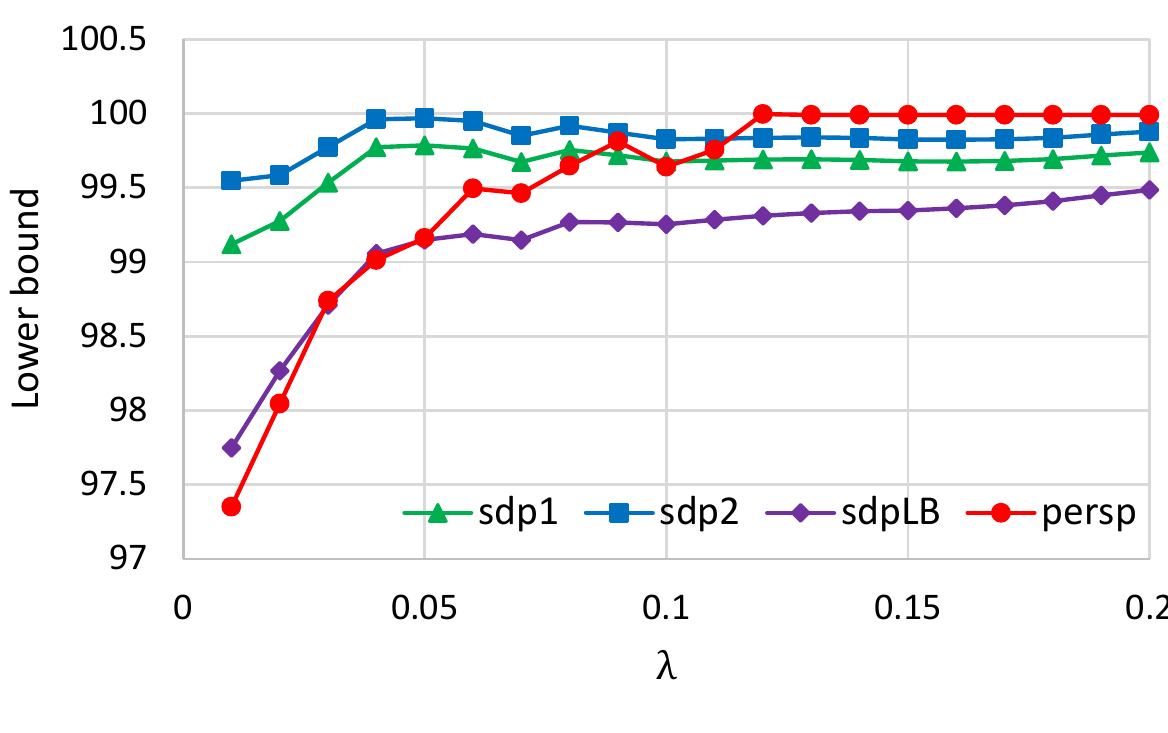}}\hfill\subfloat[Gaps]{\includegraphics[width=0.49\textwidth,trim={11cm 5.8cm 11cm 5.8cm},clip]{./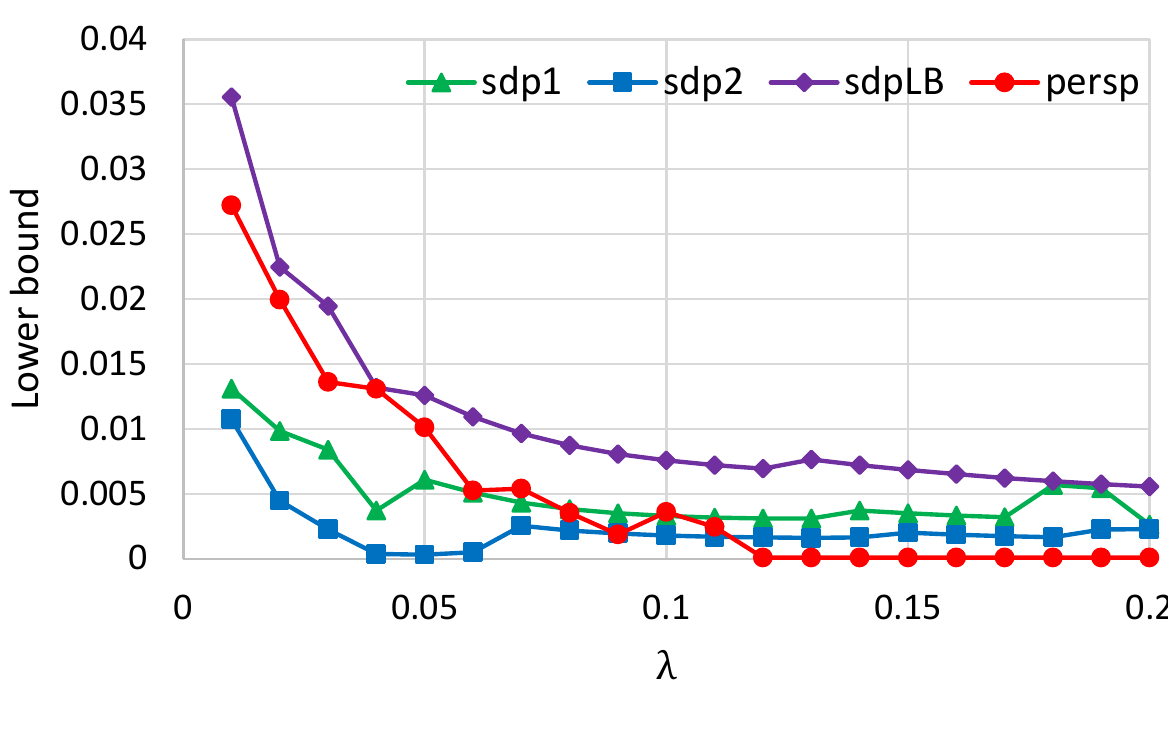}}
	\caption{\small Detailed results on the \texttt{crime} dataset with $k=15$.}
	\label{fig:crimel2}
\end{figure}

\rev{As expected, the performance of \texttt{persp} improves as $\lambda$ increases. The perspective relaxation discussed in \S\ref{sec:perspective} exploits the separable terms introduced by the $\ell_2$-regularization: as $\lambda$ increases, this separable terms have a larger weight in the objective, and the strength of the relaxation improves as a consequence. Note that the conic relaxations also improve with larger $\lambda$: they are based on decompositions of the matrix $\bs{X^\top X}+\lambda \bs{I}$ into one- and two-variable terms, and the addition of the separable terms allows for a much richer set of decompositions. For large values of $\lambda$, $\bs{X^\top X}+\lambda \bs{I}$ becomes highly diagonal dominant, and
the perspective relaxation alone provides a substantial strengthening. In this case, the advanced conic relaxations have a marginal impact and MIO methods with perspective strengthening performs better overall. In contrast, for low values of $\lambda$, the conic relaxations result in substantial strengthening over the perspective relaxation, and \sdp{r} outperforms \texttt{persp} as a consequence.  }

\subsubsection{\rev{The effect of model complexity $r$}}
\label{sec:comp_r}

\rev{In \S\ref{sec:comp_lambda0}--\ref{sec:comp_lambdaPos} we reported computations with $\sdp{r}$ with $r\leq 2$. In experiments with those datasets, \sdp{3} yields almost the same strengthening as \sdp{2}, but with much larger computational cost. Since \sdp{2} already achieves gaps close to $0$ in those instances, there is little room for improvement with higher values of $r$.}

\rev{If the matrix $\bs{X^\top X}+\lambda\bs{I}$ has high rank, which happens if $n>p$ or if $\lambda$ is large, then there are many ways to decompose it into low-dimensional rank-one terms, and \sdp{r} with $r$ small achieves good relaxations. In contrast, if the matrix $\bs{X^\top X}+\lambda\bs{I}$ has low rank, it may be difficult to extract low-dimensional rank-one terms. In the extreme case of a rank-one case matrix, while \sdp{p} results in the convex description, \sdp{r} with $r<p$ achieves no improvement. 
In this section we illustrate this phenomenon on 
small synthetic instances with $p=15$ and $n=10$.}

\rev{Specifically, we set the true sparsity to $s=5$, autocorrelation $\rho=0.35$, signal-noise-ration $\text{SNR}\in \{1,5\}$, sparsity $k\in\{3,4,5,6,7,8\}$, and for each combination of parameters we generate five instances. We report in Figure~\ref{fig:effectR} the gaps obtained by \sdp{r} for different values of $r$ and $\lambda$ -- averaging across instances and different values of SNR and $k$. In addition, Figure~\ref{fig:timeR} depicts the distribution of computational times required to solve the problems.} 

\begin{figure}[!h]
	\centering
	\subfloat[$\lambda=0$]{\includegraphics[width=0.33\textwidth,trim={9.5cm 5.8cm 10cm 5.8cm},clip]{./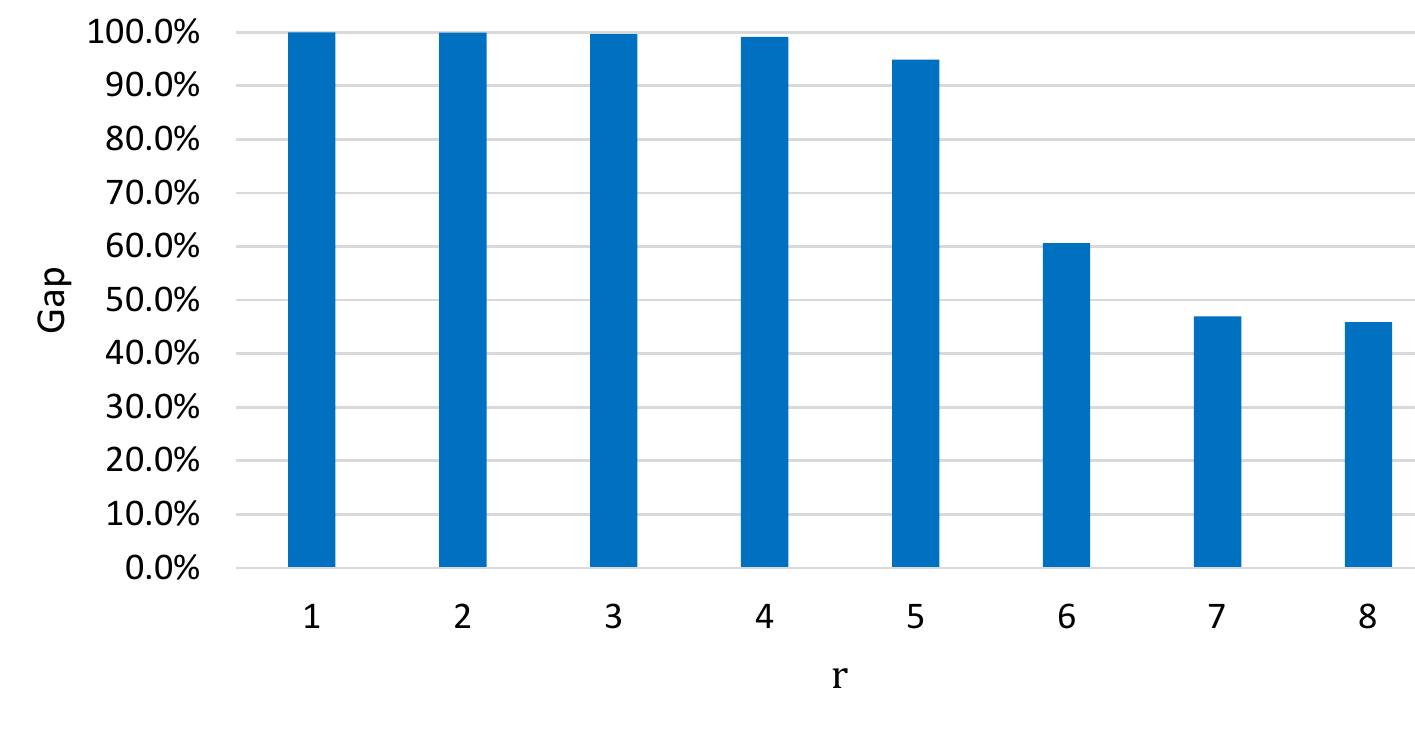}}\hfill	\subfloat[$\lambda=0.01$]{\includegraphics[width=0.33\textwidth,trim={9.5cm 5.8cm 10cm 5.8cm},clip]{./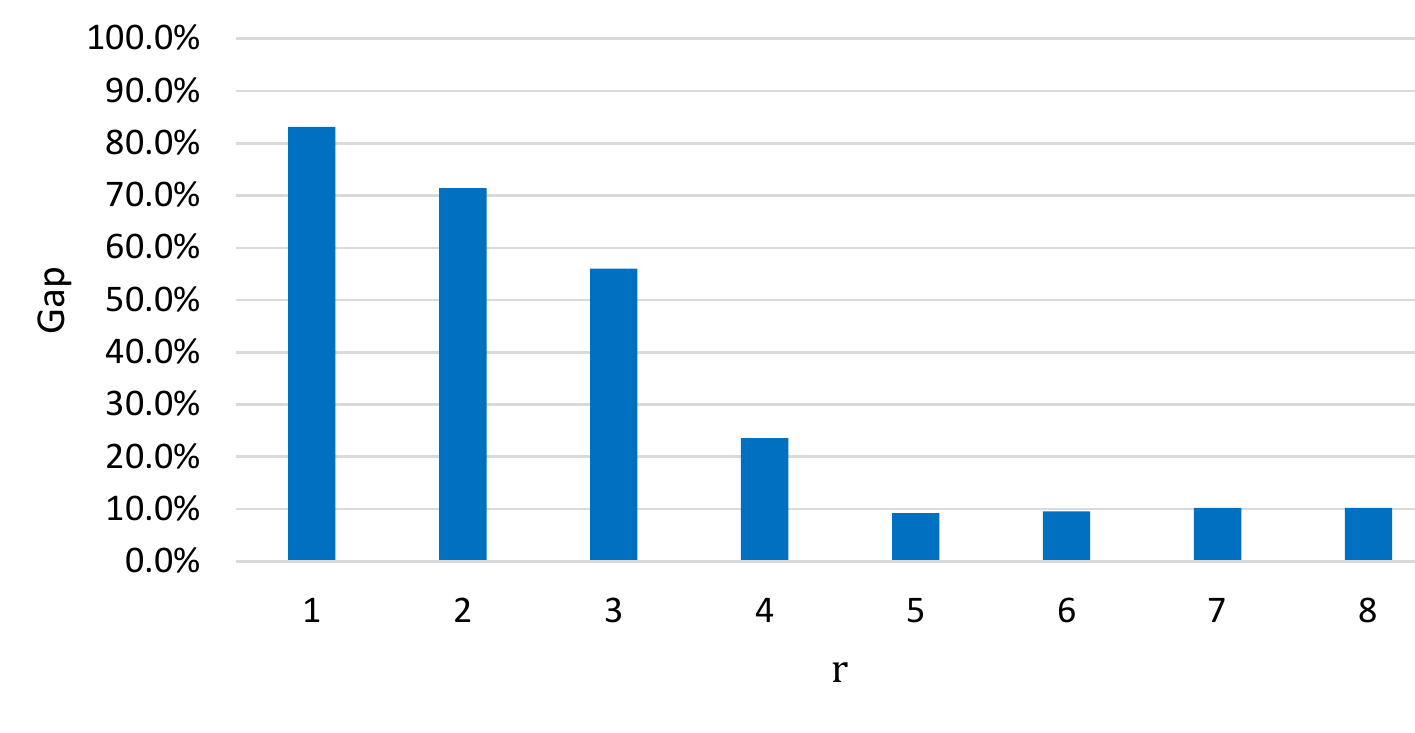}}\hfill	\subfloat[$\lambda=0.02$]{\includegraphics[width=0.33\textwidth,trim={9.5cm 5.8cm 10cm 5.8cm},clip]{./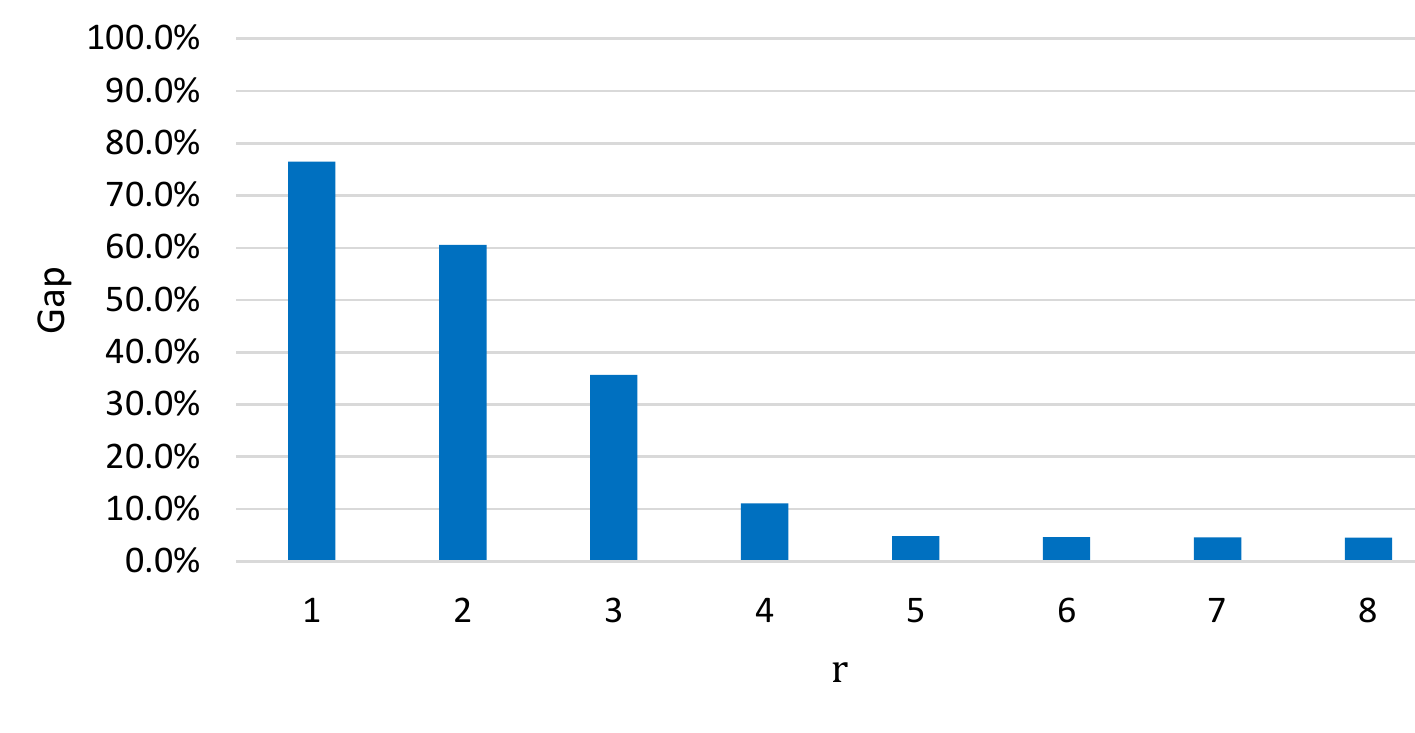}} \\
	\subfloat[$\lambda=0.05$]{\includegraphics[width=0.33\textwidth,trim={9.5cm 5.8cm 10cm 5.8cm},clip]{./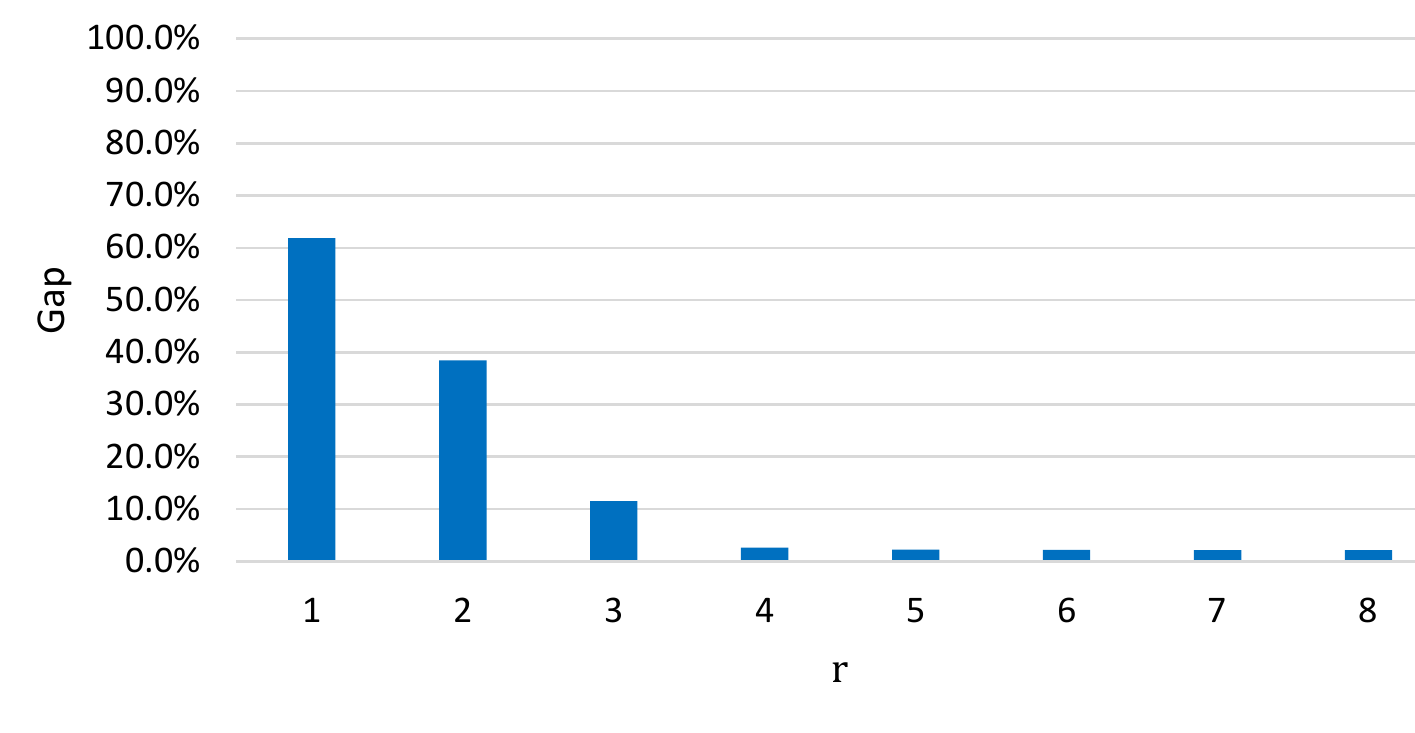}}\hfill	\subfloat[$\lambda=0.10$]{\includegraphics[width=0.33\textwidth,trim={9.5cm 5.8cm 10cm 5.8cm},clip]{./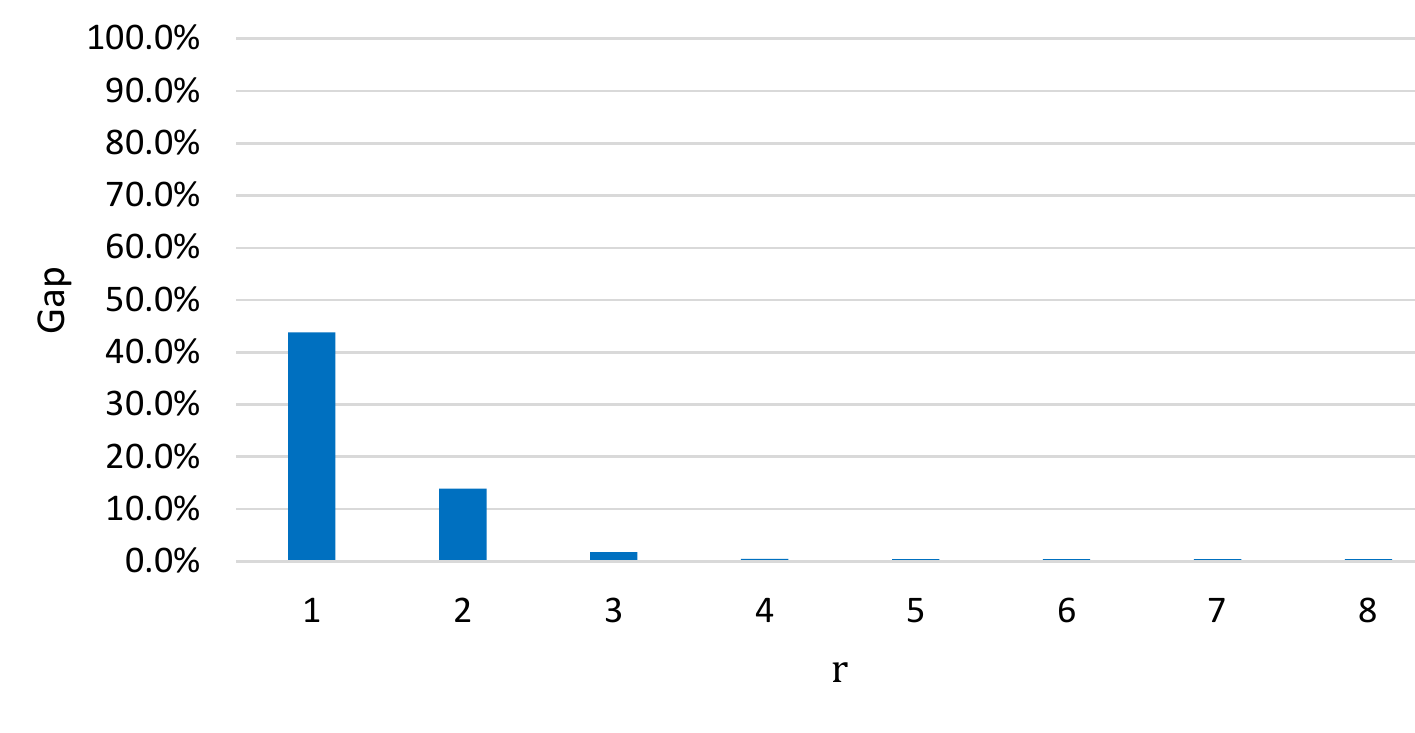}}\hfill	\subfloat[$\lambda=0.15$]{\includegraphics[width=0.33\textwidth,trim={9.5cm 5.8cm 10cm 5.8cm},clip]{./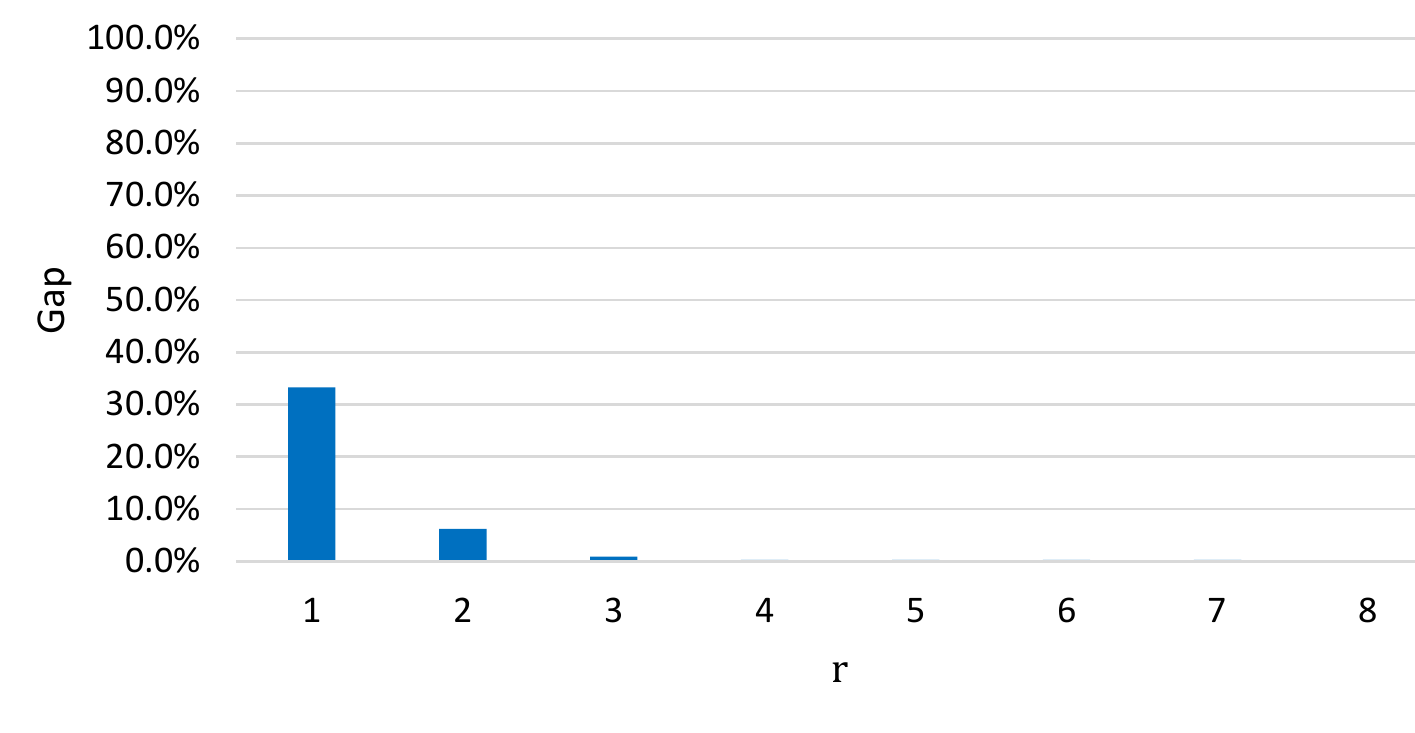}}
	\caption{\small \rev{Optimality gaps of \sdp{r}, $1\leq r \leq 8$}.}
	\label{fig:effectR}
\end{figure}

\begin{figure}
	\includegraphics[width=0.6\textwidth,trim={9.5cm 5.8cm 11cm 5.8cm},clip]{./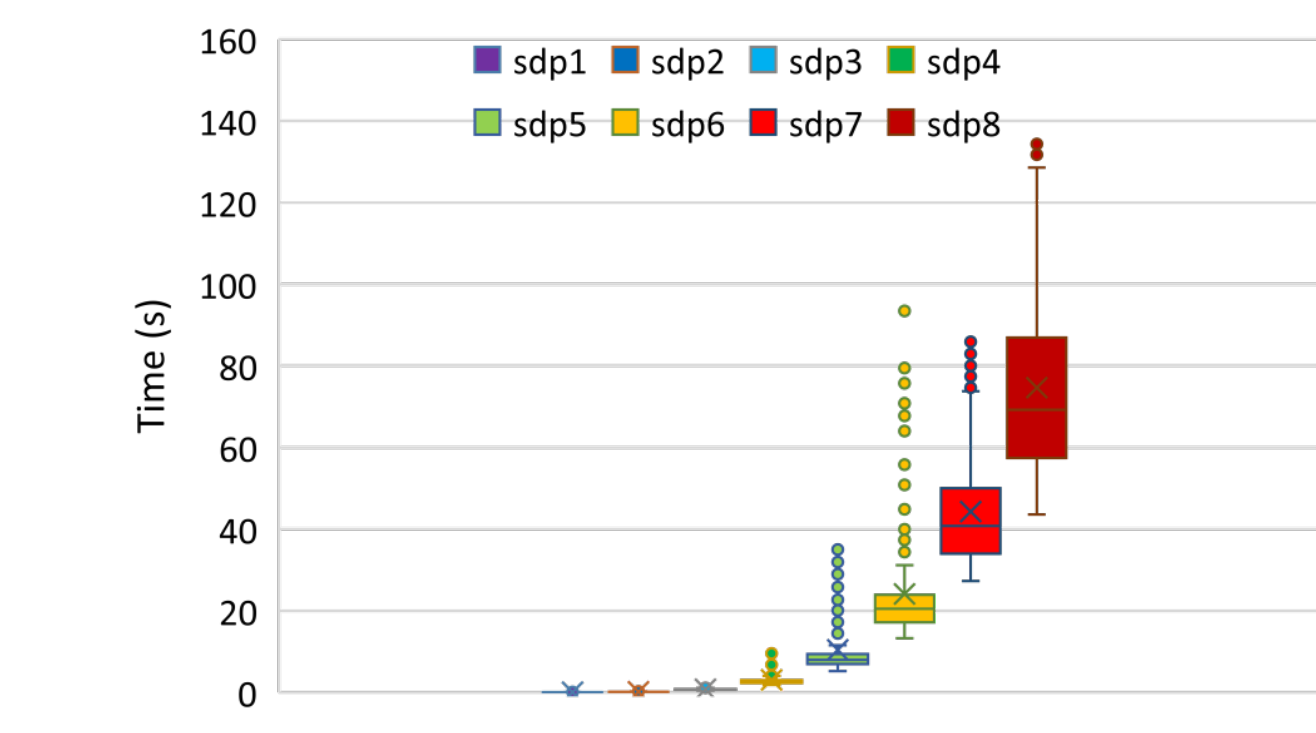}
		\caption{\small \rev{Time required to solve \sdp{r}, $1\leq r \leq 8$}.}
	\label{fig:timeR}
\end{figure}

\rev{We observe that for $\lambda=0$, \sdp{r} with $r\leq 4$ results in no strengthening and gaps of 100\%; \sdp{5} results in a small improvement (note that $5=p-n$), while \sdp{r} with $r\geq 6$ results in larger improvements. These results suggest that, with $\lambda=0$, stronger formulations require rank-one strengthening with at least $p-n$ variables. We also observe that, as $\lambda$ increases, the gaps reported by all methods decrease substantially, and the incremental strengthening obtained from larger values of $r$ decreases: for $\lambda\geq 0.05$ \sdp{4} performs almost identical to \sdp{8}, and for $\lambda=0.15$ \sdp{3} is similar to \sdp{8} and \sdp{2} already results in low optimality gaps. The computational time required to solve \sdp{r} scales exponentially with $r$ since the number of constraints increases exponentially as well. We conclude that \sdp{2} is well suited for the $p>n$ case or for medium values of $\lambda$ (for larger values \sdp{1} or even the simple perspective relaxation may be preferable), while \sdp{r} with $r\geq 3$ achieves a good improvement in relaxation quality for low values of $\lambda$, at the expense of larger computational times. }

\subsubsection{\rev{On scalability}}
\label{sec:comp_scal}

\rev{As discussed in \S\ref{sec:comp_r}, formulation \sdp{r} for large values of $r$ can be expensive to solve. Moreover, even \sdp{1} and \sdp{2} are semidefinite programs, which may not scale well for large values of $p$. In this section we present computations illustrating that while this is indeed the case, formulation \sdp{LB} --which replaces the semidefinite constraint $\bs{B}-\bs{\beta\beta'}\in S_+^P$ with the quadratic constraints \eqref{eq:socp1_sp}-- scales much better and in fact can significantly outperform \texttt{persp} in terms of relaxation quality. }

\rev{We generate synthetic instances with $p\in \{100,150,\dots,500\}$, $n=500$, true sparsity parameter $s=30$, autocorrelation $\rho=0.35$, signal-noise-ration $\text{SNR}\in \{1,5\}$, sparsity $k=30$; for each combination of parameters we generate five instances, and solve them for $\lambda\in \{0.01,0.02,0.05,0.15\}$ and $\mu=0$. Table~\ref{tab:resultsP} reports, for \sdp{1}, \sdp{2}, \sdp{LB} and \texttt{persp} --using formulation \eqref{eq:MIOPersp} with a time limit of 600 seconds--, the time required to solve the problems and the optimality gap proven.} 

\begin{table}[!h]
	\caption{\rev{Computational times and gaps on synthetic instances as a function of $p$. TL= Time Limit. $\dagger$= Unable to solve (either due to very large computational times or memory issues).  Numbers after ``$\pm$" are the sample standard deviation.}}
	\label{tab:resultsP}
	\setlength{\tabcolsep}{2pt}
	\scalebox{0.9}{
		\begin{tabular}{c | c c | c c | c c | c c }
			\hline
			\multirow{2}{*}{\texttt{$p$}}& \multicolumn{2}{c|}{\underline{\sdp{1}}}& \multicolumn{2}{c|}{\underline{\sdp{2}}} &\multicolumn{2}{c|}{\underline{\sdp{LB}}} &\multicolumn{2}{c}{\underline{\mioP}}\\
			&\texttt{time(s)}&\texttt{gap(\%)}
			&\texttt{time(s)}&\texttt{gap(\%)}
			&\texttt{time(s)}&\texttt{gap(\%)}
			&\texttt{time(s)}&\texttt{gap(\%)}\\
			\hline	
			100 & 19$\pm$3&0.5$\pm$0.6&44$\pm$9&0.1$\pm$0.1&5$\pm$1&1.0$\pm$1.1&TL&3.3$\pm$4.2\\
			150 & 153$\pm$19&1.1$\pm$1.5&356$\pm$61&0.2$\pm$0.4&20$\pm$2&1.9$\pm$2.2&TL&6.2$\pm$7.2\\
			200 & 673$\pm$64&2.7$\pm$3.0&1,691$\pm$165&0.6$\pm$0.9&42$\pm$3&4.3$\pm$4.0&TL&12.5$\pm$10.5\\
			250 & $\dagger$&$\dagger$&$\dagger$&$\dagger$&79$\pm$4&7.1$\pm$5.6&TL&17.2$\pm$13.2\\
		    300 & $\dagger$&$\dagger$&$\dagger$&$\dagger$&147$\pm$7&12.2$\pm$7.6&TL&21.7$\pm$14.6\\
		    350 & $\dagger$&$\dagger$&$\dagger$&$\dagger$&248$\pm$14&17.6$\pm$11.0&TL&25.9$\pm$17.0\\
		    400 & $\dagger$&$\dagger$&$\dagger$&$\dagger$&391$\pm$36&24.0$\pm$14.9&TL&29.1$\pm$18.9\\
		    450 & $\dagger$&$\dagger$&$\dagger$&$\dagger$&394$\pm$46&32.2$\pm$18.3&TL&34.3$\pm$21.0\\
		    500 & $\dagger$&$\dagger$&$\dagger$&$\dagger$&462$\pm$43&39.3$\pm$21.9&TL&38.8$\pm$22.8\\
			\hline
	\end{tabular}}
\end{table}

\rev{We observe that \mioP \ is unable to solve the problems within the 10 minute time limit and results in larger gaps than all other approaches, despite using substantially more time in most cases. We also observe that \sdp{r} formulations struggle in instances with $p\geq 200$. Interestingly, \sdp{2} requires consistently 2-4 times more than \sdp{1} regardless of the dimension $p$. A similar factor was observed in Tables~\ref{tab:resultsLambda0} and \ref{tab:resultsLambda005} with real data, suggesting that computational times with \sdp{2} are within the same order-of-magnitude as \sdp{1}. Finally, \sdp{LB} is substantially faster than both \sdp{1} and \sdp{2}. While it results in larger gaps than \sdp{2} as expected, since the high-dimensional constraint \eqref{eq:sdp2_P} is relaxed, it still yields better optimality gaps than \mioP. 
}

\subsection{Inference study on synthetic instances}\label{sec:synthetic}

We now present inference results on synthetic data using the same simulation setup as in \cite{bertsimas2016best,hastie2017extended}, see \cite{hastie2017extended} for an extended description. Specifically, we generate synthetic data as described in \S\ref{sec:data}, and use the evaluation metrics used in \cite{hastie2017extended}, described next.

\subsubsection{Evaluation metrics} Let $\bs{x_0}$ denote the test predictor drawn from $\mathcal{N}_p(\bs{0},\bs{\Sigma})$ and let $y_0$ denote its associated response value drawn from $\mathcal{N}(\bs{x_0^\top\beta_0},\sigma^2)$. Given an estimator $\hat{\bs{\beta}}$ of $\bs{\beta_0}$, the following metrics are reported:
\begin{description}
	\item[Relative risk] $$\text{RR}(\hat{\bs{\beta}})=\frac{\mathbb{E}\left(\bs{x_0^{\top}\hat{\beta}}-\bs{x_0^{\top}\beta_0}\right)^2}{\mathbb{E}\left(\bs{x_0^\top \beta_0}\right)^2}$$
	with a perfect score $0$ and null score of $1$.
	\item[Relative test error] $$\text{RTE}(\hat{\bs{\beta}})=\frac{\mathbb{E}\left(\bs{x_0^{\top}\hat{\beta}}-y_0\right)^2}{\sigma^2}$$
	with a perfect score of $1$ and null score of SNR+1.
	\item[Proportion of variance explained]
	$$1-\frac{\mathbb{E}\left(\bs{x_0^{\top}\hat{\beta}}-y_0\right)^2}{\text{Var}(y_0)}$$
	with perfect score of SNR/(1+SNR) and null score of 0.
	\item[Sparsity] We record the number of nonzeros\footnote{An entry $\hat{\beta}_i$ is deemed to be non-zero if $|\hat{\beta}_i|>10^{-5}$. This is the default integrality precision in commercial MIO solvers.}, $\|\bs{\hat{\beta}}\|_0$, as done in \cite{hastie2017extended}. Additionally, we also report the number of variables correctly identified, given by $\left.\sum_{i=1}^p\mathbbm{1}\{\hat{\beta}_i\neq 0 \text{ and }(\beta_0)_i\neq 0\}\right.$. 
\end{description}

\subsubsection{Procedures} In addition to the training set of size $n$, a validation set of size $n$ is generated with the same parameters, matching the precision of leave-one-out cross-validation. 
We use the following procedures to obtain estimators $\bs{\hat{\beta}}$. 
\begin{description}
	\item[elastic net] \rev{We solve the elastic net procedure using the parametrization
	$$\min_{\bs{\beta}\in \R^p}\|\bs{y}-\bs{X\beta}\|_2^2+\lambda\left(\alpha \|\bs{\beta}\|_1+(1-\alpha)\|\bs{\beta}\|_2^2\right)$$
	where $\alpha,\lambda\geq 0$ are the regularization parameters. We let $\alpha=0.1\ell$ for integer $0\leq\ell\leq 10$, we generated 50 values of $\lambda$ ranging from $\lambda_{max}=\|\bs{X^\top y}\|_\infty$ to $\lambda_{max}/200$ on a log scale, and using the pair $(\lambda,\mu)$ that results in the best prediction error on the validation set. A total of 500 $(\alpha,\lambda)$ pairs are tested.}
	\item[\sdp{2}] The estimator obtained from solving \sdp{2} ($\lambda=\mu=0$) for all values of $k=0,\ldots,7$ and choosing the one that results in the best prediction error on the validation set. 
\end{description}
The \texttt{elastic net} procedure approximately corresponds to the \texttt{lasso} procedure with 100 tuning parameters used in \cite{hastie2017extended}. Similarly, \sdp{2} with cross-validation approximately corresponds to the \texttt{best subset} procedure with $51$ tuning parameters\footnote{\citet{hastie2017extended} use values of $k=0,\ldots,50$. Nonetheless, in our computations with the same tuning parameters, we found that values of $k\geq 8$ are never selected after cross-validation. Thus our procedure with $8$ tuning parameters results in the same results as the one with 51 parameters from a statistical viewpoint, but requires only a fraction of the computational effort.} used in \cite{hastie2017extended}; nonetheless, the estimators from \cite{hastie2017extended} are obtained by running a MIO solver for 3 minutes, while ours are obtained from solving to optimality a strong convex relaxation. 


\subsubsection{Optimality gaps and computation times} Before describing the statistical results, we briefly comment on the relaxation quality and computation time of \sdp{2}. Table~\ref{tab:timesSynt} shows, for instances with $n=500$, $p=100$, and $s=5$, the optimality gap and relaxation quality of \sdp{2} --- each column represents the average over ten instances generated with the same parameters. In all cases, \sdp{2} produces optimal or near-optimal estimators, with optimality gap at most $0.3\%$. In fact, with \sdp{2}, we find that 97\% of the estimators for $\rho=0.00$ and 68\% of the estimators with $\rho=0.35$ are provably \emph{optimal}\footnote{A solution is deemed optimal if \texttt{gap}$<10^{-4}$, which is the default parameter in MIO solvers.} for \eqref{eq:bestSubsetSelection}. For a comparison, \citet{hastie2017extended} report that, in their experiments, the MIO solver (with a time limit of three minutes) is able to prove optimality for only 35\% of the instances generated with similar parameters. Although \citet{hastie2017extended} do not report optimality gaps for the instances where optimality is not proven, we conjecture that such gaps are significantly larger than those reported in Table~\ref{tab:timesSynt} due to weak relaxations with big-$M$ formulations. In summary, for this class of instances, \sdp{2} is able produce optimal or practically optimal estimators of \eqref{eq:bestSubsetSelection} in about 30 seconds. 
\begin{table}[!h]
	\caption{Optimality gap  and computation time (in seconds) of \sdp{2} with $n=500$, $p=100$, $s=k=5$, $\lambda=\mu=0$.}
	\label{tab:timesSynt}
	\scalebox{0.9}{
		\begin{tabular}{c c | c c c c c c c c c c |c}
			\hline \hline
			\multicolumn{2}{c|}{SNR} & 0.05 &0.09 & 0.14& 0.25& 0.42 & 0.71 & 1.22 & 2.07& 3.52& 6.00 &\textbf{avg}\\ 
			\hline
			\multirow{2}{*}{$\rho=0.00$}&\texttt{gap}&0.1 & 0.1 &0.0 & 0.0 &0.0 & 0.0 &0.0 & 0.0 & 0.0 & 0.0 & \textbf{0.0}\\
			&\texttt{time}&45.2 & 38.8 & 38.6 & 29.5 & 29.3 & 28.4 & 27.4 & 26.3 & 26.4 & 25.9 & \textbf{31.6}\\
			&&&&&&&&&&&&\\
			\multirow{2}{*}{$\rho=0.35$}&\texttt{gap}&0.3 & 0.2 &0.3 & 0.1 &0.0 & 0.0 &0.0 & 0.0 & 0.0 & 0.0&\textbf{0.1}\\
			&\texttt{time}&48.0 & 47.6 & 49.4 & 44.1 & 39.3 & 30.7 & 29.0 & 29.1 & 27.3 & 28.0&\textbf{37.3}\\
			\hline \hline
	\end{tabular}}
\end{table}

\subsubsection{Results: accuracy metrics}

Figure \ref{fig:synt000} plots the relative risk, relative test error, proportion of variance explained and sparsity results as a function of the SNR for instances with $n=500$, $p=100$, $s=5$ and $\rho=0$. Figure~\ref{fig:synt035} plots the same results for instances with $\rho=0.35$. The setting with $\rho=0.35$ was also presented in \cite{hastie2017extended}. 
\begin{figure}[p!h]
	\centering
	\subfloat{\includegraphics[width=0.7\textwidth,trim={9.5cm 5.8cm 9.5cm 5.8cm},clip]{./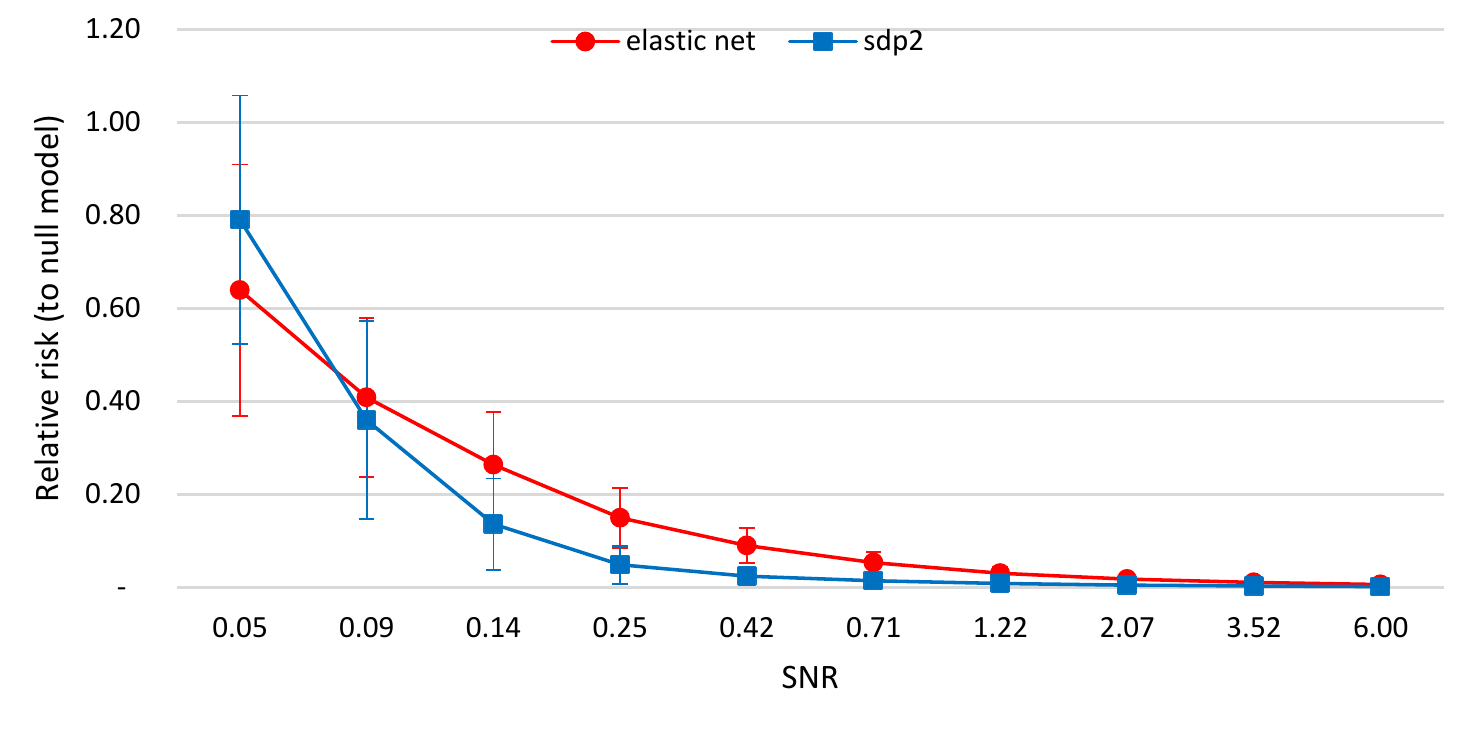}}\hfill
	\subfloat{\includegraphics[width=0.7\textwidth,trim={9.5cm 5.8cm 9.5cm 5.8cm},clip]{./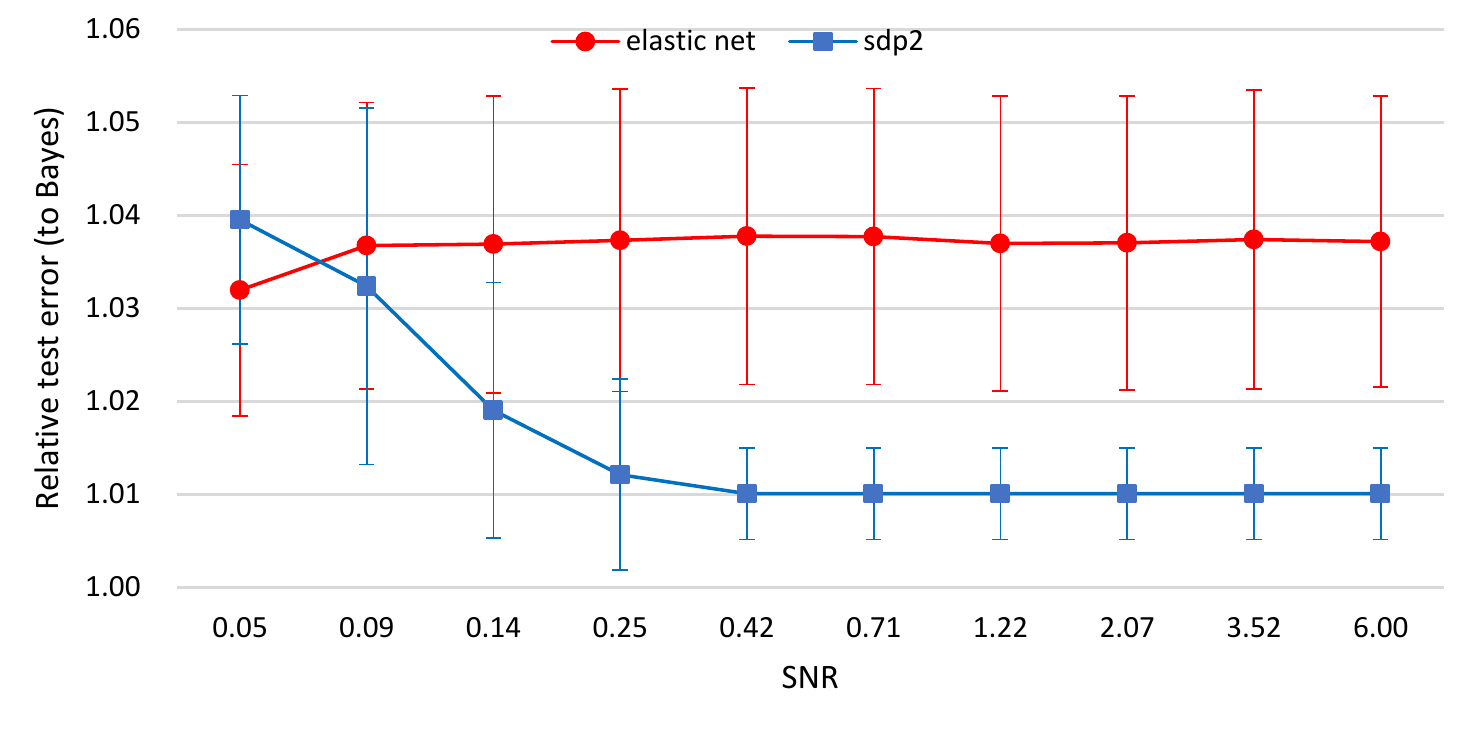}}\hfill
	\subfloat{\includegraphics[width=0.7\textwidth,trim={9.5cm 5.8cm 9.5cm 5.8cm},clip]{./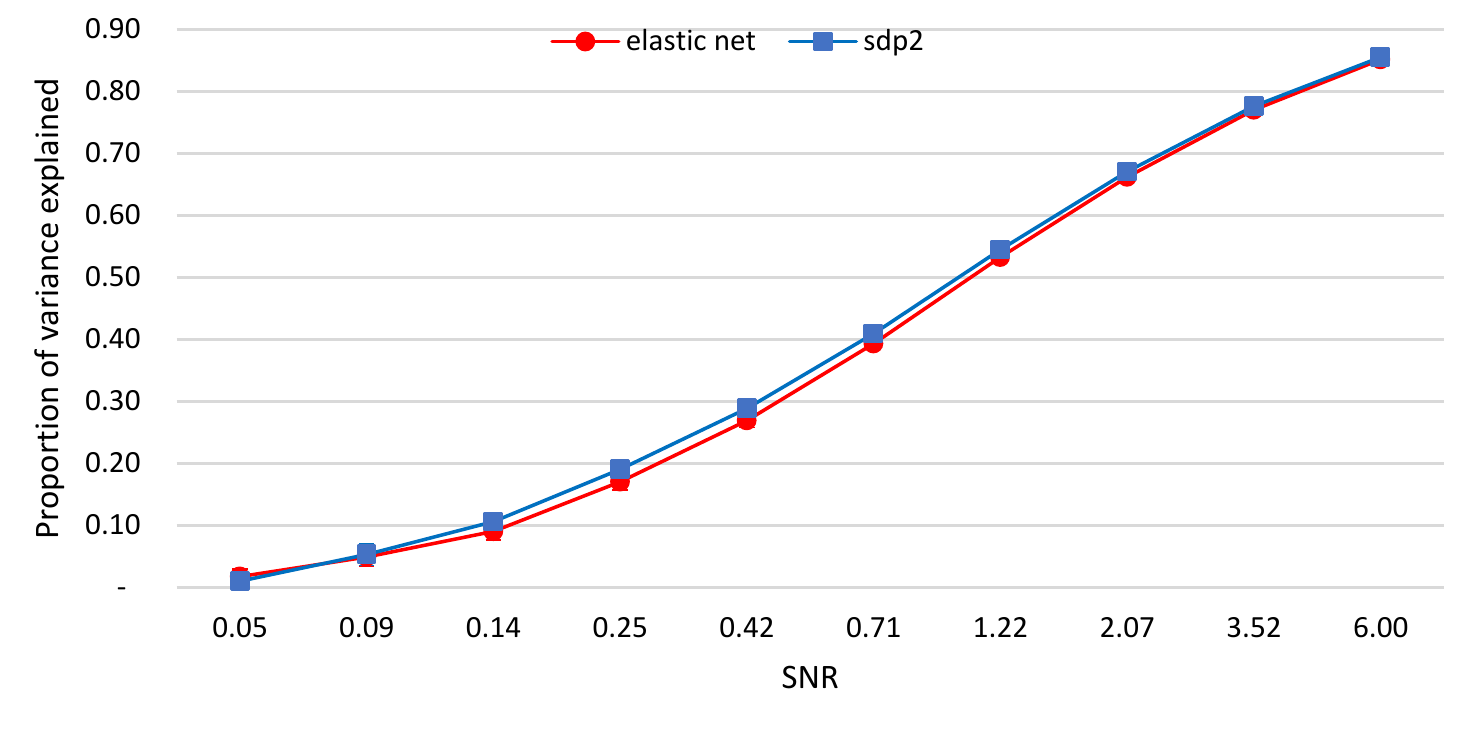}}\hfill
	\subfloat{\includegraphics[width=0.9\textwidth,trim={7cm 5.8cm 7cm 5.8cm},clip]{./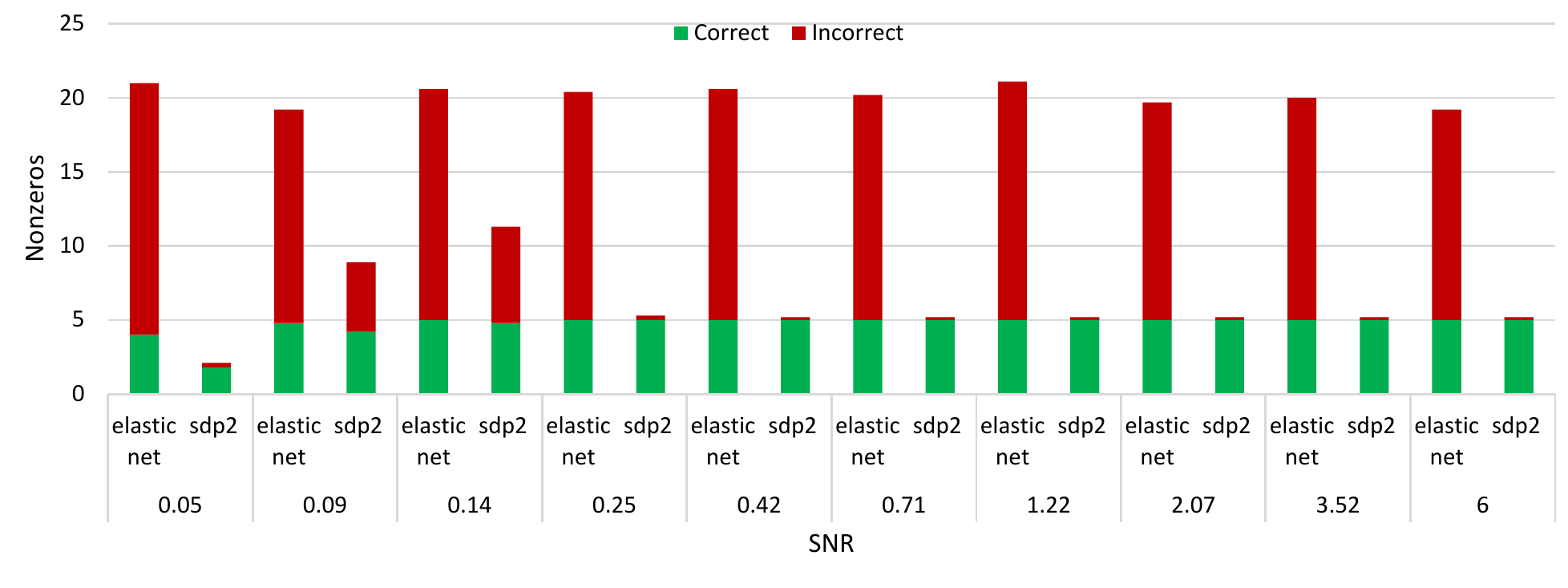}}\hfill
	\caption{\small Relative risk, relative test error, proportion of variance explained and sparsity as a function of SNR, with $n=500$, $p=100$, $s=5$ and $\rho=0.00$.}
	\label{fig:synt000}
\end{figure}

\begin{figure}[p!h]
	\centering
	\subfloat{\includegraphics[width=0.7\textwidth,trim={9.5cm 5.8cm 9.5cm 5.8cm},clip]{./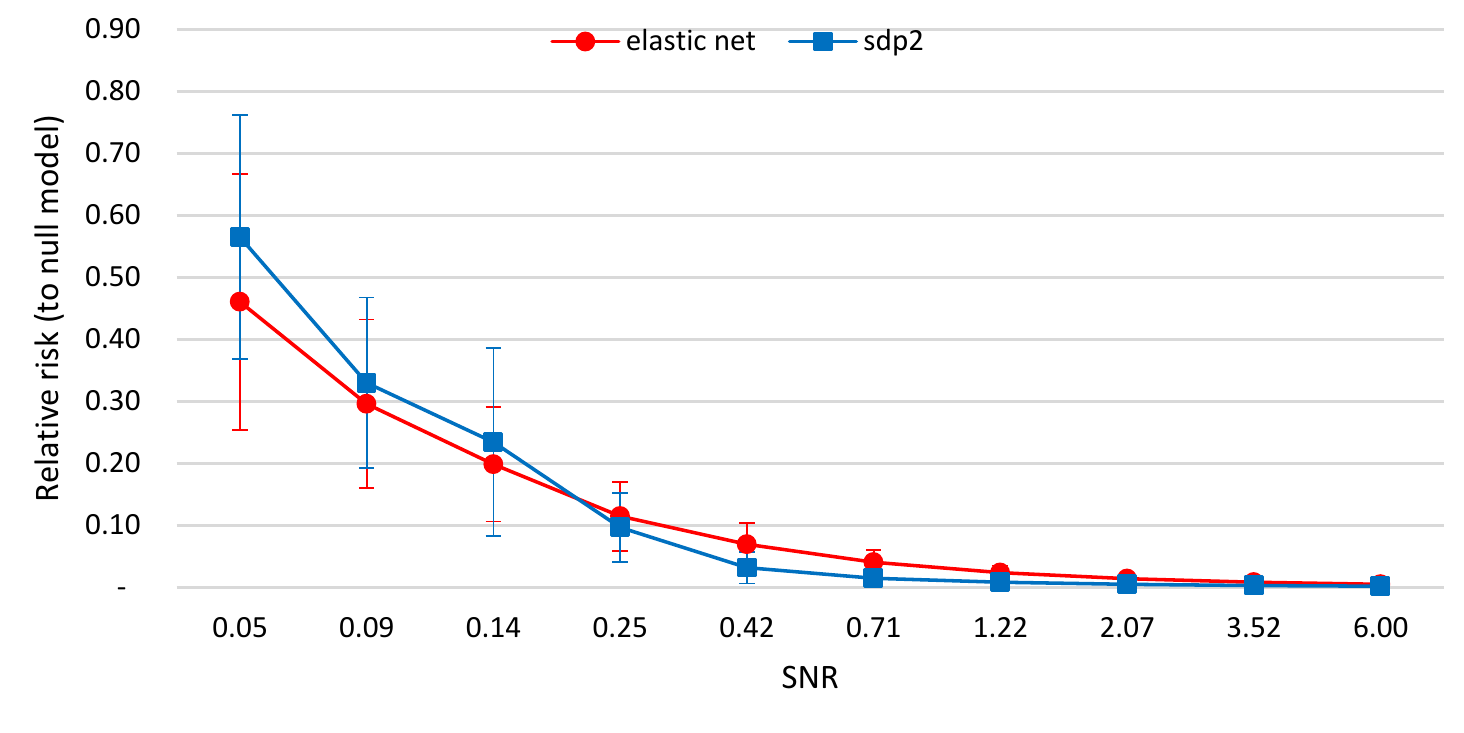}}\hfill
	\subfloat{\includegraphics[width=0.7\textwidth,trim={9.5cm 5.8cm 9.5cm 5.8cm},clip]{./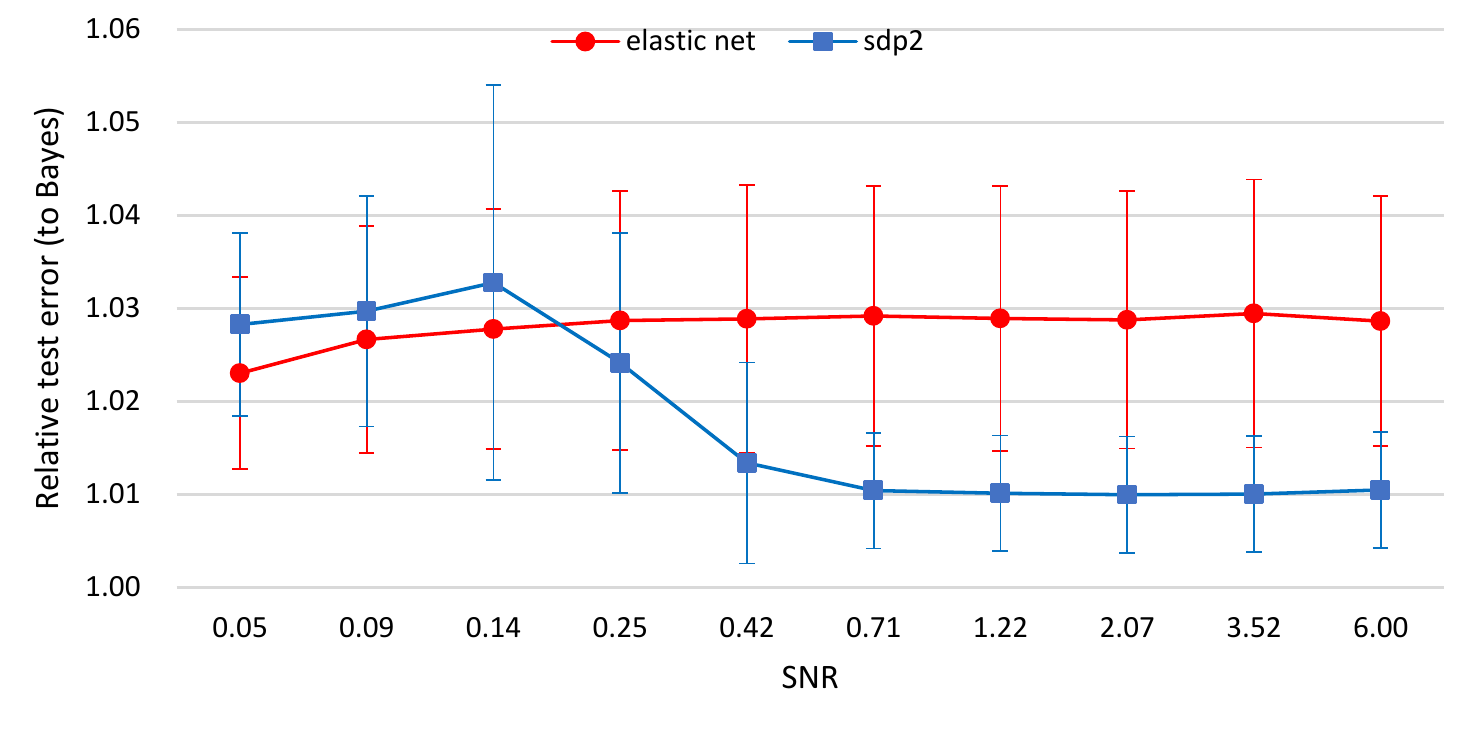}}\hfill
	\subfloat{\includegraphics[width=0.7\textwidth,trim={9.5cm 5.8cm 9.5cm 5.8cm},clip]{./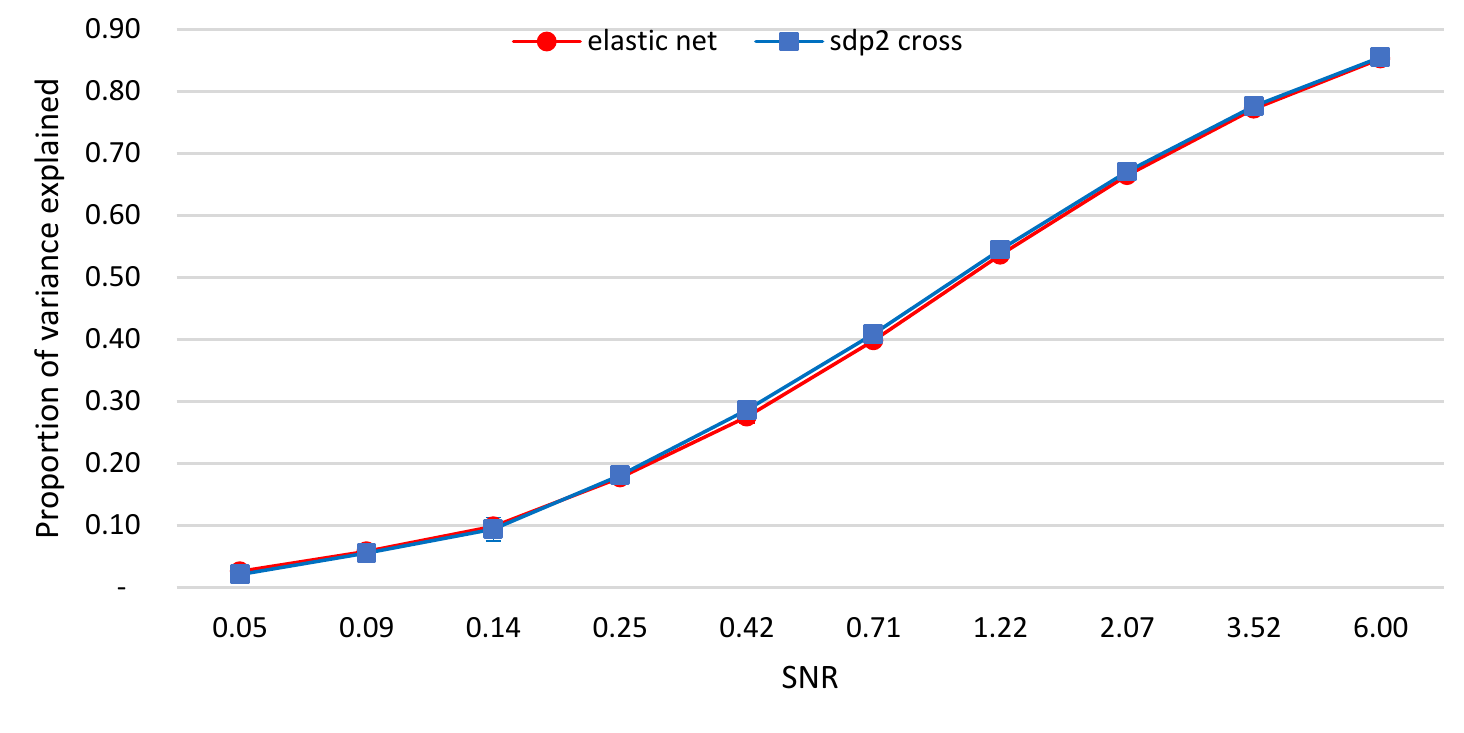}}\hfill
	\subfloat{\includegraphics[width=0.9\textwidth,trim={7cm 5.8cm 7cm 5.8cm},clip]{./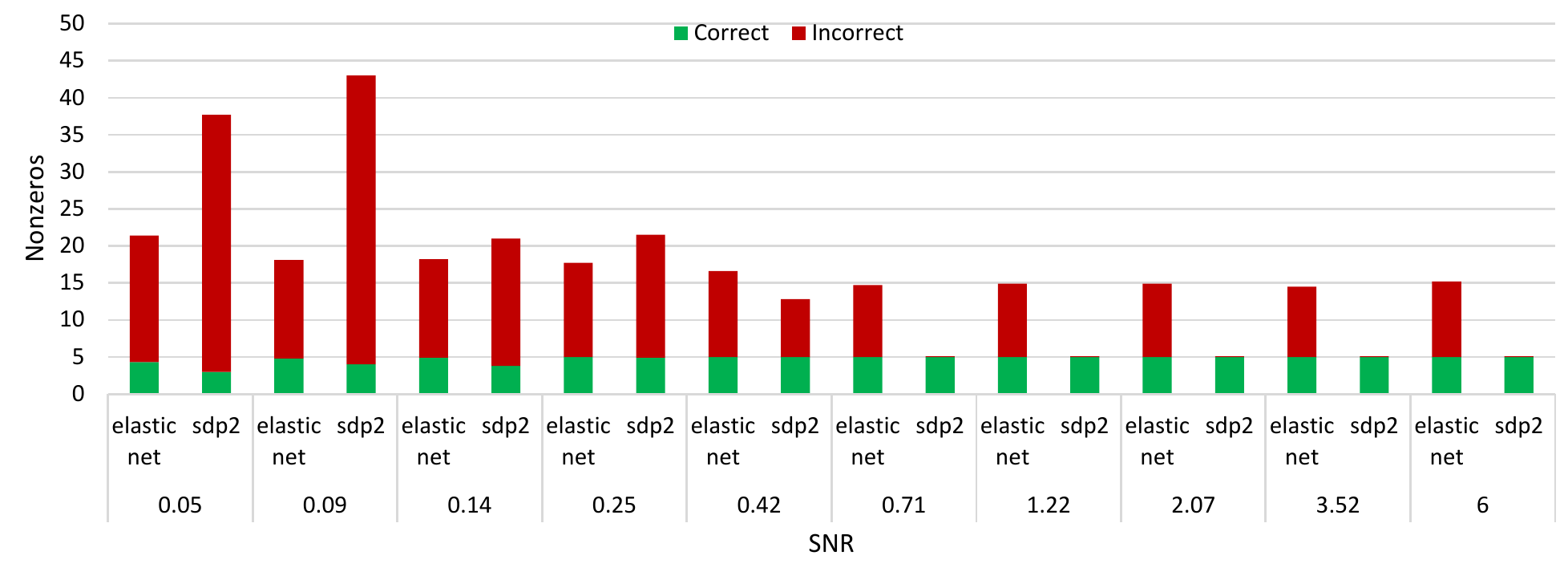}}\hfill
	\caption{\small Relative risk, relative test error, proportion of variance explained and sparsity as a function of SNR, with $n=500$, $p=100$, $s=5$ and $\rho=0.35$.}
	\label{fig:synt035}
\end{figure}

We see that \texttt{elastic net} outperforms \sdp{2} in low SNR settings, i.e., in SNR$=0.05$ for $\rho=0$ and SNR$\leq 0.14$ for $\rho=0.35$, but results in worse predictive performance for all other SNRs. Moreover, \sdp{2} is able to recover the true sparsity pattern of $\bs{\beta_0}$ for sufficiently large SNR, while \texttt{elastic net} is unable to do so. We also see that \sdp{2} performs comparatively better than \texttt{elastic net} in instances with $\rho=0$. Indeed, for large autocorrelations $\rho$, features where $(\beta_0)_i=0$ still have predictive value, thus the dense estimator obtained by \texttt{elastic net} retains a relatively good predictive performance (however, such dense solutions are undesirable from an \emph{interpretability} perspective). In contrast, when $\rho=0$, such features are simply noise and \texttt{elastic net} results in overfitting, while methods that deliver sparse solution such as \sdp{2} perform much better in comparison. We also note that \sdp{2} selects model corresponding to sparsities $k<s$ in low SNRs, while it consistently selects models with $k\approx s$ in high SNRs. We point out that, as suggested in \cite{mazumder2017subset}, the results for low SNR could potentially be improved by fitting models with $\mu>0$.

\section{Conclusions}\label{sec:conclusions}

In this paper we derive strong convex relaxations for sparse regression. The relaxations are based on the ideal formulations for rank-one quadratic terms with indicator variables. The new relaxations are formulated as semidefinite optimization problems in an extended space and are stronger and more general than the state-of-the-art formulations. In our computational experiments, the proposed conic formulations outperform the existing approaches, both in terms of accurately approximating the best subset selection problems and of achieving
desirable estimation properties in statistical inference problems with sparsity.

\section*{Acknowledgments}

A. Atamt\"urk is supported, in part, by Grant No. 1807260 from the National Science Foundation. A. G\'omez is supported, in part, by Grants No. 1818700 and 2006762 from the National Science Foundation.

\bibliographystyle{apalike}
\bibliography{Bibliography}

\end{document}